\titleformat*{\section}{\large\bfseries}
\titleformat*{\subsection}{\normalsize\bfseries}
\titleformat*{\subsubsection}{\normalsize\bfseries}
\newtheorem{theorem}{Theorem}
\newtheorem{definition}{Definition}
\def\renewtheorem#1{%
  \expandafter\let\csname#1\endcsname\relax
  \expandafter\let\csname c@#1\endcsname\relax
  \gdef\renewtheorem@envname{#1}
  \renewtheorem@secpar
}
\def\renewtheorem@secpar{\@ifnextchar[{\renewtheorem@numberedlike}{\renewtheorem@nonumberedlike}}
\def\renewtheorem@numberedlike[#1]#2{\newtheorem{\renewtheorem@envname}[#1]{#2}}
\def\renewtheorem@nonumberedlike#1{  
\def\renewtheorem@caption{#1}
\edef\renewtheorem@nowithin{\noexpand\newtheorem{\renewtheorem@envname}{\renewtheorem@caption}}
\renewtheorem@thirdpar
}
\def\renewtheorem@thirdpar{\@ifnextchar[{\renewtheorem@within}{\renewtheorem@nowithin}}
\def\renewtheorem@within[#1]{\renewtheorem@nowithin[#1]}
\title{Maximum diffusion reinforcement learning}
\author{%
  Thomas A.~Berrueta$^*$ \quad \quad Allison Pinosky \quad \quad Todd D.~Murphey$^*$ \\
  Department of Mechanical Engineering\\
  Northwestern University\\
  Evanston, IL 60208 \\
  \texttt{\{tberrueta, apinosky\}@u.northwestern.edu} \quad \texttt{t-murphey@northwestern.edu}
}
\begin{document}
\begin{bibunit}
\maketitle


\begin{abstract}
Robots and animals both experience the world through their bodies and senses. Their embodiment constrains their experiences, ensuring they unfold continuously in space and time. As a result, the experiences of embodied agents are intrinsically correlated. Correlations create fundamental challenges for machine learning, as most techniques rely on the assumption that data are independent and identically distributed. In reinforcement learning, where data are directly collected from an agent's sequential experiences, violations of this assumption are often unavoidable. Here, we derive a method that overcomes this issue by exploiting the statistical mechanics of ergodic processes, which we term maximum diffusion reinforcement learning. By decorrelating agent experiences, our approach provably enables single-shot learning in continuous deployments over the course of individual task attempts. Moreover, we prove our approach generalizes well-known maximum entropy techniques, and robustly exceeds state-of-the-art performance across popular benchmarks. Our results at the nexus of physics, learning, and control form a foundation for transparent and reliable decision-making in embodied reinforcement learning agents.
\end{abstract}

\footnotetext{$^*$Corresponding authors. Code and data are available in the following \href{https://github.com/MurpheyLab/MaxDiffRL}{\color{blue}{\texttt{GitHub Repository}}}. \\ Supplementary movies are available in the following \href{https://www.youtube.com/playlist?list=PLO5AGPa3klrCTSO-t7HZsVNQinHXFQmn9}{\color{blue}{\texttt{YouTube Playlist}}}.}

\addtocontents{toc}{\protect\setcounter{tocdepth}{0}}

\section{Introduction}
\label{sec:introduction}
Reinforcement learning (RL) is a flexible decision-making framework based on the experiences of artificial agents, whose potential for scalable real-world impact has been well-established through the power of deep learning architectures. From controlling nuclear fusion reactors~\cite{Degrave2022} to besting curling champions~\cite{Won2020},  RL agents have achieved remarkable feats when they can exhaustively explore how their actions impact the state of their environment. Despite their impressive achievements, RL agents---especially deep RL agents---suffer from limitations preventing their widespread deployment in the real world: Their performance varies across initializations, their sample inefficiency demands the use of simulators, and they struggle to learn outside of episodic problem structures~\cite{Irpan2018,Henderson2018,Ibarz2021}. At the heart of these shortcomings lies a violation of the assumption that data are independent and identically distributed (\textit{i.i.d.}), which underlies most of machine learning. While learning typically requires \textit{i.i.d.} data, the experiences of RL agents are unavoidably sequential and correlated across points in time. It is no wonder, then, that many of deep RL's most impactful advances have sought to overcome this roadblock~\cite{Lillicrap2016,Haarnoja2018,Plappert2018}.

Over the past decades, researchers have started to converge onto an understanding that destroying temporal correlations is essential to sample efficiency and agent performance, seeking to address them in two primary domains: During optimization and during sample generation. When we consider optimizing a policy from a database of sequential agent-environment interactions, sampling in random batches is known to reduce temporal correlations. For this reason, experience replay~\cite{Lin1992} and its many variants~\cite{Schaul2015,Andrychowicz2017,Zhang2017_replay} have been successful in producing large performance and sample efficiency gains across tasks and algorithms~\cite{Wang2017_replay,Hessel2018,Fedus2020}. This simple insight---merely sampling experiences out of order---was a key contributing factor to one of deep RL's landmark triumphs: Achieving superhuman performance in Atari video game benchmarks~\cite{Mnih2015}. 

Nonetheless, temporal correlations also arise during data generation, where their impact cannot be alleviated through sampling alone. In turn, temporal correlations must be mitigated during data acquisition as well, which requires techniques to sufficiently randomize the sample generation process. In this regard, the maximum entropy (MaxEnt) RL framework has emerged as a key advance~\cite{Ziebart2008,Ziebart2010,Ziebart2010_thesis,Todorov2009,Toussaint2009,Rawlik2012,Levine2013,Haarnoja2017,Haarnoja2018_rss}. These methods seek to generate randomness during optimization and data acquisition by maximizing the entropy of an agent's policy, which decorrelates their action sequences. In doing so, MaxEnt RL techniques have been able to achieve better exploration and more robust performance~\cite{Eysenbach2022}. However, does maximizing the entropy of an agent's policy actually decorrelate their experiences?

Here, we prove that this is generally not the case. To address this gap we introduce maximum diffusion (MaxDiff) RL, a framework that provably decorrelates agent experiences during sample generation, and realizes statistics indistinguishable from \textit{i.i.d.} sampling by exploiting the statistical mechanics of ergodic processes. Our approach efficiently exceeds state-of-the-art performance by diversifying agent experiences and improving state exploration. By articulating the relationship between an agent's embodiment, diffusion, and learning, we prove that MaxDiff RL agents are capable of single-shot learning regardless of how they are initialized. We additionally prove that MaxDiff RL agents are robust to random seeds and environmental stochasticity, which enables consistent and reliable performance with low-variance across agent deployments and learning tasks. Our work sheds a light on foundational issues holding back the field, highlighting the impact that agent properties and data acquisition can play on downstream learning tasks, and paving the way towards more transparent and reliable decision-making in embodied RL agents.

\section{Results}
\label{sec:results}
\subsection{Temporal correlations hinder performance}
\label{sec:sec1}

\begin{figure*}[t!]
    \centering
    \includegraphics[width=0.95\linewidth]{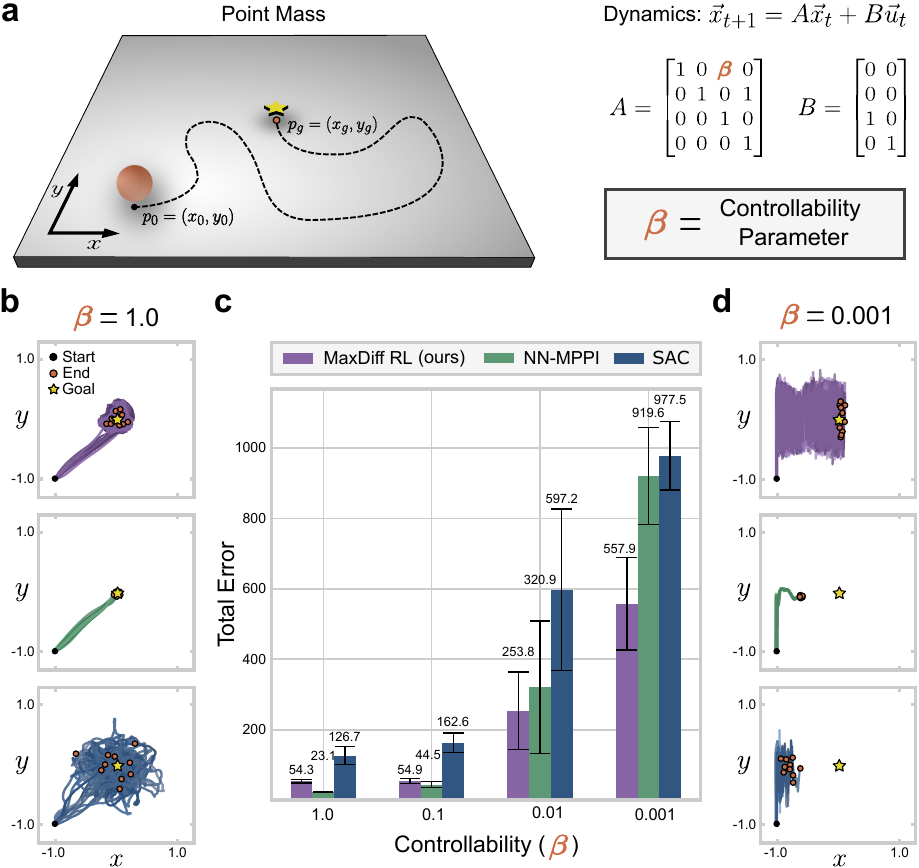}
    \caption{\small
    \textbf{Temporal correlations break the state-of-the-art in RL.} For most systems, controllability properties determine temporal correlations between state transitions (Supplementary Note~\ref{sec:controllability}). \textbf{a}, Planar point mass with dynamics simple enough to explicitly write down and whose policy admits a globally optimal analytical solution. The system's 4-dimensional state space is comprised of its planar positions and velocities. We parametrize its controllability through $\beta \in [0,1]$, where $\beta=0$ produces a formally uncontrollable system. The task is to translate the point mass from $p_0$ to $p_g$ within a fixed number of steps at different values of $\beta$, and the reward is specified by the negative squared Euclidean distance between the agent's state and the goal. We compare state-of-the-art model-based and model-free algorithms, NN-MPPI and SAC respectively, to our proposed maximum diffusion (MaxDiff) RL framework (see Supplementary Note~\ref{sec:implementation} for implementation details). \textbf{b,d}, Representative snapshots of MaxDiff RL, NN-MPPI, and SAC agents (top to bottom) in well-conditioned ($\beta=1$) and poorly-conditioned ($\beta=0.001$) controllability settings. \textbf{c}, Even in this simple system, poor controllability can break the performance of RL agents. As $\beta \rightarrow 0$ the system's ability to move in the $x$-direction diminishes, hindering the performance of NN-MPPI and SAC, while MaxDiff RL remains task-capable. For all bar charts, data are presented as mean values above each error bar, where each error bar represents the standard deviation from the mean with $n=1000$ (100 evaluations over 10 seeds for each condition). All differences between MaxDiff RL and comparisons within this figure are statistically significant with $P<0.001$ using an unpaired two-sided Welch's t-test (see Methods and Supplementary Table~\ref{table:stats}).
    }
    \label{fig:fig1}
\end{figure*}

Whether temporal correlations and their impact can be avoided depends on the properties of the underlying agent-environment dynamics. Completely destroying correlations between agent experiences requires the ability to discontinuously jump from state to state without continuity of experience. For some RL agents, this poses no issue. In settings where agents are disembodied, there may be nothing preventing effective exploration through jumps between uncorrelated states. This is one of the reasons why deep RL recommender systems have been successful in a wide range of applications, such as YouTube video suggestions~\cite{Chen2019_youtube,Afsar2022,Chen2023_recommender}. However, continuity of experience is essential to many RL problem domains. For instance, the smoothness of Newton's laws makes correlations unavoidable in the motions of most physical systems, even in simulation. This suggests that for systems like robots or self-driving cars overcoming the impact of temporal correlations on performance presents a major challenge.

To illustrate the impact this can have on learning performance, we devised a toy task to evaluate deep RL algorithms as a function of correlations intrinsic to the agent's state transitions. Our toy task and agent dynamics are shown in Fig.~\ref{fig:fig1}(a), corresponding to a double integrator system with parametrized momentum anisotropy. The task requires learning reward, dynamics, and policy representations from scratch in order to move a planar point mass from a fixed initial position to a goal location. The system's true linear dynamics are simple enough to explicitly write down, which allows us to rigorously study temporal correlations across state transitions by analyzing its controllability. Controllability is a formal property of control systems that describes their ability to reach arbitrary states in an environment~\cite{Sontag2013,Hespanha2018}. In linearizable systems, state transitions become degenerate and irreversibly correlated when they are uncontrollable. However, if the agent is controllable the impact of correlations can be overcome, at least in principle. While the relationship between controllability and temporal correlations has been studied for decades~\cite{Mitra1969}, it is only recently that researchers have begun to study its impact on learning processes~\cite{Dean2020,Tsiamis2021,Tsiamis2022}. 

Figure~\ref{fig:fig1} parametrically explores the relationship between our toy system's controllability properties and the learning performance of state-of-the-art deep RL algorithms. The point mass dynamics are parametrized by $\beta \in [0,1]$, which determines the relative difficulty of translating along the $x$-axis (Fig.~\ref{fig:fig1}(a)). When $\beta = 0$ the system is uncontrollable and can only translate along the $y$-axis, which illustrates the sense in which state transitions become irreversibly correlated. While the system is formally controllable for all non-zero $\beta$, as $\beta\rightarrow 0$ fewer lateral transitions become available for the same range of actions, introducing temporal correlations along the system's $x$-coordinate (see Supplementary Figure~\ref{fig:supp_isotropic_dists}). We evaluated the performance of state-of-the-art model-based and model-free deep RL algorithms on our task---model-predictive path integral control (NN-MPPI)~\cite{Theodorou2017} and soft actor-critic (SAC)~\cite{Haarnoja2018}, respectively---at varying values of $\beta$, from 1 to 0.001. As expected, at $\beta=1$ both NN-MPPI and SAC are able to accomplish the toy task (Fig.~\ref{fig:fig1}(b)). However, as $\beta \rightarrow 0$ the performance of NN-MPPI and SAC degrades parametrically (Fig.~\ref{fig:fig1}(c)), up until the point that neither algorithm can solve the task, as shown in Fig.~\ref{fig:fig1}(d). Hence, temporal correlations can completely hinder the learning performance of the state-of-the-art in deep RL even in toy problem settings such as this one, where a globally optimal policy can be analytically computed in closed form.

Failure to mitigate temporal correlations between state transitions can prevent effective exploration, severely impacting the performance of deep RL agents. As Fig.~\ref{fig:fig1}(d) illustrates, neither NN-MPPI nor SAC agents are able to sufficiently explore in the $x$-dimension of their state space as a result of their decreasing degree of controllability (see Supplementary Note~\ref{sec:controllability}). This is the case despite the fact that NN-MPPI and SAC are both MaxEnt RL algorithms~\cite{Haarnoja2018,So2022}, designed specifically to achieve improved exploration outcomes by decorrelating their agent's action sequences. In contrast, our proposed approach---MaxDiff RL---is able to consistently succeed at the task and is guaranteed to realize effective exploration by focusing instead on decorrelating agent experiences, i.e., their state sequences (see purple in Fig.~\ref{fig:fig1}(b-d)), as we discuss in the following section.

\subsection{Maximum diffusion exploration and learning}
\label{sec:sec2}
Most RL methods presuppose that taking random actions produces effective exploration~\cite{Thrun1992,Amin2021}, and sophisticated techniques like MaxEnt RL are no different. However, as we previously illustrated, whether this is actually possible depends on the agent's controllability properties and the temporal correlations these spontaneously induce in their experiences (see Fig.~\ref{fig:fig2}(c) and Supplementary Note~\ref{sec:controllability}). To overcome these limitations, we propose decorrelating agent experiences as opposed to their action sequences, which forms the starting point to our derivation of the MaxDiff RL framework.

Prior to synthesizing policies or assessing their impact on learning outcomes, we require a formalization of agent experiences. Without considering policies, we see the agent-environment state transition dynamics as an autonomous stochastic process, whose sample paths $x(t)$ take value in a state space $\mathcal{X}\subset\mathbb{R}^d$ at each point in time within an interval $\mathcal{T}=[t_0,t]$. Then, we see agent experiences as collections of random variables parametrized by time, whose realizations $x(t)$ are the sample paths of the underlying agent-environment process. When $\mathcal{T}=\{0,\cdots,T\}$ is discrete, we use $x_{0:T}$ instead of $x(t)$. In this context, the probability density function over state trajectories, $P[x(t)]$ (or $P[x_{0:T}]$), completely characterizes an agent's experiences and their properties (see Supplementary Note~\ref{sec:exploration_sampling}). We may now begin our derivation by asking: What is the most decorrelated that agent experiences can be?

To answer this question, we draw from the statistical physics literature on maximum caliber~\cite{Jaynes1957,Dill2018,Chvykov2021}, which is a generalization of the variational principle of maximum entropy~\cite{Kapur1989}. The goal of a maximum caliber variational optimization is to find the trajectory distribution $P_{max}[x(t)]$, which optimizes an entropy functional $S[P[x(t)]]$. The optimal distribution would then describe the paths of an agent with the least-correlated experiences, but its specific form and properties depend on how the variational optimization is constrained. Without constraints, agents could sample states discontinuously and uniformly in a way that is equivalent to \textit{i.i.d.} sampling but is not consistent with the continuous experiences of embodied agents (Fig.~\ref{fig:fig2}(a,b)). Hence, we tailor our assumptions to agents with continuous experiences. Then, to ensure our optimization produces a distribution over continuous paths, we constrain the volume of states accessible within any finite time interval by bounding their fluctuations (see Supplementary Note~\ref{sec:exploration_opt_problem}).

\begin{figure*}[t!]
    \centering
    \includegraphics[width=1.0\linewidth]{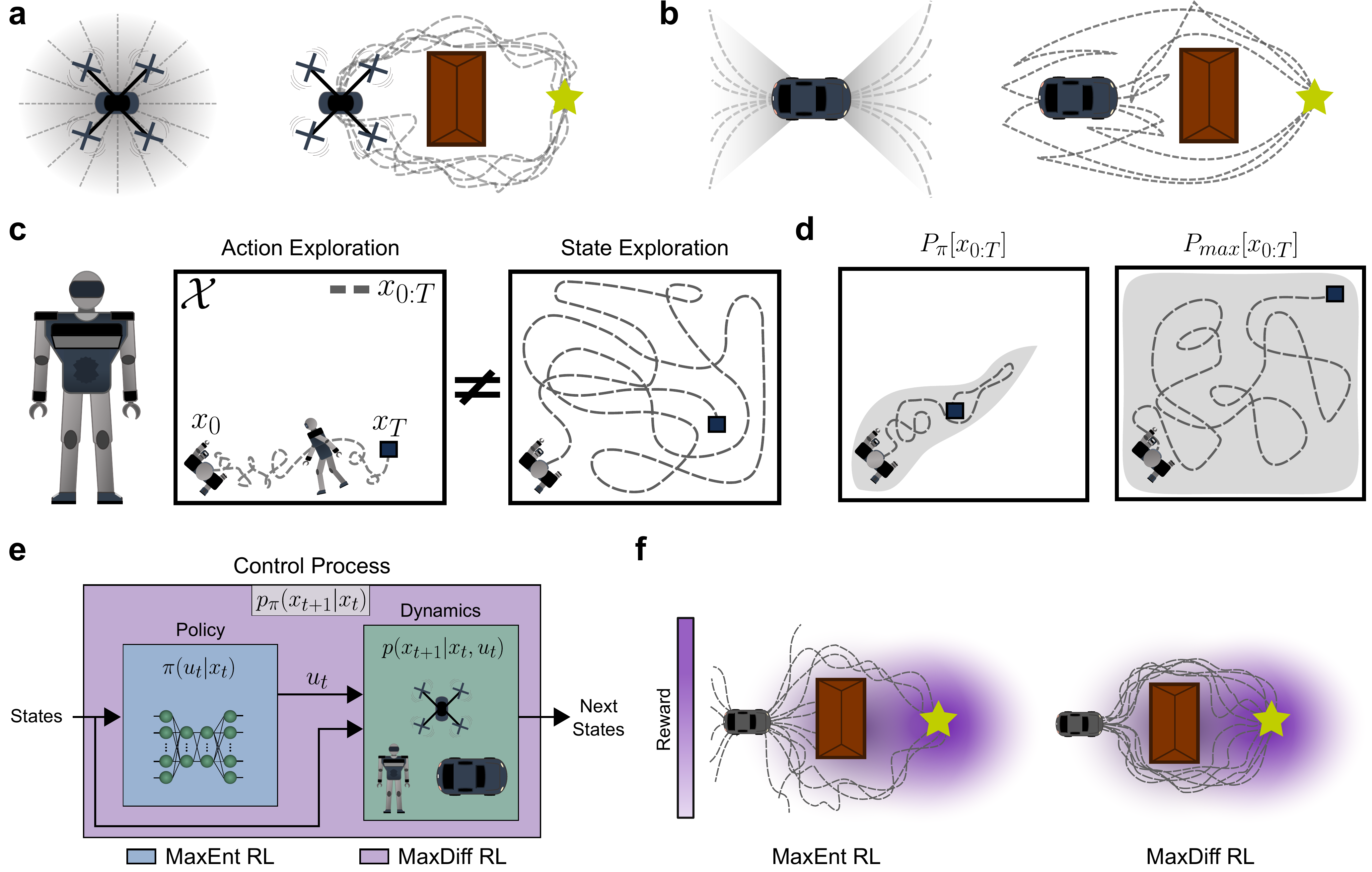}
    \caption{\small
    \textbf{Maximum diffusion RL mitigates temporal correlations to achieve effective exploration}. \textbf{a,b}, Systems with different planar controllability properties. \textbf{c}, Whether action randomization leads to effective state exploration depends on the properties of the underlying state-transition dynamics (see Supplementary Note~\ref{sec:controllability}), as in our illustration of a complex bipedal robot falling over and failing to explore. \textbf{d}, While any given policy induces a path distribution (left), MaxDiff RL produces policies that maximize the path distribution's entropy (right). The projected support of the robot's path distribution is illustrated by the shaded gray region. We prove that maximizing the entropy of an agent's state transitions results in effective exploration (see Supplementary Notes~\ref{sec:exploration_diffusion} and~\ref{sec:maxdiff_exploration}). \textbf{e}, Our approach generalizes the MaxEnt RL paradigm by considering agent trajectories. We prove that maximizing a policy's entropy does not generally maximize the entropy of an agent's state transitions (see Supplementary Note~\ref{sec:maxdiff_RL}). \textbf{f}, This approach leads to distinct learning outcomes because agents reason about the impact of their actions on state transitions, rather than their actions alone.
    }
    \label{fig:fig2}
\end{figure*}

Surprisingly, this constrained variational optimization admits an analytical solution for the maximum entropy path distribution. The derived optimal path distribution is
\begin{equation}\label{eq:undirectedpathdist}
    P_{max}[x(t)] = \frac{1}{Z} \exp\Big[-\frac{1}{2}\int_{t_0}^{t} \dot{x}(\tau)^T\mathbf{C}^{-1}[x(\tau)]\dot{x}(\tau) d\tau\Big],
\end{equation}
where $Z$ is a normalization constant. At every point in space $x^*\in\mathcal{X}$, the matrix $\mathbf{C}[x^*]$ measures temporal correlations locally over an interval of duration $\Delta t$, such that
\begin{equation}\label{eq:temporalcorrelations}
    \mathbf{C}[x^*] = \int_{t_i}^{t_i+\Delta t} K_{XX}(t_i,\tau)d\tau,
\end{equation}
where $K_{XX}(t_1,t_2)$ is the autocovariance of $x(t)$ at pairs of samples in time evaluated over a chosen interval, $[t_i, t_i+\Delta t]\subset \mathcal{T}$, with a given $x(t_i)=x^*$ (see Supplementary Note~\ref{sec:exploration_sampling}). This distribution describes the trajectories of an optimal agent with minimally correlated paths, subject to the constraints imposed by continuity of experience. Moreover, Eq.~\ref{eq:undirectedpathdist} is equivalent to the path distribution of an anisotropic, spatially-inhomogeneous diffusion process. Thus, minimizing correlations among agent trajectories leads to diffusion-like exploration, whose properties can actually be analyzed using statistical mechanics (see Supplementary Figure~\ref{fig:supp_diffusion}). This also means that the sample paths of the optimal agent are Markovian and ergodic (see Supplementary Notes~\ref{sec:exploration_diffusion} and~\ref{sec:directed_exploration} for associated theorems, corollaries, and their proofs). Unlike alternative RL frameworks, our approach does not assume the Markov property, but rather enforces it as a property intrinsic to the optimal agent's path distribution.

Satisfying ergodicity has profound implications for the properties of resulting agents. Ergodicity is a formal property of dynamical systems that guarantees that the statistics of individual trajectories are asymptotically equivalent to those of a large ensemble of trajectories~\cite{Moore2015,Taylor2021}. Put in RL terms, while the sequential nature of RL agent experiences can make \textit{i.i.d.} sampling technically impossible, the global statistics of an ergodic RL agent are indistinguishable from those of an \textit{i.i.d.} sampling process. In this sense, ergodic Markov sampling is the best possible alternative to \textit{i.i.d.} sampling in sequential decision-making processes. Beyond resolving the issue of generating \textit{i.i.d.} samples in RL, ergodicity forms the basis of many of MaxDiff RL's theoretical guarantees, as we show in the following sections.

When an agent's trajectories satisfy Eq.~\ref{eq:undirectedpathdist}, we describe the agent as maximally diffusive. However, agents do not realize maximally diffusive trajectories spontaneously. Doing so requires finding a policy capable of satisfying maximally diffusive path statistics, which forms the core of what we term MaxDiff RL. While any given policy induces a path distribution, finding policies that realize maximally diffusive trajectories requires optimization and learning (Fig.~\ref{fig:fig2}(d)). To this end, we define:
\begin{equation}
    \label{eq:path_policydists}
    \begin{split}
    P_{\pi}[x_{0:T},u_{0:T}] &= \prod_{t=0}^{T-1} p(x_{t+1}|x_t,u_t)\pi(u_t|x_t) \\
    P_{max}^{r}[x_{0:T},u_{0:T}] &= \prod_{t=0}^{T-1} p_{max}(x_{t+1}|x_t) e^{r(x_t,u_t)},
    \end{split}
\end{equation}
where we discretized the distribution in Eq.~\ref{eq:undirectedpathdist} as $p_{max}(x_{t+1}|x_t)$, and analytically rederived the optimal path distribution under the influence of a reward landscape, $r(x_t,u_t)$ (see Supplementary Note~\ref{sec:directed_exploration}). Given the distributions in Eq.~\ref{eq:path_policydists}, the goal of MaxDiff RL can be framed as minimizing the Kullback-Leibler (KL) divergence between them---that is, between the agent's current path distribution and the maximally diffusive one. 

To draw connections between our framework and the broader MaxEnt RL literature, we recast the KL-divergence formulation of MaxDiff RL as an equivalent stochastic optimal control (SOC) problem. In SOC, the goal is to find a policy that maximizes the expected cumulative rewards of an agent in an environment. In this way, we can express the MaxDiff RL objective as
\begin{equation}\label{eq:soc_maxdiff}
    \pi_{\text{MaxDiff}}^* = \underset{\pi}{\text{argmax}}  \ E_{(x_{0:T},u_{0:T})\sim P_{\pi}} \Bigg[\sum_{t=0}^{T-1} \gamma^t\hat{r}(x_t,u_t) \Bigg],
\end{equation}
with $\gamma\in[0,1)$ and modified rewards given by   
\begin{equation}\label{eq:soc_maxdiff_running_cost}
    \hat{r}(x_t,u_t) = r(x_t,u_t)-\alpha \log \frac{p(x_{t+1}|x_t,u_t)\pi(u_t|x_t)}{p_{max}(x_{t+1}|x_t)},
\end{equation}
where $\alpha>0$ is a temperature-like parameter we introduce to balance diffusive exploration and reward exploitation, as we discuss in the following section. With these results in hand, we may now state one of our main theorems.
\begin{theorem}
    \label{thm:thm1}
    (MaxDiff RL generalizes MaxEnt RL) Let the state transition dynamics due to a policy $\pi$ be $p_{\pi}(x_{t+1}|x_t)=E_{u_t\sim\pi}[ p(x_{t+1}|x_t,u_t)]$. If the state transition dynamics are assumed to be decorrelated, then the optimum of Eq.~\ref{eq:soc_maxdiff} is reached when $D_{KL}(p_{\pi}||p_{max}) = 0$ and the MaxDiff RL objective reduces to the MaxEnt RL objective.
\end{theorem}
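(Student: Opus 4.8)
The plan is to show that the MaxDiff running reward in Eq.~\ref{eq:soc_maxdiff_running_cost} decomposes additively into a MaxEnt contribution and a ``diffusion'' contribution, and that under the decorrelation hypothesis the expected diffusion contribution collapses into the single divergence $D_{KL}(p_\pi\|p_{max})$, which is extremized exactly when $p_\pi=p_{max}$. First I would split the logarithm in Eq.~\ref{eq:soc_maxdiff_running_cost} as
\begin{equation*}
    \hat r(x_t,u_t) = \big[\,r(x_t,u_t) - \alpha\log\pi(u_t|x_t)\,\big] - \alpha\log\frac{p(x_{t+1}|x_t,u_t)}{p_{max}(x_{t+1}|x_t)},
\end{equation*}
so that the bracketed term is exactly the MaxEnt running reward (its expectation under $\pi$ contributes the policy entropy), while the remaining term is the excess log-likelihood of the realized transition relative to the maximally diffusive one. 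Substituting into Eq.~\ref{eq:soc_maxdiff} and using linearity of expectation, the MaxDiff objective becomes the MaxEnt objective minus the residual $\alpha\sum_{t}\gamma^t\,E_{P_\pi}\big[\log\tfrac{p(x_{t+1}|x_t,u_t)}{p_{max}(x_{t+1}|x_t)}\big]$.

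Next I would evaluate the residual by conditioning on the current state. For fixed $x_t$, taking the expectation over $u_t\sim\pi$ and then $x_{t+1}\sim p(\cdot|x_t,u_t)$ and applying the mixture (compensation) identity for KL divergence gives
\begin{equation*}
    E_{u_t,x_{t+1}}\Big[\log\tfrac{p(x_{t+1}|x_t,u_t)}{p_{max}(x_{t+1}|x_t)}\Big] = D_{KL}(p_\pi\|p_{max}) + E_{u_t}\big[D_{KL}(p(\cdot|x_t,u_t)\|p_\pi)\big],
\end{equation*}
where $p_\pi(x_{t+1}|x_t)=E_{u_t\sim\pi}[p(x_{t+1}|x_t,u_t)]$ exactly as defined in the statement. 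The second term is the conditional mutual information between the action and the next state, i.e.\ precisely the correlation the action induces into the transition. The decorrelation hypothesis is what I would use to annihilate it: when the state transition dynamics are decorrelated, $p(x_{t+1}|x_t,u_t)$ may be replaced by its policy-marginal $p_\pi$ inside the expectation, leaving $E_{x_t}[D_{KL}(p_\pi\|p_{max})]$ as the entire residual.

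Finally I would close with Gibbs' inequality: $D_{KL}(p_\pi\|p_{max})\ge 0$ with equality if and only if $p_\pi=p_{max}$ almost everywhere. Since $\alpha>0$ and the residual enters the objective with a minus sign, the MaxDiff objective is bounded above by the MaxEnt objective, and the bound is attained exactly at $D_{KL}(p_\pi\|p_{max})=0$. Because the decorrelated $p_{max}$ is itself a realizable transition, a maximally diffusive policy achieving $p_\pi=p_{max}$ exists and optimizes the diffusion penalty; at that optimum the penalty vanishes identically and Eq.~\ref{eq:soc_maxdiff} coincides term-by-term with the MaxEnt objective, establishing the claimed reduction.

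The step I expect to be the main obstacle is the one invoking decorrelation: rigorously arguing that the ``decorrelated state transition dynamics'' assumption forces the action--next-state mutual information $E_{u_t}[D_{KL}(p(\cdot|x_t,u_t)\|p_\pi)]$ to zero (equivalently, that the action-conditioned transition can be replaced by its marginal $p_\pi$ in the expectation). This is where the physical notion of decorrelation encoded in the autocovariance $\mathbf{C}[x^*]$ of Eq.~\ref{eq:temporalcorrelations} must be translated into the precise probabilistic identity that drives the collapse to a single KL term; the remaining steps are routine once that reduction is secured.
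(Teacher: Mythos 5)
Your proposal is correct and reaches the paper's conclusion through the same overall skeleton---splitting the modified reward of Eq.~\ref{eq:soc_maxdiff_running_cost} into the MaxEnt running reward plus a transition log-ratio, identifying $p_{\pi}(x_{t+1}|x_t)=E_{u_t\sim\pi}[p(x_{t+1}|x_t,u_t)]$, and closing with Gibbs' inequality---but your middle step is genuinely different and in fact sharper than the paper's. The paper's proof (Theorem~\ref{thm:maxdiff_generalization} in the supplement) pushes the expectation over $u_t$ inside the logarithm via Jensen's inequality and then recovers equality by arguing that decorrelation saturates the bound; you instead invoke the exact chain-rule (compensation) identity $E_{u_t,x_{t+1}}\big[\log\tfrac{p(x_{t+1}|x_t,u_t)}{p_{max}(x_{t+1}|x_t)}\big]=D_{KL}(p_{\pi}\|p_{max})+E_{u_t}\big[D_{KL}(p(\cdot|x_t,u_t)\|p_{\pi})\big]$, which is an identity rather than a bound and makes explicit that what the decorrelation hypothesis must annihilate is precisely the conditional mutual information $I(u_t;x_{t+1}\,|\,x_t)$. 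This buys real rigor: the paper's Jensen step implicitly treats $x_{t+1}$ as sampled independently of $u_t$, whereas under the correct joint sampling the exact residual is $\alpha(\mathrm{MI}+\mathrm{KL})\geq\alpha\,\mathrm{KL}$, so the paper's inequality direction is only reconciled with the exact decomposition when the MI term vanishes---exactly the saturation condition it ends up invoking; your identity sidesteps that looseness entirely and clarifies why saturation and decorrelation coincide. The one piece of work the paper does that you defer is grounding the decorrelation hypothesis itself: it gives the assumption concrete content in two regimes (agents with continuous paths whose least-correlated trajectories are maximally diffusive, per Supplementary Note~\ref{sec:exploration_diffusion}, and fully controllable agents in the sense of Definition~\ref{def:full_controllability}, under which a policy realizing $p_{\pi}=p_{max}$ exists), rather than deriving the probabilistic collapse from the autocovariance $\mathbf{C}[x^*]$; so the obstacle you flag at the end is resolved in the paper by definition, not derivation, and adopting that same convention completes your argument.
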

\noindent Proving this result is simple and only relies on the sense in which state transitions are decorrelated, which we discuss in detail in Supplementary Note~\ref{sec:maxdiff_RL}. 

Completely destroying temporal correlations generally requires discontinuous jumps between states, which can only be achieved by fully controllable agents~\cite{Rawlik2012}. When an agent is fully controllable, there always exists a policy that enables it to take any arbitrary path through state space. If this condition is met, then the optimum of Eq.~\ref{eq:soc_maxdiff} is attained when $p_{\pi^{max}}(x_{t+1}|x_t)=p_{max}(x_{t+1}|x_t)$ at each point in time, where actions drawn from an optimized policy $\pi^{max}$. In turn, this simplifies Eq.~\ref{eq:soc_maxdiff_running_cost} and recovers the MaxEnt RL objective~\cite{Haarnoja2018}, as shown in Supplementary Note~\ref{sec:maxdiff_RL}. This proves not only that MaxDiff RL is a generalization of the MaxEnt RL framework to agents with correlations in their state transitions, but also makes clear that maximizing policy entropy cannot decorrelate agent experiences in general. In contrast, MaxDiff RL actively enforces path decorrelation at all points in time. We can think of this intuitively by noting that MaxDiff RL simultaneously accounts for the effect of the policy and of the temporal correlations induced by agent-environment dynamics in its optimization (Fig.~\ref{fig:fig2}(e)). As such, MaxDiff RL typically produces distinct learning outcomes from MaxEnt RL (Fig.~\ref{fig:fig2}(f)). Our result also implies that all theoretical robustness guarantees of MaxEnt RL (e.g.,~\cite{Eysenbach2022}) should be interpreted as guarantees of MaxDiff RL when state transitions are decorrelated. Moreover, we suggest that many of the gaps between MaxEnt RL's theoretical results and their practical performance may be explained by the impact of temporal correlations, as we saw in Fig.~\ref{fig:fig1}.

While these results seem to suggest that model-free implementations of MaxDiff RL are not feasible, we note that local estimates of the agent's path entropy can be learned from observations. This effectively reinterprets temporal correlations as a state-dependent property of the environment (see Supplementary Note~\ref{sec:exploration_controllers}). Similar entropy estimates have been used in model-free RL~\cite{Seo2021} and more broadly in the autoencoder literature~\cite{Prabhakar2022}. For the results in this manuscript, we derived a model-agnostic objective using an analytical expression for the local path entropy,
\begin{equation}
    \label{eq:maxdiff_ratt}
    \underset{\pi}{\text{argmax}}  \  E_{(x_{0:T},u_{0:T})\sim P_{\pi}}\Bigg[ \sum_{t=0}^{T-1} r(x_t,u_t)+\frac{\alpha}{2} \log \det \text{\textbf{C}}[x_t]\Bigg],
\end{equation}
whose optimum realizes the same optimum as Eq.~\ref{eq:soc_maxdiff}, and where we omitted $\gamma$. There are many ways to express the MaxDiff RL objective, each of which may have implementation-specific advantages (see Fig.~\ref{fig:fig3}(a) and Supplementary Note~\ref{sec:exploration_entropymax}). In this sense, MaxDiff RL is not a specific algorithm implementation but rather a general problem statement and solution framework, similar to MaxEnt RL. In this work, our MaxDiff RL implementation is exactly identical to NN-MPPI except for the path entropy term shown above. However, this simple modification can have a drastic effect on agent outcomes.

\begin{figure*}[t!]
    \centering
    \includegraphics[width=1.0\linewidth]{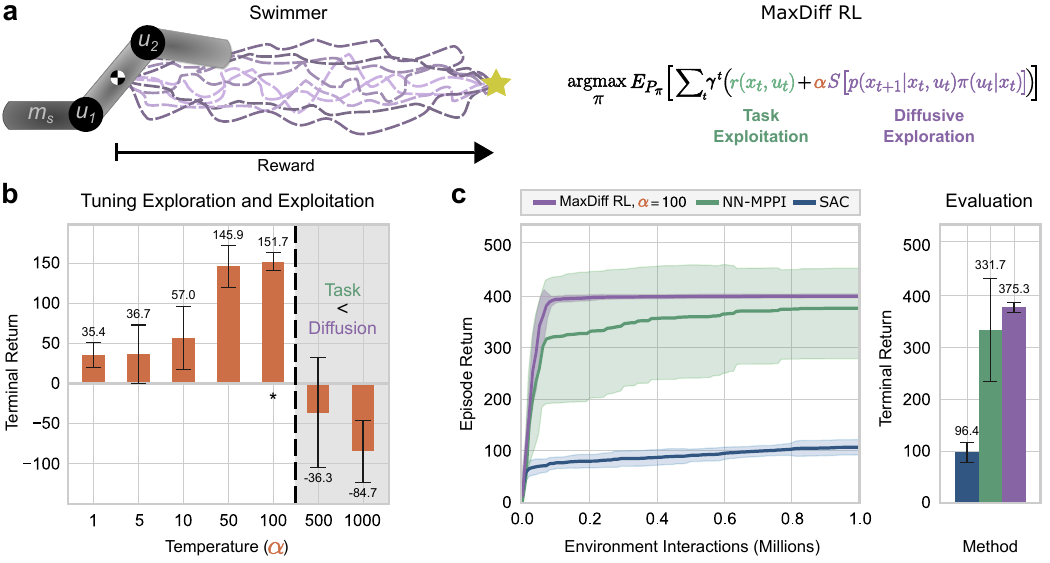}
    \caption{\small
    \textbf{Maximally diffusive RL agents are robust to random seeds and initializations}. \textbf{a}, Illustration of MuJoCo swimmer environment (left panel). The swimmer has 2 degrees of actuation, $u_1$ and $u_2$, that rotate its limbs at the joints, with tail mass $m_s$ and $m=1$ for other limbs. MaxDiff RL synthesizes robust agent behavior by learning policies that balance task-capability and diffusive exploration (right panel). In practice this balance is tuned by a temperature-like parameter, $\alpha$. \textbf{b}, To explore the role that $\alpha$ plays in the performance of MaxDiff RL, we examine the terminal returns of swimmer agents (10 seeds each) across values of $\alpha$ with $m_s=1$. Diffusive exploration leads to greater returns until a critical point (inset dotted line), after which the agent starts valuing diffusing more than accomplishing the task (see also \href{https://www.youtube.com/watch?v=XZOTG9KNifs&list=PLO5AGPa3klrCTSO-t7HZsVNQinHXFQmn9&index=1}{Supplementary Movie~1}). \textbf{c}, Using $\alpha=100$, we compared MaxDiff RL against SAC and NN-MPPI with $m_s=0.1$. We observe that MaxDiff RL outperforms comparisons on average with near-zero variability across random seeds, which is a formal property of MaxDiff RL agents (see also \href{https://www.youtube.com/watch?v=eq6Fk-lp1i0&list=PLO5AGPa3klrCTSO-t7HZsVNQinHXFQmn9&index=2}{Supplementary Movie~2}). For all reward curves, the shaded regions correspond to the standard deviation from the mean across 10 seeds. For all bar charts, data are presented as mean values above each error bar, where each error bar represents the standard deviation from the mean with $n=1000$ (100 evaluations over 10 seeds for each condition). All differences between MaxDiff RL and comparisons within this figure are statistically significant with $P<0.001$ using an unpaired two-sided Welch's t-test (see Methods and Supplementary Table~\ref{table:stats}).
    }
    \label{fig:fig3}
\end{figure*}

\subsection{Robustness to initializations in ergodic agents}
\label{sec:sec3}
The introduction of an entropy term in Eq.~\ref{eq:maxdiff_ratt} means that MaxDiff RL agents must balance between two aims: Achieving the task and embodying diffusion (Fig.~\ref{fig:fig3}(a)). While asymptotically there is no trade-off between maximally diffusive exploration and task exploitation, managing the relative balance between these two aims is important over finite time horizons, which we achieve with a temperature-like parameter, $\alpha$. In practice, our entropy term plays a similar role as other exploration bonuses that reward agent curiosity or provide intrinsic motivation~\cite{Chentanez2004,Pathak2017,Taiga2020}. Unlike other bonuses, however, the role of path entropy can be interpreted through the lens of statistical mechanics. If $\alpha$ is set too high, the system's fluctuations can overpower the reward and break the agent's ergodicity in ways that resemble the physics of diffusion processes in potential fields~\cite{Wang2019}. Unfortunately, predicting where this critical $\alpha$ threshold lies is generally challenging due to its conceptual ties to the phenomenon of ergodicity-breaking in nonequilibrium processes~\cite{Palmer1982}. 

Since ergodicity provides many of MaxDiff RL's desirable properties and guarantees, tuning the value of $\alpha$ is essential. In Fig.~\ref{fig:fig3} and \href{https://www.youtube.com/watch?v=XZOTG9KNifs&list=PLO5AGPa3klrCTSO-t7HZsVNQinHXFQmn9&index=1}{Supplementary Movie~1}, we explore the effect of tuning $\alpha$ on the learning performance of MaxDiff RL agents in MuJoCo's swimmer environment. The swimmer system is comprised of three rigid links of nominally equal mass, $m=1$, with two degrees of actuation at the joints. The agent's objective is to swim as fast as possible within a fixed time interval, while being subjected to viscous drag forces (Fig.~\ref{fig:fig3}(a)). In Fig.~\ref{fig:fig3}(b), we vary $\alpha$ across multiple orders of magnitude and examine its impact on the terminal returns of MaxDiff RL swimmer agents. As we modulate the value of $\alpha$ from 1 to 100, we observe that diffusive exploration leads to greater returns. However, after $\alpha=100$ we cross the critical threshold beyond which the strength of the system's diffusive exploration overpowers the reward (see inset dotted line in Fig~\ref{fig:fig3}(b)), thereby breaking the ergodicity of our agents with respect to the underlying potential and performing poorly at the task---just as predicted by our theoretical framework.

Given a constant temperature of $\alpha=100$ that preserves the swimmer's ergodicity, we compared the performance of MaxDiff RL to NN-MPPI and SAC across 10 seeds each. To ensure the task was solvable by all agents, we lowered the mass of the swimmer's third link (i.e., its tail) to $m_s=0.1$. We find that while SAC struggles to succeed within a million environment interactions, NN-MPPI achieves good performance but with high variance across seeds. This is in stark contrast to MaxDiff RL, whose performance is near-identical and competitive across all random seeds (see Fig.~\ref{fig:fig3}(c) and \href{https://www.youtube.com/watch?v=eq6Fk-lp1i0&list=PLO5AGPa3klrCTSO-t7HZsVNQinHXFQmn9&index=2}{Supplementary Movie~2}). Hence, by decorrelating state transitions, our agent was able to exhibit robustness to seeds and environment randomization beyond what is typically possible in deep RL. Moreover, since our implementation of MaxDiff RL is identical to that of NN-MPPI, we can attribute any performance gains and added robustness to the properties of MaxDiff RL's theoretical framework. 

\begin{figure*}[t!]
    \centering
    \includegraphics[width=0.9\linewidth]{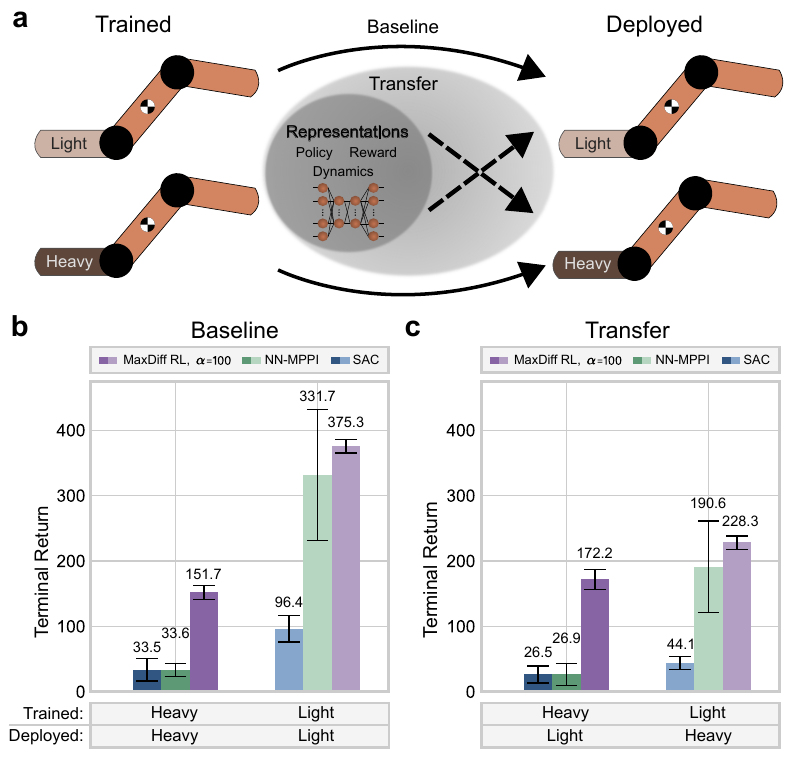}
    \caption{\small
    \textbf{Trained system embodiment determines deployed system performance}. \textbf{a}, Two variants of the MuJoCo swimmer environment: One with $m_s=1$ and one with $m_s=0.1$. As a baseline, we deploy learned representations on the same swimmer variant trained on. Then, we carry out a transfer experiment where the trained and deployed swimmer variants are swapped. \textbf{b}, Baseline experiments confirm previous results: All algorithms benefit from a more controllable swimmer. Since MaxDiff RL optimizes system controllability, it is the only method capable of achieving the task with a heavy-tailed swimmer (see also \href{https://www.youtube.com/watch?v=eq6Fk-lp1i0&list=PLO5AGPa3klrCTSO-t7HZsVNQinHXFQmn9&index=2}{Supplementary Movie~2}). \textbf{c}, Both NN-MPPI and SAC performance degrades when deployed on a more controllable system than was trained on, which is undesirable. In contrast, MaxDiff RL benefits from the ``Heavy-to-Light'' transfer because it learns policies that take advantage of a more capable system during deployment. We also observe that MaxDiff RL performance further increases in the ``Light-to-Heavy'' transfer experiment, showing that system controllability during training is more important to overall performance than the particular embodiment of the system it is ultimately deployed on (see also \href{https://www.youtube.com/watch?v=UD08f2aYIjM&list=PLO5AGPa3klrCTSO-t7HZsVNQinHXFQmn9&index=3}{Supplementary Movie~3}). For all bar charts, data are presented as mean values above each error bar, where each error bar represents the standard deviation from the mean with $n=1000$ (100 evaluations over 10 seeds for each condition). All differences between MaxDiff RL and comparisons within this figure are statistically significant with $P<0.001$ using an unpaired two-sided Welch's t-test (see Methods and Supplementary Table~\ref{table:stats}).
    }
    \label{fig:fig4}
\end{figure*}

Robustness to random seeds and environmental randomizations is a highly desirable feature of deep RL agents~\cite{Islam2017,Henderson2018,Moos2022}. However, guaranteeing such robustness is challenging because it requires modeling the impact of neural representations on learning outcomes. Nonetheless, we can provide representation-agnostic guarantees through the probably approximately correct in Markov decision processes (PAC-MDP) learning framework~\cite{Strehl2006,Strehl2009}. In short, an algorithm is PAC-MDP if it is capable of generating policies that are at least $\epsilon$-optimal at least $100\times(1-\delta)\%$ of the time, for any $\epsilon>0$ and $\delta\in(0,1)$ (see Methods). Under this framework, we can provide formal robustness guarantees.
\begin{theorem}
    \label{thm:thm2}
    (MaxDiff RL agents are robust to random seeds) If there exists a PAC-MDP algorithm $\mathcal{A}$ with policy $\pi^{max}$ for the MaxDiff RL objective (Eq.~\ref{eq:soc_maxdiff}), then the Markov chain induced by $\pi^{max}$ is ergodic, and $\mathcal{A}$ will be asymptotically $\epsilon$-optimal regardless of initialization.
\end{theorem}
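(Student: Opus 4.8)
The plan is to establish the ergodicity claim first, and then combine it with the PAC-MDP hypothesis to obtain initialization-independent $\epsilon$-optimality. The key fact to recall is the one established in the derivation of Eq.~\ref{eq:undirectedpathdist}: the maximally diffusive path distribution is exactly the path measure of an anisotropic, spatially-inhomogeneous diffusion process, and such a process is Markovian and ergodic. Consequently the target transition kernel $p_{max}(x_{t+1}|x_t)$ has full support over one-step-reachable states and admits a unique stationary distribution. This is the structural anchor of the whole argument, so I would state it up front.

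First I would analyze the transition kernel induced by $\pi^{max}$. From the modified reward of Eq.~\ref{eq:soc_maxdiff_running_cost}, the MaxDiff objective penalizes $D_{KL}(p_{\pi^{max}}||p_{max})$, so its optimum drives the induced state-transition distribution $p_{\pi^{max}}(x_{t+1}|x_t)$ as close to $p_{max}$ as the agent's controllability permits, with equality when the agent is fully controllable. The structural observation I would exploit is that $p_{max}$ has full support and the reward enters multiplicatively as $e^{r(x_t,u_t)}>0$ (Eq.~\ref{eq:path_policydists}); since $\alpha>0$ keeps the system in the diffusive regime, this reward tilting reweights transitions without annihilating support. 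Hence the Markov chain induced by $\pi^{max}$ inherits from the full-support diffusive kernel the irreducibility and recurrence properties that, as established for $p_{max}$, make it ergodic with a unique stationary distribution $\mu^{max}$. This settles the first claim.

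Next I would invoke the ergodic theorem for Markov chains: for an ergodic chain, time averages of any integrable observable converge almost surely to its expectation under $\mu^{max}$, independent of the initial state $x_0$. Applied to the reward observable, this shows that the agent's asymptotic return—equivalently, its asymptotic value—does not depend on initialization. Combining this with the PAC-MDP premise, which guarantees that $\pi^{max}$ is $\epsilon$-optimal with probability at least $1-\delta$, yields that $\mathcal{A}$ is asymptotically $\epsilon$-optimal \emph{regardless} of where the agent is initialized or which random seed governs the learning run, completing the proof.

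I expect the main obstacle to be the ergodicity step—specifically verifying that the reward-tilted optimum does not break irreducibility or recurrence. This is precisely the $\alpha$-dependent ergodicity-breaking phenomenon flagged in Sec.~\ref{sec:sec3}: for sufficiently large $\alpha$ the diffusive fluctuations overpower the reward and the stationary behavior can destabilize, while for reward landscapes that could create absorbing regions one must confirm that the strictly positive factor $e^{r}$ together with the full support of $p_{max}$ precludes trapping. The hypothesis that a PAC-MDP algorithm exists is what closes this gap, since PAC-MDP learnability presupposes sufficient exploration of the reachable state–action space—exactly the property the diffusive term supplies—so the regime in which $\pi^{max}$ is well-defined is also the regime in which the induced chain remains ergodic.
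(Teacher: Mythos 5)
Your first half is essentially the paper's own argument. The supplement likewise obtains ergodicity by noting that PAC-MDP optimality for the MaxDiff objective forces $D_{KL}(p_{\pi^{max}}||p_{max})\approx 0$, and that the reward-tilted kernel $p_{max}^V(x_{t+1}|x_t)=p_{max}(x_{t+1}|x_t)e^{-V[x_t]}$ remains strictly positive on a compact, connected state space, so all states communicate and the chain is aperiodic (Theorem~\ref{thm:ergodicity}); your observation that the multiplicative factor $e^{r}>0$ cannot annihilate the support of the diffusive kernel is exactly that proof.

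The second half, however, has a genuine gap. You apply Birkhoff's theorem to the reward observable and assert that the ``asymptotic return---equivalently, its asymptotic value---does not depend on initialization.'' These are not equivalent: PAC-MDP $\epsilon$-optimality (Definition~\ref{def:pac-mdp}) is stated in terms of the \emph{discounted} value function $\mathcal{V}_{\pi}(x_0)$, which depends on the initial state through the discounted transient and is not a time average along the trajectory, so initialization-independence of the long-run average reward does not deliver the theorem's conclusion; your final ``combining'' step merely juxtaposes the PAC-MDP premise with this weaker fact rather than deriving the claim from it. The paper closes this by a different choice of observable: it sets $f(x_t)=\mathbf{1}\{\mathcal{V}_{\pi^*}(x_t)-\mathcal{V}_{\pi^{max}}(x_t)\leq\epsilon\}$---treating the PAC-MDP learnability condition itself as a bounded measurable function of state---and considers two chains with the identical kernel $p_{\pi^{max}}$ started from arbitrary $x_0,x'_0\in\mathcal{X}$. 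Birkhoff's theorem then gives that both time averages of $f$ converge almost surely to the same ensemble average $E_{x_0\sim\rho}[f(x_0)]\geq 1-\delta$ over the invariant measure $\rho$, which is the precise sense in which $\mathcal{A}$ is asymptotically $\epsilon$-optimal regardless of initialization or seed. Your argument is repaired by substituting this indicator for the reward as the Birkhoff observable; with that replacement the remainder of your structure coincides with the paper's proof.
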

\noindent We refer the reader to Supplementary Note~\ref{sec:maxdiff_RL} for details, but the proof follows from treating the condition for PAC-MDP learnability as an observable in Birkhoff's ergodic theorem~\cite{Moore2015}. Since maximally diffusive agents are ergodic, any two arbitrary initializations will asymptotically achieve identical learning outcomes, which implies robustness to random seeds and environmental stochasticity. Despite excluding neural representations from our analysis, Fig.~\ref{fig:fig3}(c) suggests that our guarantees hold empirically.

\subsection{Zero-shot generalization across embodiments}
\label{sec:sec4}
When agents can find optimal policies, their dynamics become indistinguishable from an ergodic diffusion process. In doing so, the MaxDiff RL objective (see Eq.~\ref{eq:soc_maxdiff_running_cost}) reduces the influence of agent dynamics on performance. This suggests that successful MaxDiff RL policies may exhibit favorable generalization properties across agent embodiments. To explore this possibility, as well as the robustness of MaxDiff RL agents to variations in their neural representations, we devised a transfer experiment in the MuJoCo swimmer environment. We designed two variants of the swimmer: One with a heavy, less controllable tail of $m_s=1$, and another with a light, more controllable tail of $m_s=0.1$ (Fig.~\ref{fig:fig4}(a)). We trained two sets of representations for each algorithm. One set was trained with the light-tailed swimmer, and another set was trained with the heavy-tailed swimmer. Then, we deployed and evaluated each set of representations on both the swimmer variant that they observed during training, as well as its counterpart. Our experiment's outcomes are shown in Fig.~\ref{fig:fig4}(b,c), where the results are categorized as ``baseline'' if the trained and deployed swimmer variants match, or ``transfer'' if they were swapped. The baseline experiments validate other results shown throughout the manuscript: All algorithms benefit from working with a more controllable system whose dynamics induce weaker temporal correlations (see Fig.~\ref{fig:fig4}(b) and \href{https://www.youtube.com/watch?v=eq6Fk-lp1i0&list=PLO5AGPa3klrCTSO-t7HZsVNQinHXFQmn9&index=2}{Supplementary Movie~2}). However, as MaxDiff RL is the only approach taking temporal correlations into account, it is the only method that remains task-capable with a heavy-tailed swimmer.

For the transfer experiments, all of the learned neural representations of the reward function, control policy, and agent dynamics were deployed on the swimmer variant that was not seen during training (Fig.~\ref{fig:fig4}(a)). First, we note that for both NN-MPPI and SAC representation transfer leads to degrading performance across the board. This is the case even when the swimmer variant they were deployed onto was more controllable, which is counterintuitive and undesirable behavior. In contrast, our MaxDiff RL agents can actually benefit and improve their performance when deployed on the more controllable swimmer variant, as desired (see ``Heavy-to-Light'' transfer in Fig.~\ref{fig:fig4}(c) and \href{https://www.youtube.com/watch?v=UD08f2aYIjM&list=PLO5AGPa3klrCTSO-t7HZsVNQinHXFQmn9&index=3}{Supplementary Movie~3}). In other words, as the task becomes easier in this way, we can expect the performance of MaxDiff RL agents to improve. 

A more surprising result is the performance increase in MaxDiff RL agents between the baseline heavy-tailed swimmer and the ``Light-to-Heavy'' transfer swimmer (Fig.~\ref{fig:fig4}(c) and \href{https://www.youtube.com/watch?v=UD08f2aYIjM&list=PLO5AGPa3klrCTSO-t7HZsVNQinHXFQmn9&index=3}{Supplementary Movie~3}). We found that training with a more controllable swimmer increased the performance of agents when deployed on a heavy-tailed swimmer, showing that system controllability during training matters more to overall performance than the particular embodiment of the deployed system. This kind of zero-shot generalization~\cite{Kirk2023} from an easier task to a more challenging task is reminiscent of results seen in RL agents trained via curriculum learning~\cite{Oh2017}, as well as of the incremental learning dynamics of biological systems during motor skill acquisition~\cite{Krakauer2019}. However, here it emerges spontaneously from the properties of MaxDiff RL agents. In part, this occurs because greater controllability leads to improved exploration, which increases the diversity of data observed during training.

\subsection{Single-shot learning in ergodic agents}
\label{sec:sec5}
When agents are deployed in the real world, they face situations at test time that were never encountered during training. Since exhaustively accounting for every possible scenario is infeasible, agents capable of real-time adaptation and learning during individual deployments are desirable~\cite{Ibarz2021}. Most RL methods excel at episodic multi-shot learning over the course of several deployments (Fig.~\ref{fig:fig5}(b)), where randomized instantiations of a given task and environment passively provide a kind of variability that is essential to the learning process~\cite{Lu2021_reset}. However, episodic problems of this kind are very rare in real-world applications. For this reason, there is a need for methods that allow agents to perform a task successfully within a single trial---or, in other words, for methods that enable single-shot learning.

Single-shot learning concerns learning in non-episodic environments over the course of a single task attempt, similar to the ``single-life'' RL setting considered in~\cite{Chen2022_yolo}. Despite the challenges associated with studying the behavior of agents based on neural network representations, the ergodic properties of MaxDiff RL enables one to provide representation-agnostic guarantees on the feasibility of single-shot learning through the PAC-MDP learning framework.
\begin{theorem}
    \label{thm:thm3}
    (MaxDiff RL agents can learn in single-shot deployments) If there exists a PAC-MDP algorithm $\mathcal{A}$ with policy $\pi^{max}$ for the MaxDiff RL objective (Eq.~\ref{eq:soc_maxdiff}), then the Markov chain induced by $\pi^{max}$ is ergodic, and any individual initialization of $\mathcal{A}$ will asymptotically satisfy the same $\epsilon$-optimality as an ensemble of initializations. 
\end{theorem}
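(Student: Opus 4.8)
The plan is to mirror the argument underlying Theorem~\ref{thm:thm2} but to extract the sharper single-trajectory conclusion that is the defining content of ergodicity. First I would establish that the Markov chain induced by $\pi^{max}$ is ergodic. By hypothesis $\pi^{max}$ attains the optimum of Eq.~\ref{eq:soc_maxdiff}, so the agent's path distribution coincides with the maximally diffusive distribution of Eq.~\ref{eq:undirectedpathdist}; as noted earlier in the text, this is the path distribution of an anisotropic, spatially-inhomogeneous diffusion process whose sample paths are Markovian and ergodic. I would therefore invoke that result to conclude the existence of a unique stationary measure $\mu$ under which the induced chain is ergodic in the measure-preserving sense.

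Next I would formalize $\epsilon$-optimality as an observable. Let $f$ denote a bounded, measurable function that records whether the policy $\pi^{max}$ is $\epsilon$-optimal at a given state---for instance, an indicator that the per-step value gap falls below $\epsilon$, or equivalently the clipped suboptimality. The PAC-MDP hypothesis on $\mathcal{A}$ guarantees that, with probability at least $1-\delta$, the expectation of $f$ under the ensemble of initializations---that is, under the stationary measure $\mu$---meets the $\epsilon$-optimality threshold. This reframes the PAC-MDP learnability condition as a statement about the space average $E_{\mu}[f]$.

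The central step is then a direct application of Birkhoff's ergodic theorem. For an ergodic measure-preserving transformation and an integrable observable $f$, the time average along \emph{any single} trajectory satisfies $\frac{1}{T}\sum_{t=0}^{T-1} f(x_t) \to E_{\mu}[f]$ almost surely as $T\to\infty$. Applying this with the $\epsilon$-optimality observable, the long-run time average realized over a single deployment converges to the ensemble average for $\mu$-almost every initialization. Because the ensemble average already satisfies the PAC-MDP $\epsilon$-optimality threshold, so does the single-trajectory average asymptotically, which is precisely the claim that an individual initialization attains the same $\epsilon$-optimality as an ensemble.

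The hard part will be the technical bridge between the finite-horizon, high-probability PAC-MDP bound and the asymptotic almost-sure ergodic average: one must verify that the chosen observable is $\mu$-integrable (boundedness suffices), that the exceptional $\mu$-null set on which Birkhoff convergence can fail is compatible with the ``any individual initialization'' phrasing, and that the PAC-MDP sample-complexity guarantee can legitimately be read as a constraint on the stationary-measure expectation rather than on transient behavior alone. Provided the diffusive optimum yields a genuinely ergodic---not merely irreducible---chain, which the maximally diffusive path distribution ensures, these conditions are met and the conclusion follows.
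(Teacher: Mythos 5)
Your proposal follows essentially the same route as the paper's own proof: establish ergodicity of the Markov chain induced by $\pi^{max}$ from the maximally diffusive optimum (the paper argues $D_{KL}(p_{\pi^{max}}||p_{max})\approx 0$ and invokes its ergodicity theorem for diffusion in a potential), recast the PAC-MDP condition as the stationary-measure expectation of the bounded indicator observable $\mathbf{1}\{\mathcal{V}_{\pi^*}(x)-\mathcal{V}_{\pi^{max}}(x)\leq\epsilon\}$, and apply Birkhoff's ergodic theorem to equate the single-trajectory time average with the ensemble average almost surely. The technical points you flag---boundedness of the observable and reading the high-probability PAC bound as an expectation over the invariant measure---are resolved in the paper exactly as you anticipate.
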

\noindent Thus, any MaxDiff RL agent capable of solving a task in a multi-shot fashion (in the PAC-MDP sense) is capable of solving the same task in a single-shot fashion. This theorem also follows from Birkhoff's ergodic theorem and is closely related to Theorem~\ref{thm:thm2}. Since any two MaxDiff RL agents will asymptotically achieve identical learning outcomes, any individual MaxDiff RL agent will also achieve identical learning outcomes as an ensemble (see Supplementary Note~\ref{sec:maxdiff_RL} for details). Because ergodicity is central to this proof, we expect its guarantees to fail when ergodicity is broken by either the agent or the environment.

\begin{figure*}[t!]
    \centering
    \includegraphics[width=1.0\linewidth]{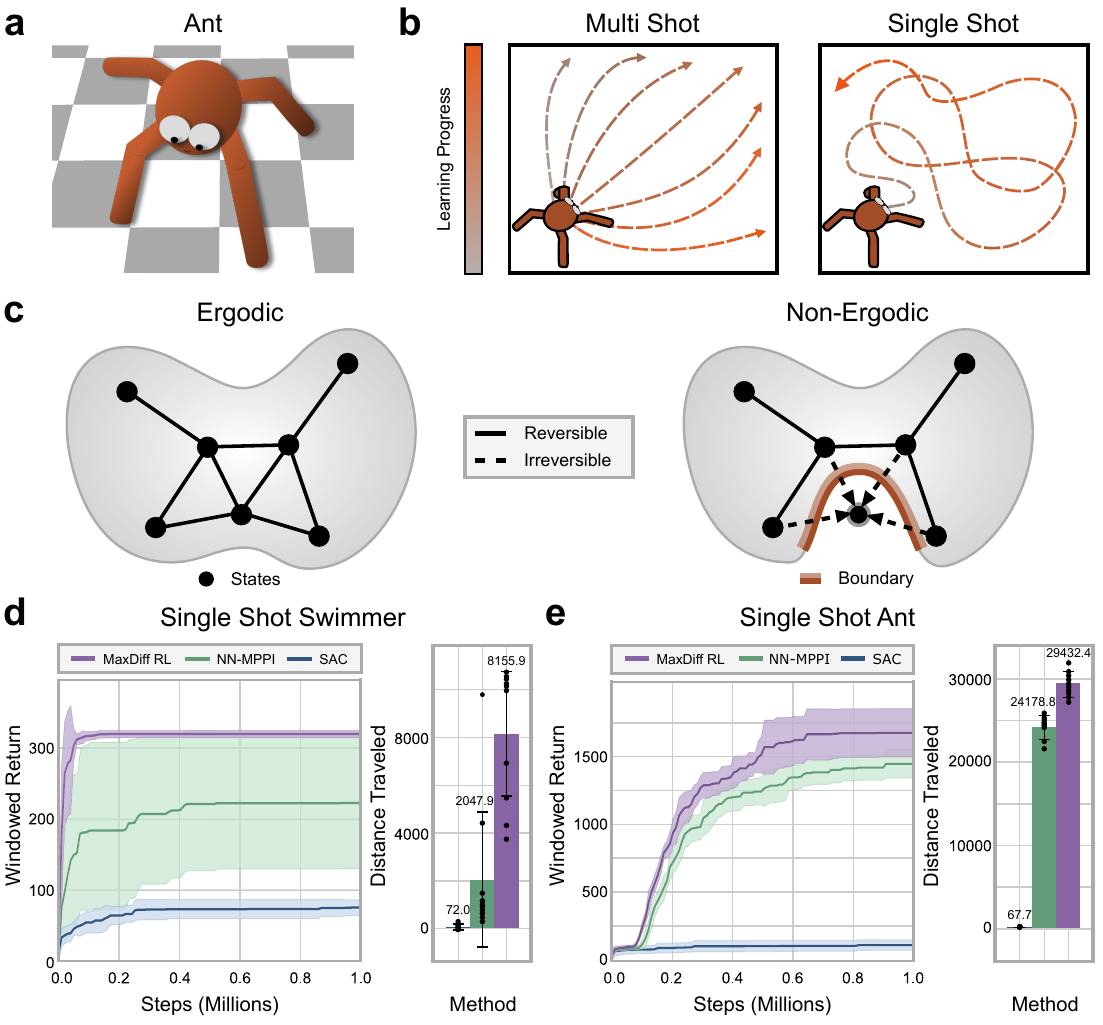}
    \caption{\small
     \textbf{Maximally diffusive RL agents are capable of single-shot learning.} \textbf{a}, Illustration of MuJoCo ant environment. \textbf{b}, Typical algorithms learn across many different initializations and deployments of an agent, which is known as multi-shot learning. In contrast, single-shot learning insists on a single task attempt, which requires learning through continuous deployments. Here, we prove that MaxDiff RL agents are equivalently capable of single-shot and multi-shot learning in a broad variety of settings. \textbf{c}, Single-shot learning depends on the ability to generate data samples ergodically, which MaxDiff RL guarantees when there are no irreversible state transitions in the environment. \textbf{d}, Single-shot learning in the swimmer MuJoCo environment. We find that MaxDiff RL achieves robust performance comparable to its multi-shot counterpart (see also \href{https://www.youtube.com/watch?v=kgdZeSoqaKk&list=PLO5AGPa3klrCTSO-t7HZsVNQinHXFQmn9&index=4}{Supplementary Movie~4}). \textbf{e}, In contrast to the swimmer, the MuJoCo ant environment contains irreversible state transitions (e.g., flipping upside down) preventing ergodic trajectories. Nonetheless, MaxDiff RL remains state-of-the-art in single-shot learning. Note that we report returns over a window of 1000 steps in analogy to our multi-shot results, where episodes consist of 1000 environment interactions. For all reward curves, the shaded regions correspond to the standard deviation from the mean across 10 seeds. For all bar charts, data are presented as mean values above each error bar, where each error bar represents the standard deviation from the mean and the data distribution is plotted directly ($n=10$ seeds for each condition). All differences between MaxDiff RL and comparisons within this figure are statistically significant with $P<0.001$ using an unpaired two-sided Welch's t-test (see Methods and Supplementary Table~\ref{table:stats}).
    }
    \label{fig:fig5}
\end{figure*}

Figure~\ref{fig:fig5} demonstrates the single-shot learning capabilities of MaxDiff RL agents, and explores what happens when ergodicity is broken by the topological properties of the environment. Here, we examine both the MuJoCo swimmer and ant environments (Fig.~\ref{fig:fig5}(a)). The primary difference between these two environments is the existence of irreversible state transitions that can violate the ergodicity requirement of our single-shot learning guarantees topologically (Fig.~\ref{fig:fig5}(c)), which have been previously referred to as ``sink states'' in the literature~\cite{Lu2021_reset}. Unlike the swimmer, the ant is capable of transitioning into such states by flipping upside down, thereby breaking ergodicity. Irreversible state transitions are common in real-world applications because they can arise as a result of unsafe behavior, such as a robot breaking or malfunctioning during learning. While such transitions can be prevented in principle through the use of safety-preserving methods~\cite{Ames2014,Ames2020,Xiao2023}, we omit their implementation to illustrate our point. As expected, the MaxDiff RL single-shot swimmer is capable of learning in continuous deployments (see Fig.~\ref{fig:fig5}(d) and \href{https://www.youtube.com/watch?v=kgdZeSoqaKk&list=PLO5AGPa3klrCTSO-t7HZsVNQinHXFQmn9&index=4}{Supplementary Movie~4}), retaining the same robustness of its multi-shot counterpart in Fig.~\ref{fig:fig3}(c), and achieving similar task performance. Despite ergodicity-breaking in the single-shot ant environment, MaxDiff RL still leads to improved outcomes over NN-MPPI and SAC, as in Fig.~\ref{fig:fig5}(e), where we plot the final distance traveled to ensure that no reward hacking took place. However, the loss of ergodicity leads to an increase in the variance of single-shot MaxDiff RL agent performance, as well as equivalent performance to NN-MPPI in multi-shot (see Supplementary Figure~\ref{fig:ant_multi}), which we expect as a result of our robustness guarantees no longer holding.

\section{Discussion}
\label{sec:discussion}
Throughout this work, we have highlighted the ways in which RL is fragile to temporal correlations intrinsic to many sequential decision-making processes. We introduced a framework based on the statistical mechanics of ergodic processes to overcome these limitations, which we term MaxDiff RL. Our framework offers a generalization of the current state-of-the-art in RL and addresses many foundational issues holding back the field: The ergodicity of MaxDiff RL agents enables data acquisition that is indistinguishable from \textit{i.i.d.} sampling, performance that is robust to seeds, and single-shot learning. Through its roots in statistical physics, our work forms a starting point for a more scientific study of embodied RL---one in which falsifiable predictions can be made about agent properties and their performance. 

However, much more work at the nexus of physics, learning, and control remains to be done in pursuit of this goal. For one, approaches grounded in statistical physics for tuning or annealing temperature-like parameters during learning will be necessary to achieve effective exploration without sacrificing agent performance~\cite{Seung1992}. Additionally, control techniques capable of enforcing ergodicity in the face of environmental irreversibility are needed to guarantee desirable agent properties like robustness to random seeds in complex problem settings~\cite{Taylor2021}. Beyond RL, our work also has the potential to open new lines of interdisciplinary inquiry in areas such as biological learning and animal behavior. For example, the importance of ergodicity to animal behaviors like foraging and tracking has been extensively studied~\cite{Chen2020_ergodic}. As such, our work presents an avenue for studying these behaviors within an RL framework that is sensitive to physical embodiment. For biological motor learning, our findings also suggest that controllability may be a promising frame of reference for studying motor skill acquisition~\cite{Song2021}. More broadly, our work is particularly well-suited to applications in soft matter systems where the impact of correlations may in fact be impossible to avoid entirely~\cite{Berrueta2024}. Taken together, our results present a major advance towards transparently understanding and reliably synthesizing complex behavior in embodied decision-making agents, which will be crucial to the long-term viability of deep RL as a field.

\section*{Methods}
\label{sec:methods}
\subsection*{Reinforcement learning preliminaries}
RL problems are modeled as Markov decision processes (MDPs). MDPs are typically defined according to a 5-tuple, $(\mathcal{X},\mathcal{U},p,r,\gamma)$, where we take both the state space, $\mathcal{X}$, and the action space, $\mathcal{U}$, to be continuous. Note that in this work, we typically take $\mathcal{X}$ to be some subset of $\mathbb{R}^d$. Then, $p: \mathcal{X}\times\mathcal{X}\times\mathcal{U}\rightarrow[0,\infty)$ represents the probability density of transitioning from state $x_t\in\mathcal{X}$ to state $x_{t+1}\in\mathcal{X}$ after taking action $u_t\in\mathcal{U}$. At every state and for each action taken, the environment emits a bounded reward $r:\mathcal{X}\times\mathcal{U}\rightarrow [r_{min},r_{max}]$ discounted by a factor of $\gamma\in[0,1)$. In general, the goal is to learn an optimized policy $\pi : \mathcal{U}\times\mathcal{X}\rightarrow [0,\infty)$ capable of producing actions that maximize an agent's expected cumulative rewards over the course of $T$ discrete time stages, where $t\in\{0,\cdots,T\}$. In standard RL, this optimization takes place over the course of ensembles of episodes (i.e., task attempts) of duration $T$, where the environment is reset after each episode~\cite{Sutton2018}. This is what we refer to as the multi-shot learning setting. In contrast, non-episodic RL considers reset-free learning over the course of a single task attempt in the limit of $T\rightarrow\infty$, or until the task is done~\cite{Chen2022_yolo,Lu2021_reset}. We refer to this as the single-shot learning setting.

\subsection*{PAC-MDP framework}
Many properties of MaxDiff RL agents arise from the relationship between ergodicity and learning performance. To formalize how this is the case, we use the probably approximately correct in Markov decision processes (PAC-MDP) learning framework~\cite{Strehl2006,Strehl2009}.
\begin{definition}
    \label{def:pac-mdp}
    An algorithm $\mathcal{A}$ is said to be PAC-MDP if, for any $\epsilon>0$ and $\delta \in (0,1)$, a policy $\pi$ can be produced with $poly(|\mathcal{X}|, |\mathcal{U}|, 1/\epsilon, 1/\delta, 1/(1-\gamma))$ sample complexity that is at least $\epsilon$-optimal with probability at least $1-\delta$. In other words, if $\mathcal{A}$ satisfies
    \begin{equation}
        \label{eq:pac_mdp_condition}
        {\normalfont\text{Pr}}\big(\mathcal{V}_{\pi^*}(x_0)-\mathcal{V}_{\pi}(x_0)\leq \epsilon\big) \geq 1-\delta\notag
    \end{equation}
    with polynomial sample complexity for all $x_0\in\mathcal{X}$, where $\mathcal{V}_{\pi}(\cdot)$ is a value function due to policy $\pi$, and $\mathcal{V}_{\pi^*}(\cdot)$ is the optimal value function, then $\mathcal{A}$ is PAC-MDP.
\end{definition}
\noindent Thus, an algorithm is PAC-MDP if it is capable of producing a policy that is at least $\epsilon$-optimal at least $100\times(1-\delta)\%$ of the time for any valid choice of $\epsilon$ and $\delta$. 

\subsection*{Statistical analysis of empirical benchmarks}
Since all learning experiments were run across 10 seeds, for each task there are 10 policies per method (i.e., MaxDiff RL, NN-MPPI, and SAC). Due to differences between multi-shot and single-shot settings, we evaluated them differently. In multi-shot experiments, we took the 10 final policies learned and evaluated their performance across 100 episodes with randomized initial conditions. For each algorithm, this results in a total of 1000 sampled returns per task. Then, to assess statistical differences between the sampled 1000 episodic returns per algorithm, we used an unpaired two-sided Welch's t-test as implemented in Python's scientific computing package~\cite{SciPy2020}. An important note is that episodic return curves illustrate the policies' learning progress across each of the 10 random seeds, rather than policy evaluation. Policy evaluation is depicted in bar plots instead (e.g., Fig.~\ref{fig:fig3}(c) right).

The non-episodic nature of single-shot learning means that there is no individual time-stamp at which policies can be fairly evaluated. For this reason, in single-shot experiments we used a task-specific performance measure (i.e., distance traveled) sampled across the 10 runs of each task to perform statistical comparisons. In addition to our task-specific metrics, we also took the terminal windowed returns sampled during each of the 10 seeds of the learning tasks. As before, we applied a Welch's t-test onto the 10 sampled returns and 10 sampled task-specific metrics per algorithm. For statistics, we refer readers to Supplementary Table~\ref{table:stats}.


\section*{Data availability}
\label{sec:data_statement}
Data supporting the findings of this study are available in the following repository: \href{https://github.com/MurpheyLab/MaxDiffRL}{\color{blue}{\texttt{github.com/MurpheyLab/MaxDiffRL}}}.

\section*{Code availability}
\label{sec:code_statement}
Code supporting the findings of this study is available in the following repository: \href{https://github.com/MurpheyLab/MaxDiffRL}{\color{blue}{\texttt{github.com/MurpheyLab/MaxDiffRL}}}.

\section*{Acknowledgements}
\label{sec:acknowledgements}
We thank Annalisa T. Taylor, Jamison Weber, and Pavel Chvykov for their comments on early drafts of this work. We acknowledge funding from the US Army Research Office MURI grant \#W911NF-19-1-0233, and the US Office of Naval Research grant \#N00014-21-1-2706. We also acknowledge hardware loans and technical support from the Intel Corporation, and T.A.B. is partially supported by the Northwestern University Presidential Fellowship.

\section*{Author contributions} 
\label{sec:author_contributions}
T.A.B. derived all theoretical results, performed supplementary data analyses and control experiments, supported reinforcement learning experiments, and wrote the manuscript. A.P. developed and tested reinforcement learning algorithms, carried out all reinforcement learning experiments, and supported manuscript writing. T.D.M. secured funding and guided the research program.
\end{bibunit}

\clearpage
\newpage

\begin{bibunit}
\setcounter{section}{1}
\setcounter{subsection}{0}
\setcounter{equation}{0}
\setcounter{figure}{0}
\renewtheorem{theorem}{Theorem}[subsection]
\renewtheorem{proposition}{Proposition}[subsection]
\renewtheorem{definition}{Definition}[subsection]
\newtheorem{remark}{Remark}[subsection]
\newtheorem{corollary}{Corollary}[theorem]
\newtheorem{lemma}[theorem]{Lemma}
\renewcommand{\figurename}{Supplementary Figure}
\renewcommand{\tablename}{Supplementary Table}
\renewcommand{\bibsection}{\section*{Supplementary references}}
\renewcommand{\thesubsection}{\arabic{subsection}}

\noindent\makebox[\linewidth]{\rule{\linewidth}{1.5pt}}
\vspace{-0.25in}
\begin{center}
    {\Large Supplementary information}
\end{center}
\vspace{-0.2in}
\noindent\makebox[\linewidth]{\rule{\linewidth}{1.5pt}}

\tableofcontents

\pagebreak

\addtocontents{toc}{\protect\setcounter{tocdepth}{3}}

\addcontentsline{toc}{section}{Supplementary notes}
\section*{Supplementary notes}
\label{sec:notes_SI}
\subsection{Introduction}
\label{sec:SI_introduction}

In the following supplementary notes, we lay out the theoretical framework of maximum diffusion reinforcement learning (MaxDiff RL). MaxDiff RL is a generalization of maximum entropy (MaxEnt) RL in a similar sense as the principle of maximum caliber~\cite{Dill2018} is a generalization of the principle of maximum entropy~\cite{Kapur1989}. This requires a deliberate shift in the way we interpret the underlying goal of RL algorithms: from reaching desirable states to realizing desirable trajectories. By assigning conceptual importance to the ``state trajectory'' as a mathematical abstraction, our approach has an explicit focus on the way that properties of the underlying agent-environment state transition dynamics impact the performance of RL algorithms. In particular, we consider the impact that temporal correlations in the trajectories of RL agents can have on their performance and design MaxDiff RL to overcome this impact.

The broad structure of the supplement is the following: first, in Supplementary Note~\ref{sec:maxdiff_theory}, we derive a novel understanding of exploration through the lens of maximum caliber trajectory sampling. In doing so, we are able to derive analytical expressions that describe the trajectories of optimally exploring agents in settings where there is no goal or reward, as well as in settings where there is one. Then, in Supplementary Note~\ref{sec:maxdiff_synthesis}, we provide a mathematical framework for synthesizing agent behavior that satisfies optimal exploration statistics, which we show to be formally equivalent to the usual stochastic optimal control formulation of RL problems. Finally, Supplementary Note~\ref{sec:implementation} provides implementation details and statistical analyses of our empirical results. We will now provide a per-section summary of the results provided in Supplementary Notes~\ref{sec:maxdiff_theory} and~\ref{sec:maxdiff_synthesis}.

Supplementary Note~\ref{sec:maxdiff_theory} establishes the theoretical foundations of our approach. First, Supplementary Note~\ref{sec:controllability} motivates our primary conceptual point in a restricted class of systems---that temporal correlations in the state transition dynamics of embodied agents can have an impact on effective exploration and learning performance. Then, Supplementary Note~\ref{sec:exploration_sampling} establishes some mathematical preliminaries, such as how to think of an agent's experiences or state trajectories as collections of random variables parametrized by a time-like variable, and how to measure temporal correlations. Supplementary Note~\ref{sec:exploration_opt_problem} formalizes the problem of undirected state exploration through the lens of maximum caliber trajectory sampling. In doing so, we pay particular attention to realizing exploration with continuous trajectories. In Supplementary Note~\ref{sec:exploration_diffusion}, we prove that optimal exploration with continuous trajectories is achieved by state-space diffusion (Theorem~\ref{thm:diffusion}). Moreover, we prove that agents who satisfy optimal exploration statistics are Markovian (Corollary~\ref{cor:markov}) and ergodic (Corollary~\ref{cor:ergodicity}). Notably, this is not something we assumed a priori. In Supplementary Note~\ref{sec:directed_exploration}, we extend our results to directed exploration settings where there is a cost or reward function that assigns some notion of preference to particular states. While the Markov property holds in this setting automatically, we prove that optimal directed exploration is still ergodic (Theorem~\ref{thm:ergodicity}). Finally, Supplementary Note~\ref{sec:descent_directions} provides additional motivation that illustrates the sense in which maximum caliber directed exploration leads to goal-directed behavior. In doing so, we analyze the maximum likelihood trajectories of optimally exploring path distribution and find that they have inertial dynamics resembling gradient descent.

Supplementary Note~\ref{sec:maxdiff_synthesis} establishes the computational foundations of our approach. First, in Supplementary Note~\ref{sec:KL_control} we define MaxDiff trajectory synthesis more broadly in terms of KL control. In short, we define the objective of MaxDiff trajectory synthesis as finding controllers or policies that minimize the distance between an agent's trajectory distribution and the optimal trajectory distributions (as derived in Supplementary Note~\ref{sec:maxdiff_theory}). In Supplementary Note~\ref{sec:exploration_soc}, we show that the KL control objective from the previous section can be written as an equivalent stochastic optimal control problem, which allows us to formally state the MaxDiff RL objective. Supplementary Note~\ref{sec:maxdiff_RL} explores the formal properties of MaxDiff RL: the sense in which it generalizes MaxEnt RL (Theorem~\ref{thm:maxdiff_generalization} and Main Text Theorem~\ref{thm:thm1}), its single-shot learning capabilities (Theorem~\ref{thm:maxdiff_single} and Main Text Theorem~\ref{thm:thm3}), and their robustness to seeds and initializations (Theorem~\ref{thm:maxdiff_robust} and Main Text Theorem~\ref{thm:thm2}). Supplementary Notes~\ref{sec:exploration_entropymax} and~\ref{sec:exploration_controllers} introduce alternative formulations of the MaxDiff RL objective that are easier to compute, as well as more amenable to model-free RL implementations. Finally, Supplementary Note~\ref{sec:maxdiff_exploration} provides some examples of MaxDiff trajectory synthesis outside of RL.

\newpage
\clearpage

\subsection{Theoretical framework for maximum diffusion}
\label{sec:maxdiff_theory}
Throughout this section we analytically derive and establish the theoretical properties of maximally diffusive agents and their trajectories, as well as their relationship to \textit{i.i.d.} data, temporal correlations, controllability, and exploration. We do not directly discuss reinforcement learning within this section beyond framing our results, but rather establish mathematical foundations that elucidate the relationship between an agent's properties and its ability to explore and learn. For our implementation of these principles within a reinforcement learning framework, refer to Supplementary Note~\ref{sec:maxdiff_synthesis}.

\subsubsection{The role of temporal correlations in exploration and learning}
\label{sec:controllability}
Exploration is a process by which agents become exposed to new experiences, which is of broad importance to their learning performance. While many learning systems can function as abstract processes insulated from the challenges and uncertainties associated with embodied operation~\cite{LeCun2015}, physical agents---simulated or otherwise---have no such luxury~\cite{Taylor2021,Levine2021,Miki2022,Bloesch2022}. The laws of physics, material properties, and dynamics all impose fundamental constraints on what can be achieved by a learning system. The main conceptual point of this work is that the state transition dynamics of embodied learning agents can introduce temporal correlations that hinder their performance. In this section, we provide a formal mathematical argument in favor of this point in a particular class of systems. We do this in hopes of motivating how the controllability properties of the agent-environment state transition dynamics---and the temporal correlations these induce---may have an impact on the efficacy of action randomization as an exploration strategy more generally, and as a result on performance. 

Drawing inspiration from the study of multi-armed bandits~\cite{Auer2002}, the most common exploration strategy in reinforcement learning is randomized action exploration. The simplest of these methods merely requires that agents randomly sample actions from either uniform or Gaussian distributions to produce exploration. More sophisticated methods, such as maximum entropy reinforcement learning~\cite{Haarnoja2017,Haarnoja2018,So2022}, elaborate on this basic idea by learning a distribution from which to sample random actions. For the purpose of our analysis, these more advanced methods are functionally equivalent to each other---they assume that taking random actions produces effective state exploration. However, from the perspective of control theory we know that this is not necessarily the case. For a system to be able to reach desired states arbitrarily, it must be controllable~\cite{Sontag2013}.

To illustrate how the controllability properties of the agent-environment state transition dynamics can determine the structure and magnitude of temporal correlations, we will briefly consider randomized action exploration in linear time-varying (LTV) control systems. This is a broad class of systems for which we can provide formal mathematical arguments in favor of our main point. LTV dynamics can be expressed in terms of continuous-time deterministic trajectories in the following way:
\begin{equation}
    \label{eq:supp_lindyn}
    \dot{x}(t) = A(t)x(t)+B(t)u(t),
\end{equation}
where $A(t)$ and $B(t)$ are appropriately dimensioned matrices with state and control vectors $x(t)\in\mathcal{X}\subset\mathbb{R}^d$ and $u(t)\in\mathcal{U}\subset\mathbb{R}^m$, and $x(t_0)=x^*$ for $\mathcal{T}=[t_0,t] \subset \mathbb{R}$. The general form of solutions to this system of linear differential equations is expressed in terms of a convolution with the system's state-transition matrix, $\Psi(t,t_0)$, in the following way:
\begin{equation}
    \label{eq:supp_lindyn_sol}
    x(t) = \Psi(t,t_0)x^*+\int_{t_0}^{t}\Psi(t,\tau)B(\tau)u(\tau)d\tau.
\end{equation}
We consider these dynamics because by working with LTV dynamics we implicitly consider a very broad class of systems---all while retaining the simplicity of linear controllability analysis~\cite{Hespanha2018}. This is due to the fact that the dynamics of any nonlinear system that is locally linearizable along its trajectories can be effectively captured by LTV dynamics. Hence, any results applicable to the dynamics in Eq.~\ref{eq:supp_lindyn} will apply to linearizable nonlinear systems. However, we note that our derivations in subsequent sections do \textit{not} assume dynamics of this form. We only consider them to motivate our approach in this section.

To develop an understanding of the exploration capabilities of a given LTV system, we may ask what states are reachable by this system. After all, states that are not reachable cannot be explored or learned from. This is precisely what controllability characterizes:
\begin{definition}
\label{def:controllability}
    A system is said to be controllable over a time interval $[t_0, t] \subset \mathcal{T}$ if given any states $x^*,x_1\in \mathcal{X}$, there exists a controller $u(t):[t_0,t]\rightarrow \mathcal{U}$ that drives the system from state $x^*$ at time $t_0$ to $x_1$ at time $t$.
\end{definition}
\noindent While this definition intuitively captures what is meant by controllability, it does not immediately seem like an easily verifiable property. To this end, different computable metrics have been developed that equivalently characterize the controllability properties of certain classes of systems (e.g., the Kalman controllability rank condition~\cite{Sontag1991}). In particular, here we analyze the controllability Gramian of our system, as well as its rank and determinant as metrics on system controllability. 

For our class of LTV systems, characterizing controllability with this method is simple:
\begin{equation}
    \label{eq:supp_controllability_gramian}
    W(t_0,t) = \int_{t_0}^{t} \Psi(t,\tau)B(\tau)B(\tau)^T\Psi(t,\tau)^Td\tau,
\end{equation}
where the Gramian is a symmetric positive semidefinite matrix that depends on the state-control matrix $B(t)$ and the state-transition matrix $\Psi(t,t_0)$. The Gramian is a controllability metric that quantifies the amount of energy required to actuate the different degrees of freedom of the system~\cite{Cortesi2014,Summers2016}. For any given finite time interval, the controllability Gramian also characterizes the set of states reachable by the system. Importantly, when the controllability Gramian is full-rank, the system is provably controllable in the sense of Definition~\ref{def:controllability}~\cite{Sontag2013}, and capable of fully exploring its environment. However, when the controllability Gramian is poorly conditioned, substantial temporal correlations are introduced into the agent's state transitions, which can prevent effective exploration and---as a direct consequence---learning, as we will show.

To draw the connection between naive random exploration, controllability, and temporal correlations explicitly, we will now revisit the dynamics in Eq.~\ref{eq:supp_lindyn} under a slight modification. Let us design a controller that performs naive action randomization, i.e., let $u(t)=\xi$, where $\xi\sim\mathcal{N}(\mathbf{0},\text{Id})$ and Id is an identity matrix with diagonal of the same dimension as the control inputs, and $\mathbf{0}$ is the zero vector of the same dimension. Note that the system trajectories are now random variables---or rather, collections of random variables, which we define formally in the following section. Then, we have:
\begin{equation}
    \label{eq:supp_lindyn_random}
    \dot{x}(t) = A(t)x(t)+B(t)\cdot \xi.
\end{equation}
Here, we abuse notation slightly to minimize the difference between this equation and Eq.~\ref{eq:supp_lindyn}, but we can interpret the system as having linear Langevin dynamics~\cite{Kardar2007fields}. With these modifications in mind, we are now interested in examining the mean and covariance trajectory statistics in hopes of characterizing the structure of temporal correlations induced by the agent dynamics. We begin by taking the expectation over system trajectories described by Eq.~\ref{eq:supp_lindyn_sol}:
\begin{equation}
    \label{eq:supp_lindyn_mean}
    \begin{split}
    E[x(t)] &= E\Big[\Psi(t,t_0)x^*+\int_{t_0}^{t}\Psi(t,\tau)B(\tau)\cdot \xi d\tau\Big]\\
    &= \Psi(t,t_0) x^*+ E\Big[\int_{t_0}^{t}\Psi(t,\tau)B(\tau)\cdot \xi d\tau\Big]\\
    &=\Psi(t,t_0)x^*.
    \end{split}
\end{equation}
Hence, the expected sample paths of the dynamics will be centered around the autonomous paths of the system---that is, the paths the system takes in the absence of control inputs. We may now characterize the covariance of our system's sample paths. To do so, let $\mathbf{C}[x^*]=E\big[(x(t)-E[x(t)])(x(t)-E[x(t)])^T\big| x(t_0)=x^* \big]$ be the trajectory autocovariance. Although we will formalize this idea in the following section, for now we note that the trajectory-wise expectation is taken as the time-integration of point-wise autocovariances. With these preliminaries taken care of, we have: 
\begin{align}\label{eq:supp_lindyn_cov}
    \mathbf{C}[x^*] = E\big[&(x(t)-E[x(t)])(x(t)-E[x(t)])^T\big| x(t_0)=x^* \big]\nonumber \\
    = E\Big[&\Big(\Psi(t,t_0)x^*+\int_{t_0}^{t}\Psi(t,\tau)B(\tau)\cdot \xi d\tau-E[x(t)]\Big)\nonumber \\
    &\times \Big(\Psi(t,t_0)x^*+\int_{t_0}^{t}\Psi(t,\tau)B(\tau)\cdot \xi d\tau-E[x(t)]\Big)^T\Big]\nonumber \\
    = E\Big[&\Big(\int_{t_0}^{t}\Psi(t,\tau)B(\tau)\cdot \xi d\tau\Big)\Big(\int_{t_0}^{t}\Psi(t,\tau)B(\tau)\cdot \xi d\tau\Big)^T\Big]\nonumber\\
    =E\Big[&\int_{t_0}^{t}\Psi(t,\tau)B(\tau)\cdot (\xi\xi^T)\cdot  B(\tau)^T \Psi(t,\tau)^T d\tau\Big]\nonumber\\
    =\int_{t_0}^{t}&\Psi(t,\tau)B(\tau)B(\tau)^T \Psi(t,\tau)^T d\tau.
\end{align}
By inspection of the above expression and Eq.~\ref{eq:supp_controllability_gramian}, we arrive at the following important connection:
\begin{equation}
    \label{eq:supp_controllability_covariance}
    \mathbf{C}[x^*] = W(t_0,t)
\end{equation}
which tells us that for LTV dynamics (and by extension for linearizable nonlinear dynamics), a measure of temporal correlations---the trajectory autocovariance $\mathbf{C}[x^*]$ (see Supplementary Note~\ref{sec:exploration_sampling})---is exactly equivalent to the controllability Gramian of the system. Thus, for a broad class of systems, an agent's controllability properties are given by a measure of temporal correlations along their state trajectories. Moreover, in LTV systems these are not state-dependent properties. In other words,
\begin{equation}
    \label{eq:supp_controllability_gradient}
    \nabla_x \mathbf{C}[x^*] = \nabla_x W(t_0,t) = \mathbf{0},
\end{equation}
where $\mathbf{0}$ is an appropriately dimensioned zero matrix. However, for linearizable nonlinear systems, as well as more general nonlinear systems, these properties will be state-dependent. While our controllability analysis has been restricted to the class of dynamics describable by linear differential equations with time-varying parameters, we note that the connections we observe between trajectory autocovariance and controllability Gramians have been shown to hold for even more general classes of nonlinear systems through more involved analyses~\cite{Kashima2016}. Nonetheless, the results of our manuscript hold regardless of whether there is a formal and easily characterizable relationship between controllability and temporal correlations.

\begin{figure}[tp!]
    \centering
    \includegraphics[width=0.7\linewidth]{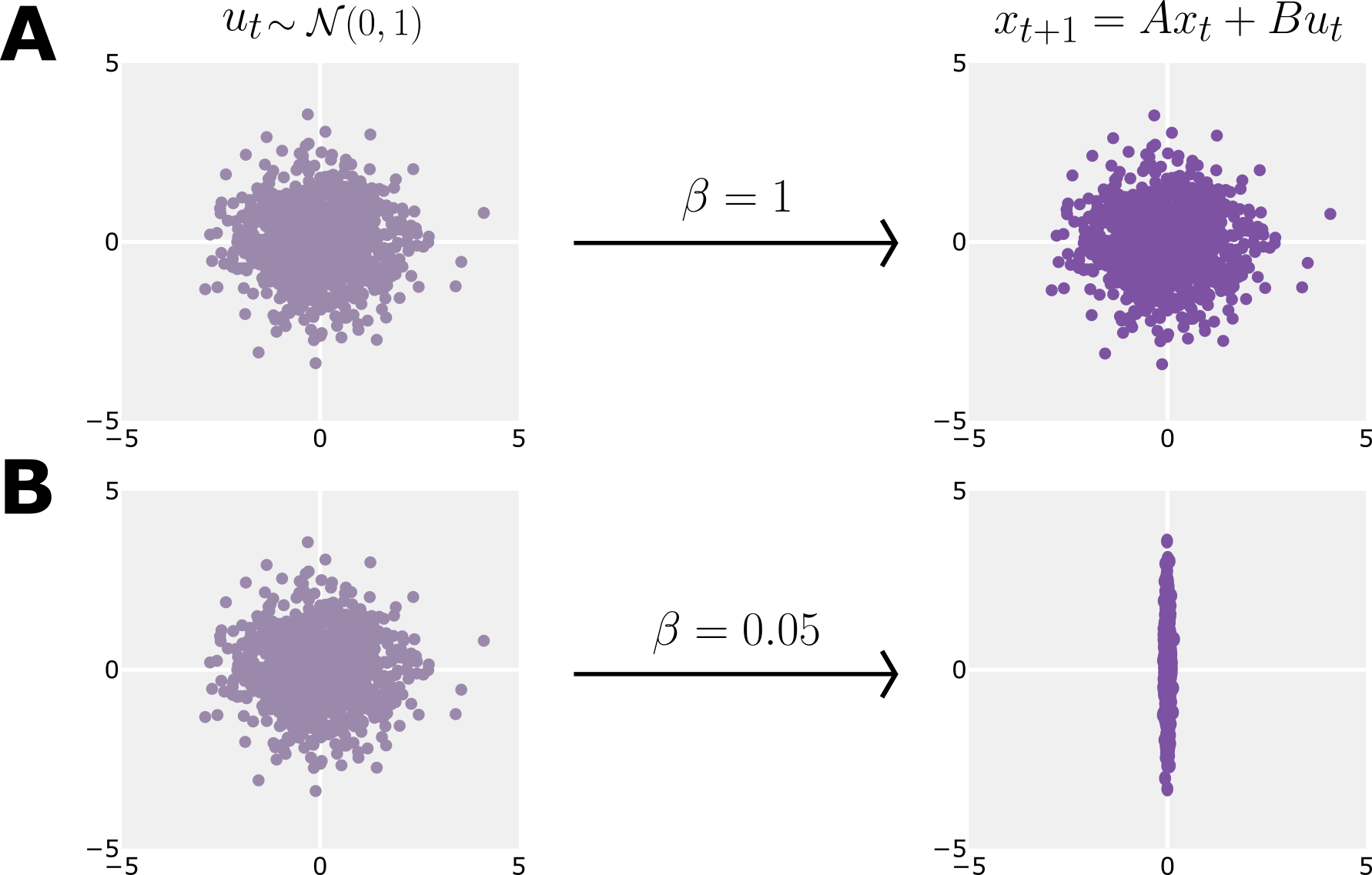}
    \caption{\textbf{Effect of controllability on the distribution of reachable states}. 
    \textbf{a,} For a linear system with dynamics like those in Figure 1 of the main text initialized with an $x_t$ of all zeroes, we depict the effect of controllability on a naive random action exploration strategy. For a linear system with ideal controllabilty properties, isotropic distributions of actions map onto isotropic distributions of states.
    \textbf{b,} However, when the system is poorly conditioned the system dynamics distort the isotropy of the original input distribution, introducing temporal correlations induced by the controllability properties of the system, and fundamentally changing its properties as an exploration strategy.
    }
    \label{fig:supp_isotropic_dists}
\end{figure}

From Eq.~\ref{eq:supp_lindyn_random} we can describe the system's reachable states by analyzing its state probability density function, which can be found analytically by solving its associated Fokker-Planck equation~\cite{Risken1996}. To do this, we only require the mean and covariance statistics of the process, in Eqs.~\ref{eq:supp_lindyn_mean} and~\ref{eq:supp_lindyn_cov}. Hence, the system's time-dependent state distribution is
\begin{equation}
    \label{eq:supp_fokker_planck_stationary}
    p(x, t, t_0) = \frac{1}{\sqrt{(2\pi)^d \det[W(t_0,t)]}} \exp \Big[-\frac{1}{2}\big(x-\Psi(t,t_0)x^*\big)^TW^{-1}(t_0,t)\big(x-\Psi(t,t_0)x^*\big)\Big]
\end{equation}
for some choice of initial conditions at $t_0$, where we have substituted Eq.~\ref{eq:supp_controllability_covariance} to highlight the role of controllability in the probability density of states reachable by the system through naive random exploration. Thus, how easy or hard it is to explore in a given direction (as characterized by Eq.~\ref{eq:supp_fokker_planck_stationary}) is entirely determined by the controllability properties of the system---or, equivalently, by a measure of temporal correlations of its state trajectories. Supplementary Fig.~\ref{fig:supp_isotropic_dists} illustrates this concept for the toy dynamical system introduced in the main text. We observe that changes in $\beta$ have an effect on the distribution of reachable states for the system that are consistent with Eq.~\ref{eq:supp_fokker_planck_stationary}, where we note we recentered the distribution mean. 

On the basis of these results, which have been known for decades~\cite{Mitra1969}, we can clearly see that controllability and temporal correlations play a key role in exploration and data acquisition. We cannot assume that random inputs are capable of producing effective exploration of system states without an understanding of its controllability. For example, if $W(t_0,t)$ is not full-rank, then exploration would be restricted to a linear subspace of an agent's exploration domain. This amounts to a complete collapse of the \textit{i.i.d.} assumption on the experiences of an agent, because its state transitions become deterministically correlated as a result of the degeneracy of Eq.~\ref{eq:supp_fokker_planck_stationary}. However, if we were to instead design $u(t)$ by exploiting knowledge about $\mathbf{C}[x^*]$, these limitations can be overcome. For example, if we let $u(t)=B(t)^T\mathbf{C}^{-1}[x^*]B(t)\cdot \xi$ instead, then better exploration can be achieved by can be reshaping $p(x,t,t_0)$ in a way that accounts for the system's temporal correlations. This is in fact the conceptual crux of our entire reinforcement learning framework, as we will show.

In more complex settings, where the input distribution is not Gaussian and the dynamics are strongly nonlinear, analyzing controllability may be more challenging. However, insofar as learning requires an embodied agent to either collect data or visit desirable states to optimize some objective, temporal correlations and controllability will continue to play an important role.
\begin{remark}
    \label{rem:controllability}
    Temporal correlations and controllability can determine whether it is possible and how challenging it is to learn.
\end{remark}
\noindent While one can construct proofs that illustrate this in a variety of simplified settings---as others have recently shown in the case of controllability~\cite{Tsiamis2021,Tsiamis2022}---we leave the more general claim as a remark to frame the motivation behind our upcoming derivations. Hence, we should strive to develop exploration and learning strategies that reflect---and try to overcome---the effect of controllability and its induced temporal correlations, as we do in the following sections.

\subsubsection{Exploration as trajectory sampling}
\label{sec:exploration_sampling}
In this section, we develop the mathematical formalism necessary for framing exploration in a controllability-aware manner that may allow us to overcome temporal correlations. While exploration with disembodied agents can be quite simple (e.g., sampling from a distribution, or performing a random walk), embodied agents must achieve exploration by changing the state of the environment through action. Our goal is to achieve state exploration in an embodied system, such as a robotic agent or otherwise, where their embodiment constrains the ways they can explore the states of an environment. While this motivation is most natural for physical systems, our framing is relevant to any setting in which the underlying agent-environment dynamics obey some notion of continuity of experience. To this end, we will need to define a formal notion of control system from which we can begin to model the experiences of agents. 

First, we formally define stochastic processes by adapting the definition provided in~\cite{Oksendal2010} to our use case.
\begin{definition}
    \label{def:stochpross_oksendal}
    A stochastic process is a family of random variables parametrized by a totally ordered indexing set $\mathcal{T}$,
    \begin{equation}
        \{X_t\}_{t\in\mathcal{T}}\notag \text{       when $\mathcal{T}$ is discrete, or       } \{X(t)\}_{t\in\mathcal{T}} \text{       when $\mathcal{T}$ is continuous},
    \end{equation}
    defined on a probability space $(\Omega,\mathcal{F},\mathbb{P})$. We take the sample space $\Omega$ to be measurable, $\mathcal{F}$ to be a Borel $\sigma$-algebra, and $\mathbb{P}$ to be a probability measure. We note that the random variables assume values in a compact state space $\mathcal{X}\subset\mathbb{R}^d$, and that each sample path takes value in a measurable space $\mathcal{X}^{\mathcal{T}}$ with Borel $\sigma$-algebra $\mathcal{B}(\mathcal{X}^{\mathcal{T}})$.
\end{definition}
\noindent Thus, stochastic processes are families of random variables indexed according to some ``time-like'' set, $\mathcal{T}$. For each $\omega \in \Omega$, the sample paths of the stochastic process, $x_{\mathcal{T}}(\omega) = \{X(t,\omega)\}_{t\in\mathcal{T}}$, take value in $\mathcal{X}^\mathcal{T}$. We note that we often take $\mathcal{T}$ to be an interval, e.g., $[t_0,t]$ or a halfline. When $\mathcal{T}$ is discrete, e.g., $\{1,\cdots,N\}$, we have $x_{1:N}(\omega) = \{X_t(\omega)\}_{t\in\{1,\cdots,N\}}$ instead. Then, we can define the pushforward measure of $x_{\mathcal{T}}:\Omega\rightarrow\mathcal{X}^{\mathcal{T}}$ in the usual way. That is, $P_F:\mathcal{B}(\mathcal{X}^{\mathcal{T}})\rightarrow [0,1]$ is given by $P_F[x_{\mathcal{T}}\in A] = \mathbb{P}(x_{\mathcal{T}}^{-1}(A))$ for some $A\subset\mathcal{X}^{\mathcal{T}}$. Finally, for a each $\omega$, we use $x(t)=x_{\mathcal{T}}(\omega)\in\mathcal{X}^\mathcal{T}$ to denote individual realizations of the stochastic process, and refer to $x(t)$ as an agent's experiences, \textit{state trajectories}, or \textit{paths}. To describe the likelihoods of individual state trajectories, we assume that the probability density function associated with the pushforward measure exists and is given by $P:\mathcal{X}^\mathcal{T}\rightarrow [0,\infty)$, such that
\begin{equation}
    P_F[x_{\mathcal{T}}\in A] = \mathbb{P}(x_{\mathcal{T}}^{-1}(A)) = \int_{x_{\mathcal{T}}^{-1}(A)}d\mathbb{P}(\omega) = \int_{A}P[x(t)]\mathcal{D}x(t)
\end{equation}
where $\mathcal{D}x(t)$ denotes integration over sample paths, as in the Feynman path integral formalism~\cite{Feynman2010}. Thus, we will refer to this density over paths as the \textit{path} or \textit{trajectory distribution}, and use $P[x(t)]$ to express the probability density of a given state trajectory of the stochastic process $x(t)\in\mathcal{X}^\mathcal{T}$. Alternatively, we use $P[x_{1:N}]$ when $\mathcal{T}$ is discrete.

To quantify correlations along sample paths or state trajectories, we evaluate a local measure of temporal correlations, $\mathbf{C}[x^*]$, over particular time intervals of a given stochastic process, $[t_i,t_i+\Delta t]\subset\mathcal{T}$. If $\{X(t)\}_{t\in\mathcal{T}}$ is a stochastic process defined according to Definition~\ref{def:stochpross_oksendal}, then an autocovariance function, $K_{XX}(t_1,t_2)$,  expresses the covariance of the process with itself at any two points in time $t_1,t_2\in\mathcal{T}$, or
\begin{equation}
    \label{eq:autocov_fn}
    K_{XX}(t_1,t_2) = E\big[(X(t_1)-E[X(t_1)])(X(t_2)-E[X(t_2)])^T \big].
\end{equation}
With these preliminaries taken care of, we define our measure of temporal correlations with respect to an initial condition $X(t_i)=x^*$ for some $x^*\in\mathcal{X}$ in the following way:
\begin{align}
    \mathbf{C}[x^*] &=  E\big[(X(t)-E[X(t)])(X(t)-E[X(t)])^T\big| X(t_i)=x^* \big] \notag \\
    &= \int_{t_i}^{t_i+\Delta t}K_{XX}(t_i,\tau)d\tau.
    \label{eq:temporal_correlations}
\end{align}
Thus, our measure of temporal correlations $\mathbf{C}[x^*]$ could also be characterized as an integrated autocovariance function along the state trajectories of a stochastic process. This usage of the term ``temporal correlations'' is in line with its broad usage in statistical mechanics (see Ch.~10 of~\cite{Sethna2021}), and we note that in practice one can divide Eq.~\ref{eq:temporal_correlations} by $\Delta t$ to prevent numerical estimates from strongly depending on the duration of the time-interval under consideration.

Lastly, it is important to note that the probability densities over state trajectories are strongly dependent on the dynamics that govern the agent-environment's time-evolution through state space. However, when the dynamics are nonautonomous, as is the case in control systems, this distribution will also depend on the choice of controller and the effect it has on the state transitions of the process. We define a controller as a function, $u(t):\mathcal{T} \rightarrow \mathcal{U}$, that produces an input to the system dynamics at every point in the index set, where $\mathcal{U}$ is usually a subset of $\mathbb{R}^m$. At this point, we are not considering the system dynamics themselves, how controllers are synthesized, or how much influence either of these can have in shaping the sample paths of the underlying control system. All we care about is acknowledging the fact that a choice of controller induces a different probability density over sample paths. With these definitions we can now establish our notion of control system, or stochastic control process. 

\begin{definition}
\label{def:control_system}
    A stochastic control process is a stochastic process (Definition~\ref{def:stochpross_oksendal}) on a probability space $(\Omega, \mathcal{F}, \mathbb{P}_{u(t)})$, with indexing set $\mathcal{T}$, where sample paths take value in a measurable space $(\mathcal{X}^{\mathcal{T}},\mathcal{B}(\mathcal{X}^{\mathcal{T}}))$, and the resulting density $P_{u(t)}:\mathcal{X}^\mathcal{T}\rightarrow [0,\infty)$ is parametrized by a controller $u(t):\mathcal{T} \rightarrow \mathcal{U}$.
\end{definition}
\noindent Thus, we think of control systems as stochastic processes that are parametrized by their controllers, or equivalently as a collection of distinct stochastic processes for each choice of controller.

In a stochastic control process the controller plays an important role in structuring the sample paths of a system---clearly, the sample path distribution of a robot with a controller that resists all movements is very different than one with a controller that encourages the robot to explore (see Supplementary Fig.~\ref{fig:supp_dists} for an illustration). Hence, controllers determine which regions of the state space the system is capable of sampling from. With this in mind, we can express the problem of exploration in control systems: to design a controller that maximizes the regions of the exploration domain from which we can sample trajectories. In part, this requires the use of control actions in order to maximize the support of the agent's sample path distribution. The support of a probability distribution is the subset of all elements in its domain with greater than zero probability. However, merely maximizing the sample path distribution's support is not enough to realize effective exploration in most settings. For directed state exploration, we would ideally also like to control \textit{how} probability mass is spread around the state space---if a given task demands that the agent's sample paths are biased towards a given goal, then our agent's path distribution should reflect this. In the following sections, we will work towards this goal of deriving path distributions for optimal undirected and directed exploration strategies. 

\begin{figure}[tp!]
    \centering
    \includegraphics[width=0.95\linewidth]{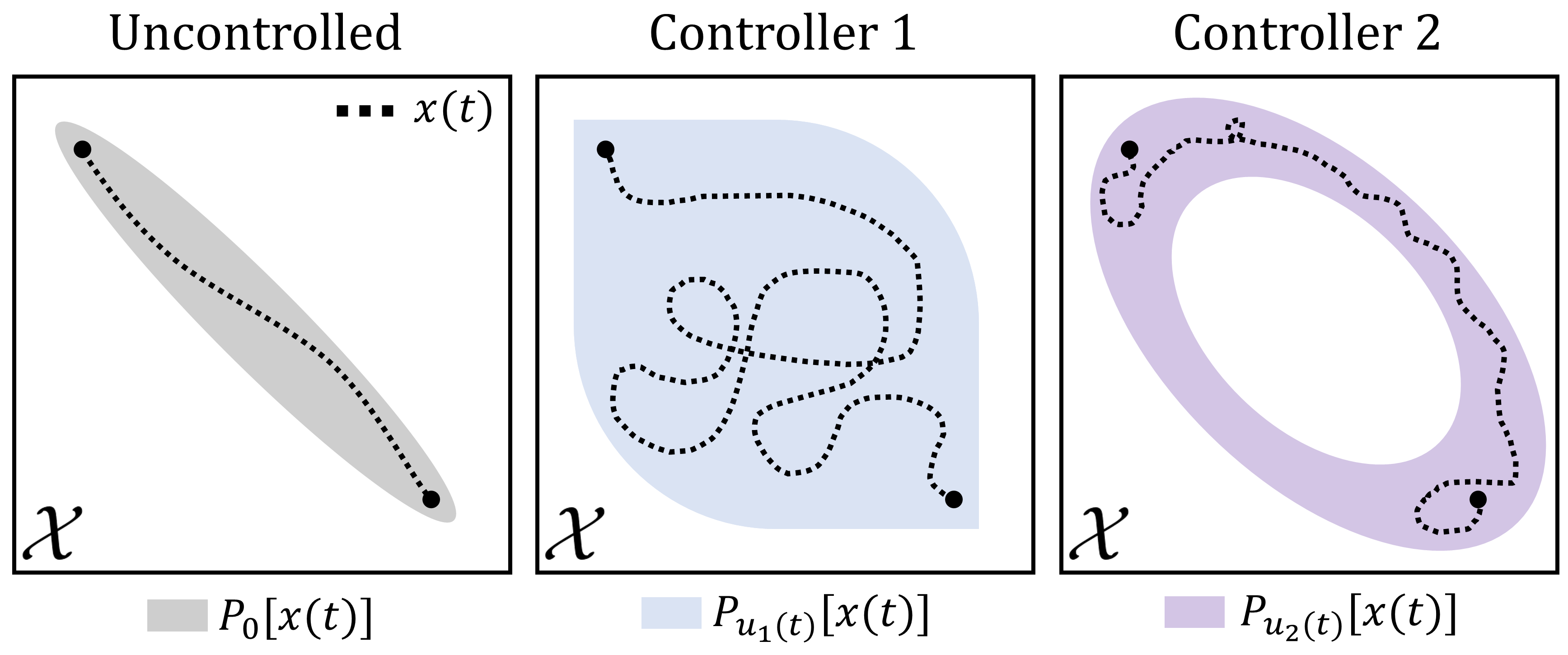}
    \caption{\textbf{Effect of controllers on the sample path distribution of stochastic control processes}. (left) Sample path and support of the probability density over the paths of an autonomous stochastic process (i.e., with null controller ``0''). (middle and right) Sample paths and distributions induced by two distinct controllers $u_1(t)$ and $u_2(t)$. Here, we illustrate that depending on the nature of the controller the distribution over sample paths can be nontrivial. Note that we do not illustrate the values of the probability densities, only their support. The reason for this is that so long as a regions of space have non-zero probability they will be sampled asymptotically.}
    \label{fig:supp_dists}
\end{figure}

\subsubsection{Undirected exploration as variational optimization}
\label{sec:exploration_opt_problem}
One way of simultaneously controlling the spread of probability mass and the support of a probability distribution is to optimize its entropy~\cite{Grendar2006}. For now, we consider the undirected exploration case, in which no task or objective biases the underlying agent's path distribution. As we will see in Supplementary Note~\ref{sec:directed_exploration}, this approach will also enable us to control the spread of probability mass in a more fine-grained manner in order to realize directed exploration with respect to an objective or task---and eventually to do reinforcement learning. 

Optimizing the entropy of an agent's path distribution through control synthesis can have a profound effect on the resulting behavior of the agent. This can be understood intuitively when there are no constraints on how we can increase the entropy of a sample path distribution. In this case, the maximum entropy distribution would be uniform over the entirety of the system's compact state space, leading to complete asymptotic exploration of the domain in a way that is equivalent to \textit{i.i.d.} uniform sampling. However, a process realizing the statistics described by such a path distribution would require teleportation---that is, that points in space be visited uniformly at random at every moment in time. While this may pose no problems for disembodied agents with unconstrained dynamics, this creates issues for any agent whose experiences are constrained by their embodiment or otherwise. For example, in physical control systems subject to the laws of physics, this is infeasible behavior. Hence, throughout the rest of this section we will take on the work of deriving the maximum entropy distribution for describing the state trajectories of agents with continuous experiences---a broad class of systems that includes all physical systems and many non-physical systems---as well as analyzing the formal properties of agents whose experiences satisfy such statistics. By maximizing trajectory entropy, this distribution will capture the statistics of an agent with minimally-correlated experiences. The analytical form of this distribution is crucial to the control and policy synthesis approach we derive in Supplementary Note~\ref{sec:maxdiff_synthesis}. However, we note that our results will also apply for disembodied agents with discontinuous paths when we consider the uniform distribution as the optimal distribution instead of the one we derive in this section.

We proceed by identifying the analytical form of the maximum entropy path distribution with no consideration given to the problem of generating actions that achieve such statistics. Hence, we begin by framing our exploration problem in the maximum caliber formalism of statistical mechanics~\cite{Jaynes1957,Dill2018,Chvykov2021}. Maximum caliber is a generalization of the principle of maximum entropy to function spaces, such as distributions over trajectories or sample paths. In order to apply the principle of maximum caliber within our stochastic process formalism, we first note that we interpret path integrals in the following way. Consider some real-valued function $f(\cdot)$ of $x_{\mathcal{T}}$, then we define its expectation over sample paths as
\begin{equation}
    \label{eq:stochastic_path_integral_interpretation}
    E[f(x_{\mathcal{T}})] = \int_{\Omega}f(x_{\mathcal{T}}(\omega))d\mathbb{P}(\omega) = \int_{\mathcal{X}^{\mathcal{T}}}P[x(t)]f(x(t))\mathcal{D}x(t),
\end{equation}
which is consistent with our definition of probability densities over state trajectories, $P[x(t)]$. Thus, path integrals integrate over the state trajectories of a stochastic process. Now, we are interested in finding a distribution which maximizes the entropy of sample paths, $S[P[x(t)]]$. Because we are looking for the unique analytical form of this distribution, we omit the controller-specific notation that was previously introduced---at least until we consider control and policy synthesis in the context of stochastic optimal control and reinforcement learning in Supplementary Note~\ref{sec:maxdiff_synthesis}. The general form of the maximum caliber variational optimization is then the following:
\begin{equation}\label{eq:supp_objective1}
    \underset{P[x(t)]}{\text{argmax}} \  -\int_{\mathcal{X}^{\mathcal{T}}} P[x(t)]\log P[x(t)] \mathcal{D}x(t)
\end{equation}
However, as written the optimization is ill-posed and leads to a trivial solution. We can see this by taking the variation with respect to the sample path distribution, where we would find that the optimal sample path distribution is uniform, yet not a valid probability density as it is unnormalized. Thus, we need to constrain the optimization problem so that we only consider behavior realizable by the class of agents we are interested in modeling.

Since we are interested in framing our exploration problem for application domains like optimal control and reinforcement learning, we tailor our modeling assumptions to these settings. What sorts of principled constraints could be applied? No constraints based on conservation of energy are applicable because autonomous systems are inherently nonequilibrium systems. Nonetheless, the behavior of many autonomous systems (especially physically embodied ones) is constrained by other aspects of their morphology, such as actuation limits and continuity of movement. In particular, the rates at which agent experiences or states can vary---and \textit{co-vary}---in time are typically bounded, which prevents them from discontinuously jumping between states by limiting their local rate of exploration. In fact, this is precisely what we found in Supplementary Note~\ref{sec:controllability}, where we saw that a system's ability to explore is closely tied to a measure of its temporal correlations, $\mathbf{C}[x^*]$, as defined in the previous section. Thus, we will choose to constrain the velocity fluctuations of our stochastic process so that they are finite and consistent with the integrated autocovariance statistics of the process, which may be determined empirically, and are related to a system's controllability properties in a broad class of systems. The use of an empirical (or learned) autocovariance estimate to quantify velocity fluctuations is important because different embodied agents have different limitations, which may additionally be spatially inhomogeneous and difficult to know a priori. Through this constraint, we can ensure that agent sample paths are continuous in time.

To formulate this path continuity constraint, we must first express the system's velocity fluctuations at each point in state space, $x^*\in\mathcal{X}$. We define the system's velocity fluctuations along sample paths $x(t)$ in the following way:
\begin{equation}\label{eq:supp_explore_rate}
    \langle\dot{x}(t)\dot{x}(t)^T\rangle_{x^*} = \int_{\mathcal{X}^{\mathcal{T}}} P[x(t)] \int_{\mathcal{T}} \dot{x}(\tau)\dot{x}(\tau)^T \delta (x(\tau)-x^*) d\tau\mathcal{D}x(t), 
\end{equation}
where $\delta(\cdot)$ denotes the Dirac delta function, and we note that the $\langle\cdot\rangle$ notation of statistical physics is equivalent to an expectation, i.e., $E[\cdot]$. We assume that the tensor described by Eq.~\ref{eq:supp_explore_rate} is full-rank so that the system's velocity fluctuations are not degenerate anywhere in the state space of the stochastic process. This assumption is crucial because it guarantees that our resulting path distribution is non-degenerate. If we had instead chosen to constrain the system by directly bounding the magnitude of its velocities, as opposed to its velocity fluctuations, we would not be able to guarantee the non-degeneracy of the resulting path distribution. Another important note is that the velocities of the trajectories of the stochastic process in this expression should be interpreted in the Langevin sense~\cite{Kardar2007fields}. That is to say, not as expressions of the differentiability of the sample paths of the underlying stochastic process, but as a shorthand for an integral representation of the stochastic differential equations describing the evolution of the sample paths of the system. 

We can now express our constraint as,
\begin{equation}\label{eq:supp_constraint1}
    \langle\dot{x}(t)\dot{x}(t)^T\rangle_{x^*} = \mathbf{C}[x^*], \quad \forall x^* \in \mathcal{X}.
\end{equation}
Crucially, these statistics are bounded everywhere in the exploration domain, and we assume them to satisfy Lipschitz continuity so that their spatial variations are bounded. We note that linearizability of the underlying agent dynamics is a sufficient condition to satisfy this property. Hence, we now have equality constraints on the system's velocity fluctuations that can vary at each point in the exploration domain---as one would expect for a complex embodied system, such as a robot. As an additional constraint, we require that $P[x(t)]$ integrates to 1 so that it is a valid probability density over trajectories.

With expressions for each of our constraints, we may now express the complete variational optimization problem using Lagrange multipliers:
\begin{align}
    \label{eq:supp_objective2}
   \underset{P[x(t)]}{\text{argmax}} \ -\int_{\mathcal{X}^\mathcal{T}} & P[x(t)] \log P[x(t)]\mathcal{D}x(t) -\lambda_0 \Big(\int_{\mathcal{X}^\mathcal{T}} P[x(t)] \mathcal{D}x(t)-1 \Big)  \\ 
   &-\int_{\mathcal{X}} Tr\Big(\Lambda(x^*)^T\big(\langle\dot{x}(t)\dot{x}(t)^T\rangle_{x^*} - \mathbf{C}[x^*]\big)\Big)dx^*, \notag
\end{align}
Here, we express the constraints at all points $x^*$ by taking an integral over all points in the domain. The $\lambda_0$ is a Lagrange multiplier enforcing our constraint that ensures valid probability densities, and $\Lambda(\cdot)$ is a matrix-valued Lagrange multiplier working to ensure that the rate of exploration constraints hold at every point in the domain. By solving this optimization we can obtain an expression for the maximum entropy distribution over sample paths. The solution to this problem will determine the distribution over sample paths with the greatest support, with the most uniformly spread probability mass, and with the least-correlated sample paths---thereby specifying the statistical properties of our optimal undirected exploration strategy, subject to a path continuity constraint.

\subsubsection{Maximizing path entropy produces diffusion}
\label{sec:exploration_diffusion}
In this section, we lay out the derivation of our solution to the variational optimization problem in Eq.~\ref{eq:supp_objective2}. We begin by stating our main result in the following theorem.

\begin{theorem}
\label{thm:diffusion}
    The maximum caliber sample paths of a stochastic control process (Definition~\ref{def:control_system}) with a maximum entropy exploration (in the sense of Eq.~\ref{eq:supp_objective2}) are given by diffusion with spatially-varying coefficients. 
\end{theorem}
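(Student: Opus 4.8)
The plan is to solve the variational problem in Eq.~\ref{eq:supp_objective2} directly by computing its stationarity condition and then identifying the resulting path distribution as the path-probability functional of a diffusion process. First I would substitute the definition of the velocity fluctuations, Eq.~\ref{eq:supp_explore_rate}, into the constraint term of the Lagrangian so that the entire objective becomes an explicit functional of $P[x(t)]$ alone. The constraint term then reads $-\int_{\mathcal{X}}\mathrm{Tr}\big(\Lambda(x^*)^T\int_{\mathcal{X}^{\mathcal{T}}}P[x(t)]\int_{\mathcal{T}}\dot{x}(\tau)\dot{x}(\tau)^T\delta(x(\tau)-x^*)\,d\tau\,\mathcal{D}x(t)\big)dx^*$, with the $\mathbf{C}[x^*]$ piece contributing only a $P$-independent constant.

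Next I would take the functional derivative $\delta/\delta P[x(t)]$ and set it to zero. The entropy term yields $-\log P[x(t)]-1$, the normalization term contributes $-\lambda_0$, and in the constraint term the Dirac delta collapses the spatial integral over $x^*$ onto the path itself, giving $-\int_{\mathcal{T}}\mathrm{Tr}\big(\Lambda(x(\tau))^T\dot{x}(\tau)\dot{x}(\tau)^T\big)d\tau=-\int_{\mathcal{T}}\dot{x}(\tau)^T\Lambda(x(\tau))\dot{x}(\tau)\,d\tau$. Solving the stationarity equation for $P[x(t)]$ then produces the exponential-of-quadratic-action form $P_{max}[x(t)]=\frac{1}{Z}\exp\big[-\int_{\mathcal{T}}\dot{x}(\tau)^T\Lambda(x(\tau))\dot{x}(\tau)\,d\tau\big]$, where $Z=\exp(1+\lambda_0)$ and the symmetric matrix field $\Lambda(x)$ remains to be fixed.

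To pin down $\Lambda$, I would enforce the constraint Eq.~\ref{eq:supp_constraint1}: computing the local velocity covariance $\langle\dot{x}\dot{x}^T\rangle_{x^*}$ under this distribution and demanding it equal $\mathbf{C}[x^*]$. Since the action is quadratic in $\dot{x}$, this is a state-dependent Gaussian path integral whose local second moment $\langle\dot{x}\dot{x}^T\rangle$ is set by the inverse of the coefficient matrix; fixing $\Lambda(x)=\tfrac{1}{2}\mathbf{C}^{-1}[x]$ to match the second-moment relation recovers exactly Eq.~\ref{eq:undirectedpathdist}. The final step is to recognize this functional as the path measure of an anisotropic, spatially-inhomogeneous diffusion: discretizing $\mathcal{T}$ into steps of size $\Delta t$, the quadratic action factorizes over increments into a product of Gaussian transition kernels $p(x_{t+\Delta t}\mid x_t)\propto\exp[-\tfrac{1}{2\Delta t}(x_{t+\Delta t}-x_t)^T\mathbf{C}^{-1}[x_t](x_{t+\Delta t}-x_t)]$, which are precisely the Euler--Maruyama transition densities of the It\^o SDE $dx=\sqrt{\mathbf{C}[x]}\,dW$ with spatially-varying diffusion tensor $\mathbf{C}[x]$, establishing the claim (and, en route, the factorization that underlies the Markov structure of Corollary~\ref{cor:markov}).

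The hardest part will be making the state-dependence of the noise rigorous in the last two steps. Because $\mathbf{C}$ varies along the path, the Gaussian path integral carries a configuration-dependent normalization $\propto(\det\mathbf{C}[x])^{-1/2}$ per time slice, and the naive continuum action acquires Onsager--Machlup correction terms together with the familiar It\^o-versus-Stratonovich ambiguity in the drift. I would handle this by working with the time-discretized finite-dimensional marginals throughout, absorbing the slice-wise $\det\mathbf{C}$ factors into $Z$ and verifying that the second-moment matching is unaffected at leading order in $\Delta t$, so that both the constraint and the diffusion identification survive the continuum limit.
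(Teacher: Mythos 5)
Your proposal is correct, and its first half coincides exactly with the paper's own proof: substituting Eq.~\ref{eq:supp_explore_rate} into Eq.~\ref{eq:supp_objective2}, taking the functional derivative, collapsing the Dirac delta via the trace identity, and solving the stationarity condition to get the exponential-of-quadratic-action form of Eq.~\ref{eq:supp_var_sol1}. Note that the paper's formal proof of Theorem~\ref{thm:diffusion} actually stops there---it declares the theorem proven once that form is recognized (by citation to the statistical-physics literature) as the path weight of an anisotropic, spatially-inhomogeneous diffusion, and defers the characterization of $\Lambda$ to the text following the proof. Where you genuinely diverge is in how $\Lambda$ is pinned down: the paper runs a lengthy time-sliced computation (Fubini--Tonelli to exchange sum and product, factoring out the probability flow into and out of the slice at $x^*$, and a Gaussian integration by parts, Eqs.~\ref{eq:supp_explore_rate_discrete}--\ref{eq:supp_lagrange_mult}), whereas you exploit the slice-wise Gaussian structure directly and match second moments in one step, obtaining $\Lambda=\tfrac{1}{2}\mathbf{C}^{-1}$. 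Your route is more elementary and cleaner on one point: the paper's calculation yields $\Lambda=\mathbf{C}^{-1}$ and then inserts the factor of one half by hand in Eq.~\ref{eq:supp_var_sol2} to match the diffusion convention, while in your version the $\tfrac{1}{2}$ emerges from the moment matching itself. Your explicit Euler--Maruyama identification of the discretized kernels with the It\^o SDE $dx=\sqrt{\mathbf{C}[x]}\,dW$ is also more concrete than the paper's citation, and your final paragraph flags real subtleties the paper elides: the slice-wise state-dependent normalization $\propto(\det\mathbf{C}[x])^{-1/2}$ (which the paper absorbs into $Z_d$ even though it varies with state), the Onsager--Machlup corrections, and the It\^o-versus-Stratonovich ambiguity; the paper's only hedge against these is its standing Lipschitz and boundedness assumptions on $\mathbf{C}$, so your plan of working with finite-dimensional marginals and checking moment matching at leading order in $\Delta t$ makes that step more honest than the original.
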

\begin{proof}
    Letting $\mathcal{T}=[t_0,t]$, we begin by substituting Eq.~\ref{eq:supp_explore_rate} into Eq.~\ref{eq:supp_objective2}, taking its variation with respect to the probability density $\delta S[P[x(t)]]/\delta P[x(t)]$, and setting it equal to 0:
    \begin{equation}\nonumber
        \frac{\delta S}{\delta P[x(t)]} = -1 - \log P_{max}[x(t)] -  \lambda_0 -\int_{\mathcal{X}} \int_{t_0}^{t} Tr\Big(\Lambda(x^*)^T (\dot{x}(\tau)\dot{x}(\tau)^T)\Big) \delta(x(\tau)-x^*) d\tau dx^*=0.
    \end{equation}
    Then, taking advantage of the following linear algebra identity, $a^TBa=Tr(B^T(aa^T))$, for any $a\in \mathbb{R}^m$ and $B\in\mathbb{R}^{m\times m}$; as well as the properties of the Dirac delta, we can simplify our expression to the following: 
    \begin{equation}\nonumber
        \frac{\delta S}{\delta P[x(t)]} = -1 - \log P_{max}[x(t)] -  \lambda_0 -\int_{t_0}^{t} \dot{x}(\tau)^T\Lambda(x(\tau))\dot{x}(\tau) d\tau=0,
    \end{equation}
    which allows us to solve for the maximum entropy probability distribution over the sample paths of our stochastic control process. The solution will then be of the form:
    \begin{equation}\label{eq:supp_var_sol1}
        P_{max}[x(t)] = \frac{1}{Z} \exp\Big[-\int_{t_0}^{t} \dot{x}(\tau)^T\Lambda(x(\tau))\dot{x}(\tau) d\tau\Big],
    \end{equation}
    where we have subsumed the constant and Lagrange multiplier, $\lambda_0$, into a normalization factor, $Z$. We note that even without determining the form of our Lagrange multipliers, the maximum entropy probability density in Eq.~\ref{eq:supp_var_sol1} is already equivalent to the path probability of a diffusing particle with a (possibly anisotropic) spatially-inhomogeneous diffusion tensor (see~\cite{Kardar2007fields}, Ch. 9). While there is more work needed to characterize the diffusion tensor of this process, $\Lambda^{-1}(\cdot)$, this completes our proof.
\end{proof}

Thus, the least-correlated sample paths, which optimally sample from the exploration domain, are statistically equivalent to diffusion. This is to say that the distribution of paths with the greatest support over the state space describes the paths of a diffusion process. Hence, if the goal of some stochastic control process is to optimally explore and sample from its state space, the best strategy is to move randomly---that is, to decorrelate its sample paths. An additional benefit of our diffusive exploration strategy is that we did not have to presuppose that our agent dynamics were Markovian or ergodic. Instead, we find that these properties emerge through our derivation as intrinsic properties of the optimal exploration strategy itself. The following corollaries of Theorem~\ref{thm:diffusion} follow from the connection to diffusion processes and Markov chains, and as such more general forms of these proofs may be found in textbooks on stochastic processes and ergodic theory. Here, we assume that the diffusion tensor in Eq.~\ref{eq:supp_var_sol1}, $\Lambda^{-1}(\cdot)$, is full-rank and invertible everywhere in the state space. Additionally, for now we will assume that $\Lambda^{-1}(\cdot)$ is Lipschitz and bounded everywhere on $\mathcal{X}$. We will later find that these are not in fact different assumptions from those made in Eqs.~\ref{eq:supp_explore_rate} and~\ref{eq:supp_constraint1}.

\begin{corollary}
\label{cor:markov}
    The sample paths of a stochastic control process (Definition~\ref{def:control_system}) with a maximum entropy exploration strategy (in the sense of Eq.~\ref{eq:supp_objective2}) satisfy the Markov property. 
\end{corollary}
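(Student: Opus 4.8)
The plan is to exploit the \emph{locality in time} of the action functional in Eq.~\ref{eq:supp_var_sol1}. Its exponent, $\int_{t_0}^{t}\dot{x}(\tau)^T\Lambda(x(\tau))\dot{x}(\tau)\,d\tau$, has an integrand depending only on the instantaneous position $x(\tau)$ and velocity $\dot{x}(\tau)$, with no coupling between values of the path at distinct, non-adjacent times. The Markov property should follow directly from this structure, since a local action is additive over any partition of the time interval, and additivity of the exponent translates into multiplicative factorization of the path density.

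Concretely, I would first fix an intermediate time $s\in(t_0,t)$ and split the integral as $\int_{t_0}^{s}(\cdots)\,d\tau+\int_{s}^{t}(\cdots)\,d\tau$, so that $P_{max}[x(t)]$ factorizes into a ``past'' functional depending only on $\{x(\tau):\tau\le s\}$ and a ``future'' functional depending only on $\{x(\tau):\tau\ge s\}$. These two pieces communicate solely through the shared endpoint $x(s)$. Conditioning on the entire past trajectory therefore produces a conditional law for the future that depends on the past only through $x(s)$, which is exactly the Markov property.

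To make this rigorous rather than formal, I would pass to finite-dimensional distributions by discretizing $\mathcal{T}$ into $t_0=\tau_0<\tau_1<\cdots<\tau_N=t$ with spacing $\Delta t$. Replacing $\dot{x}(\tau_k)$ by $(x_{k+1}-x_k)/\Delta t$ turns the exponentiated action into a product $\prod_{k}\exp[-\tfrac{1}{\Delta t}(x_{k+1}-x_k)^T\Lambda(x_k)(x_{k+1}-x_k)]$ of nearest-neighbor factors. Each factor defines a Gaussian one-step kernel $p(x_{k+1}\mid x_k)$ depending only on the preceding state, so the joint density is precisely that of a discrete-time Markov chain; taking $\Delta t\to 0$ with the full-rank, Lipschitz $\Lambda^{-1}(\cdot)$ recovers the It\^o diffusion whose path law is Eq.~\ref{eq:supp_var_sol1}, and such diffusions are Markov by construction.

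The main obstacle I anticipate is not the factorization but the analytic status of the velocities $\dot{x}(\tau)$: the sample paths of a diffusion are nowhere differentiable, so both the action and its factorization are only formally defined at the level of the continuous path integral. I would handle this exactly as the surrounding text prescribes---interpreting $\dot{x}$ in the Langevin sense and reading Eq.~\ref{eq:supp_var_sol1} as shorthand for the associated stochastic differential equation---so that the rigorous content lives entirely in the discretized finite-dimensional distributions, where the nearest-neighbor factorization is unambiguous and the Markov property follows from standard results on diffusion processes.
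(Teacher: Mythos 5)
Your proposal is correct and follows essentially the same route as the paper's proof: the paper likewise discretizes the path distribution in time so that the action over a short interval $[t,t+\delta t]$ reduces to a one-step Gaussian kernel $p_{max}(x_{t+\delta t}|x_t)\propto \exp\big[-|x_{t+\delta t}-x_t|^2_{\Lambda(x_t)}\big]$ depending only on the current state, and it also notes the alternative Langevin-diffusion reading you invoke for the ill-defined velocities. Your added continuous-time past/future factorization at an intermediate time $s$ is a nice motivating gloss, but the rigorous content of both arguments is the identical nearest-neighbor discretization.
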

\begin{proof}
    This follows trivially from the temporal discretization of our path distribution in Eq.~\ref{eq:supp_var_sol1}, or alternatively from the properties of Langevin diffusion processes. Letting $x_t$ be the initial condition, we can see that,
    \begin{align}
        \label{eq:supp_path_discretization}
        p_{max}(x_{t+\delta t}|x_t) &= \frac{1}{Z}\exp\Big[-\int_{t}^{t+\delta t} \dot{x}(\tau)^T\Lambda(x(\tau))\dot{x}(\tau) d\tau\Big] \notag\\
        &\approx \frac{1}{Z_d} \exp\Big[-|x_{t+\delta t}-x_t|_{\Lambda(x_t)}^2\Big],
    \end{align}
    where we subsumed $\delta t$ into a new normalization constant $Z_d$ for convenience, and note that the support of $p_{max}(x_{t+\delta t}|x_t)$ is infinite. Importantly, our local Lagrange multiplier $\Lambda(x_t)$ enforces our velocity fluctuation constraint within a neighborhood of states reachable from $x_t$ for a sufficiently small time interval $\delta t$, which is guaranteed by our Lipschitz continuity assumption. In what remains of this manuscript we use $\delta t=1$ for notational convenience, but without loss of generality. Thus, our distribution in Eq.~\ref{eq:supp_path_discretization} depends only on the current state, which concludes our proof.
\end{proof}

\begin{corollary}
\label{cor:ergodicity}
    A stochastic control process (Definition~\ref{def:control_system}) in a compact and connected space $\mathcal{X}\subset\mathbb{R}^d$ with a maximum entropy exploration strategy (in the sense of Eq.~\ref{eq:supp_objective2}) is ergodic. 
\end{corollary}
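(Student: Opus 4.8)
The plan is to establish ergodicity by verifying that the maximally diffusive process, viewed through its discretized transition kernel from Corollary~\ref{cor:markov}, is a positive Harris recurrent, aperiodic Markov chain on the compact connected domain $\mathcal{X}$, and then to invoke the ergodic theorem for Markov chains (equivalently, Birkhoff's theorem applied to the resulting stationary process). The key structural fact I would lean on is that the one-step transition density
\[
p_{max}(x_{t+1}\mid x_t) \propto \exp\big[-|x_{t+1}-x_t|_{\Lambda(x_t)}^2\big]
\]
is strictly positive everywhere on $\mathcal{X}\times\mathcal{X}$: because $\Lambda(x_t)$ is full-rank and bounded (by the non-degeneracy and Lipschitz assumptions inherited from Eqs.~\ref{eq:supp_explore_rate} and~\ref{eq:supp_constraint1}), the quadratic form is finite and positive-definite, so this Gaussian-type kernel assigns positive probability density to every target state in the bounded domain.

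First I would use this strict positivity together with connectedness of $\mathcal{X}$ to conclude $\psi$-irreducibility with respect to Lebesgue measure restricted to $\mathcal{X}$: from any starting state every Borel set of positive measure is reached in a single step with positive probability. The same positivity immediately rules out any cyclic decomposition of the state space, so the chain is aperiodic. Next I would exploit compactness of $\mathcal{X}$ to obtain tightness and the Feller property (continuity of $x_t\mapsto p_{max}(\cdot\mid x_t)$ follows from Lipschitz continuity of $\Lambda$), so that the Krylov--Bogolyubov argument yields at least one invariant probability measure $\mu$; $\psi$-irreducibility then forces its uniqueness. Because $\mathcal{X}$ is compact, the entire space serves as a small (petite) set, which upgrades irreducibility to positive Harris recurrence.

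With a unique invariant measure $\mu$, positive Harris recurrence, and aperiodicity established, I would invoke the ergodic theorem for Markov chains: for any $\mu$-integrable observable $f$, the time average $\tfrac{1}{N}\sum_{t=0}^{N-1} f(x_t)$ converges almost surely to the ensemble average $\int_{\mathcal{X}} f\, d\mu$. This is precisely the equivalence between the statistics of individual trajectories and those of an ensemble that defines ergodicity in the main text, which completes the proof.

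I expect the main obstacle to be the treatment of the boundary of the compact domain $\mathcal{X}$. The kernel inherited from Corollary~\ref{cor:markov} is written as a Gaussian over all of $\mathbb{R}^d$ with infinite support, whereas here the process must remain confined to the compact set $\mathcal{X}$. I would therefore need to specify the boundary behavior (for instance a reflecting boundary, or a renormalization of the kernel onto $\mathcal{X}$) so that the confined process is a well-defined, conservative Markov chain whose kernel is still strictly positive. The full-rank ($\Lambda^{-1}$ invertible) and Lipschitz-bounded assumptions are exactly what guarantee that this confined kernel retains both positivity and the Feller property, so that the recurrence and uniqueness arguments go through without the process leaking mass or developing a degenerate, non-exploring direction along the boundary.
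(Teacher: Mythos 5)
Your proposal is correct, but it takes a genuinely different route from the paper. The paper's proof discretizes in both time and space: invoking Corollary~\ref{cor:markov}, it treats the process as a Markov chain on a \emph{finite} state space obtained from compact $\mathcal{X}$, argues that strict positivity of $p_{max}(x_{t+1}|x_t)$ together with connectedness makes all states communicate, uses $p_{max}(x^*|x^*)>0$ for aperiodicity, and concludes by citing the elementary ergodic theorem for finite aperiodic irreducible chains (Puterman). You instead stay on the continuous compact state space and run the general-state-space machinery: strict positivity of the Gaussian-type kernel gives $\psi$-irreducibility and aperiodicity, Lipschitz continuity of $\Lambda$ gives the Feller property, Krylov--Bogolyubov plus compactness yields an invariant measure whose uniqueness follows from irreducibility, and the compact space serving as a petite set upgrades this to positive Harris recurrence, after which the Harris ergodic theorem delivers the time-average/ensemble-average equivalence. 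Your route is heavier but arguably more faithful to the corollary as stated, since the paper's ``compact, resulting in a finite space'' step is really an unargued spatial discretization rather than a literal consequence of compactness; you also explicitly confront the boundary issue---the kernel in Eq.~\ref{eq:supp_path_discretization} has infinite support on $\mathbb{R}^d$ while the process must live on $\mathcal{X}$, so the kernel must be renormalized (or reflected) onto $\mathcal{X}$---a point the paper's proof leaves implicit. One simplification available to you: since the renormalized density is continuous and strictly positive on the compact set $\mathcal{X}\times\mathcal{X}$, it is bounded below by some $\epsilon>0$, which is a one-step Doeblin minorization; this gives uniform ergodicity directly and lets you bypass the Krylov--Bogolyubov and petite-set steps entirely.
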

\begin{proof}
    To prove the ergodicity of the process described by Eq.~\ref{eq:supp_var_sol1}, we use Corollary~\ref{cor:markov} and the properties of $\mathcal{X}$. We begin by discretizing our optimal stochastic control process in time and space such that $P_{max}[x_{1:N}]=\prod_{t=1}^{N-1} p_{max}(x_{t+1}|x_t)$, which we can do without loss of generality as a result of Corollary~\ref{cor:markov} and because $\mathcal{X}$ is compact, resulting in a finite space. Importantly, since $p_{max}(x_{t+1}|x_t)>0, \ \forall x_t,x_{t+1}\in\mathcal{X}, \ \forall t \in \mathcal{T}$, and $\mathcal{X}$ is finite and connected, then all states in $\mathcal{X}$ communicate. Moreover, because for all $x^*\in\mathcal{X}$, $p_{max}(x^*|x^*)>0$, the underlying Markov chain described by the transition kernel is aperiodic. Therefore, the Markov chain describing the stochastic control process is ergodic~\cite{Puterman2014}.
\end{proof}

To finish our derivation and fully characterize the nature of our maximum entropy exploration strategy, we must return to Eq.~\ref{eq:supp_var_sol1} and determine the form of the matrix-valued Lagrange multiplier $\Lambda(\cdot)$. Hence, we will return to our expression for $\langle\dot{x}(t)\dot{x}(t)^T\rangle_{x^*}$ in Eq.~\ref{eq:supp_explore_rate} and discretize our continuous sample paths, which we can do without loss of generality due to Corollary~\ref{cor:markov}. Since Eq.~\ref{eq:supp_explore_rate} represents a proportionality, we take out many constant factors throughout the derivation. Additionally, any constant factor of $\Lambda(\cdot)$ would be taken care of by the normalization constant $Z$ in the final expression for Eq.~\ref{eq:supp_var_sol1}. We proceed by discretizing Eq.~\ref{eq:supp_explore_rate}, using $i$ and $j$ as time indices and $p_{max}(\cdot|\cdot)$ as the conditional probability density defined in Eq.~\ref{eq:supp_path_discretization}. We do this by slicing the time interval $[t_0,t]$ into time indices $\{1,\cdots,N\}$. Our resulting expression is the following:
\begin{equation}\label{eq:supp_explore_rate_discrete}
    \langle\dot{x}(t) \dot{x}(t)^T\rangle_{x^*} = \prod_{i=1}^{N-1} \Big[\int_{\mathcal{X}} dx_{i+1} \  p_{max}(x_{i+1}|x_{i}) \Big] \sum_{j=1}^{N-1} (x_{j+1}-x_{j})(x_{j+1}-x_{j})^T\delta(x_j-x^*),
\end{equation}
where the path integrals are discretized according to the Feynman formalism~\cite{Feynman2010}, using the same discretization as in our proof of Corollary~\ref{cor:markov}.

From this expression in Eq.~\ref{eq:supp_explore_rate_discrete}, we take the following two steps. First, we switch out the order of summation and product by applying the Fubini-Tonelli theorem. Then, we factor out two integrals from the product expression---one capturing the probability flow \textit{into} $x_j$ and one capturing the flow \textit{out of} it: 
\begin{align}
    = &\sum_{j=1}^{N-1}\prod_{i\neq j, j-1} \Big[\int_{\mathcal{X}} dx_{i+1} \  p_{max}(x_{i+1}|x_{i})\Big] \nonumber\\
    &\times\int_{\mathcal{X}} p_{max}(x_{j}| x_{j-1})\int_{\mathcal{X}} p_{max}(x_{j+1}|x_{j}) (x_{j+1}-x_{j})(x_{j+1}-x_{j})^T\delta(x_j-x^*)dx_{j+1} dx_{j},\nonumber
\end{align}
where $\times$ denotes multiplication with the line above. Then we can apply the Dirac delta function to simplify our expression and get:
\begin{align}
    = \sum_{j=1}^{N-1} \prod_{i\neq j, j-1} &\Big[\int_{\mathcal{X}} dx_{i+1} \ p_{max}(x_{i+1}|x_{i})\Big] \nonumber\\
    &\times p_{max}(x^*| x_{j-1}) \int_{\mathcal{X}} p_{max}(x_{j+1}|x^*) (x_{j+1}-x^*)(x_{j+1}-x^*)^Tdx_{j+1}.
    \label{eq:supp_explore_rate_discrete2}
\end{align}
To simplify further we will tackle the following integral as a separate quantity:
\begin{equation}
    \label{eq:supp_explore_rate_int}
    I = \int_{\mathcal{X}} p_{max}(x_{j+1}|x^*) (x_{j+1}-x^*)(x_{j+1}-x^*)^Tdx_{j+1}.
\end{equation}
where we can substitute Eq.~\ref{eq:supp_path_discretization} into Eq.~\ref{eq:supp_explore_rate_int} to get:
\begin{equation}\nonumber
    I = \int_{\mathcal{X}} \frac{1}{Z_d} e^{-(x_{j+1}-x^*)^T\Lambda(x^*)(x_{j+1}-x^*)} (x_{j+1}-x^*)(x_{j+1}-x^*)^Tdx_{j+1}.
\end{equation}
This integral can then be tackled using integration by parts and closed-form Gaussian integration. Thus far, we have not had any need to specify the domain in which exploration takes place. However, in order to evaluate this multi-dimensional integral-by-parts we require integration limits. To this end, we will assume that the domain of exploration is large enough so that the distance between $x^*$ and $x_{j+1}$ makes the exponential term approximately decay to 0 at the limits, which we shorthand by placing the limits at infinity:
\begin{align}\label{eq:supp_explore_rate_int2}
    I = \frac{1}{Z_d}\Lambda  (x^*)^{-1}&\Big[\sqrt{\det(2\pi\Lambda^{-1}(x^*))} \nonumber\\
    &-(x_{j+1}-x^*)^T\mathbf{1}e^{-(x_{j+1}-x^*)^T\Lambda(x^*)(x_{j+1}-x^*)}\Big|_{x_{j+1}=-\infty}^{x_{j+1}=\infty} \Big],
\end{align}
where $\mathbf{1}$ is the vector of all ones, and the exponential term vanishes when evaluated at the limits. Note that our assumption on the domain of integration implies that we do not consider boundary effects, and that the quantity within the brackets is a scalar that can commute with our Lagrange multiplier matrix.

We are now ready to put together our final results. By combining Eq.~\ref{eq:supp_explore_rate_int2} and plugging it into Eq.~\ref{eq:supp_explore_rate_discrete2} we have
\begin{align}\label{eq:supp_explore_rate_discrete3}
    \langle \dot{x}(t) \dot{x}(t)^T\rangle_{x^*} = \frac{1}{Z_d}\sum_{j=1}^{N-1} \prod_{i\neq j, j-1} & \Big[\int_{\mathcal{X}} dx_{i+1} p_{max}(x_{i+1}|x_{i}) \Big] \nonumber\\ 
    &\times p_{max}(x^*| x_{j-1}) \sqrt{\det(2\pi\Lambda^{-1}(x^*))} \Lambda (x^*)^{-1}.
\end{align}
Since $\langle \dot{x}(t) \dot{x}(t)^T\rangle_{x^*}$ is everywhere full-rank, we can see that $\Lambda (x^*)^{-1}$ must be full-rank as well. Next, we recognize that $\sqrt{\det(2\pi\Lambda (x^*)^{-1})}$ cancels out with $Z_d$, and that we can re-expand  $p_{max}(x^*|x_{j-1})$ as an integral over $\delta(x_j-x^*)$ and fold it back into the integral product. Rearranging terms we have:
\begin{equation}\label{eq:supp_explore_rate_discrete4}
    \langle \dot{x}(t) \dot{x}(t)^T\rangle_{x^*} =  \prod_{i=1}^{N-1} \Big[\int_{\mathcal{X}}dx_{i+1}\ p_{max}(x_{i+1}|x_{i}) \Big] \sum_{j=1}^{N-1} \delta(x_j-x^*) \Lambda (x^*)^{-1}.
\end{equation}
At this point, we note that this expression merely computes the average of $\Lambda (x^*)^{-1}$ over all possible state trajectories that pass through $x^*$. However, because $\Lambda (x^*)^{-1}$ is a constant for any given $x^*$, this expression reduces down to $\langle \dot{x}(t) \dot{x}(t)^T\rangle_{x^*} = \Lambda (x^*)^{-1}$. Thus, using Eq.~\ref{eq:supp_constraint1}, we find that our Lagrange multiplier is given by:
\begin{equation}
    \label{eq:supp_lagrange_mult}
    \Lambda (x^*) = \mathbf{C}^{-1}[x^*].
\end{equation}
This result is significant because now we can relate a measure of temporal correlations to the sample path distribution of an optimally exploring agent. Taking this result and returning to Eq.~\ref{eq:supp_var_sol1}, we now have the final form of the maximum entropy exploration sample path distribution in terms of our measure of temporal correlations:
\begin{equation}
    \label{eq:supp_var_sol2}
    P_{max}[x(t)] = \frac{1}{Z} \exp\Big[-\frac{1}{2}\int_{t_0}^{t} \dot{x}(\tau)^T\mathbf{C}^{-1}[x(\tau)]\dot{x}(\tau) d\tau\Big],
\end{equation}
where we have added a factor of one half to precisely match the path probability of diffusive spatially-inhomogeneous dynamics. This final connection can be made rigorous by noting that $\mathbf{C}[x^*]$ is an estimator of a system's local diffusion tensor through the following relation: $\mathbf{C}[\cdot]=\frac{1}{2}\mathbf{D}[\cdot]\mathbf{D}[\cdot]^T$ for some diffusion tensor $\mathbf{D}[\cdot]$~\cite{Michalet2012,Boyer2012}. Lastly, we can discretize this distribution to arrive at the discrete-time maximum entropy sample path probability density:
\begin{equation}
    \label{eq:supp_var_sol2_discrete}
    p_{max}(x_{t+1}|x_t) = \frac{1}{Z_d} \exp\Big[-\frac{1}{2}|x_{t+1}-x_t|^2_{\mathbf{C}^{-1}[x_t]}\Big].
\end{equation}
Thus, when faced with path continuity constraints, the optimal exploration strategy is given by diffusion in state space, which concludes our derivation. In line with this, we describe systems that satisfy these statistics as \textit{maximally diffusive}.

Throughout this derivation, we have assumed for convenience that the velocity fluctuations of the stochastic control process are full-rank everywhere. This is equivalent to saying that the control system is capable of generating variability along all dimensions of its degrees of freedom---or equivalently, as shown in Supplementary Note~\ref{sec:controllability} for linearizable nonlinear systems, that our system is controllable. However, this assumption is somewhat artificial because typically we are not interested in exploring directly on the full state space of our control system. Instead, we often consider some differentiable coordinate transformation $y(t)=\psi(x(t))$ that maps our states in $\mathcal{X}$ onto the desired exploration domain $\mathcal{Y}$. In this case, all results described thus far will still hold and we will have a valid expression for $P_{max}[y(t)]$ with diffusion tensor $\mathbf{C}[y^*]$, so long as $\mathbf{C}[y^*]=\mathbf{J}_{\psi}[x^*]\mathbf{C}[x^*]\mathbf{J}_{\psi}[x^*]^T$ is everywhere full-rank, where $\mathbf{J}_{\psi}[\cdot]$ is the Jacobian matrix corresponding to the coordinate transformation $\psi$. Hence, we only require that the new system coordinates are controllable. This is particularly useful when we are dealing with high-dimensional systems with which we are interested in exploring highly coarse-grained domains.

\subsubsection{Directed exploration as variational optimization}
\label{sec:directed_exploration}
In the previous section we derived the analytical form of our maximum entropy exploration strategy, which describes agents with maximally-decorrelated experiences and whose path probabilities are equivalent to those of an ergodic diffusion process. Thus far, we have only discussed exploration as an undirected (or passive) process. This is to say, as a process that is blind to any notion of importance or preference ascribed to regions of the state space or exploration domain~\cite{Thrun1992}. However, under a simple reformulation of our exploration problem we will see that we can also achieve efficient directed exploration with theoretical guarantees on its asymptotic performance.

In many exploration problems, we have an a priori understanding of what regions of the exploration domain are important or informative. For example, in reinforcement learning this is encoded by the reward function~\cite{Haarnoja2017}, and in optimal control this is often encoded by a cost function or an expected information density~\cite{Miller2016,Mavrommati2018}. In such settings, one may want an agent to explore states while taking into account a measure of information or importance of that state, which is known as directed (or active) exploration. In order to realize directed exploration, we require a notion of the ``importance'' of states that is amenable to the statistical-mechanical construction of our approach. To this end, we reformulate our maximum entropy objective into a ``free energy'' minimization objective by introducing a bounded potential function $V[\cdot]$. Across fields, potential functions are used to ascribe (either a physical or virtual) cost to system states. A potential function is then able to encode tasks in control theory, learning objectives in artificial intelligence, desirable regions in spatial coverage problems, etc. Hence, we will extend the formalism presented in the previous sections to parsimoniously achieve goal-directed exploration by considering the effect of potential functions. 

Since our maximum entropy functional is an expression over all possible trajectories, we need to adapt our definition of a potential to correctly express our notion of ``free energy'' over possible system realizations. To this end, we define our potential over $\mathcal{T}=[t_0,t]$ in the following way,
\begin{equation}\label{eq:supp_path_potential}
    \langle V[x(t)]\rangle_{P} = \int_{\mathcal{X}^\mathcal{T}} P[x(t)] \int_{t_0}^{t} V[x(\tau)] d\tau\mathcal{D}x(t), 
\end{equation}
which captures the average cost over all possible system paths (integrated over each possible state and time for each possible path). Formally, we must assume that $\langle V[x(t)]\rangle_{P}$ is bounded, which in practice will be the case for policies and controllers derived from these principles. Our new free energy functional objective is
\begin{equation}\label{eq:supp_free_energy_functional}
    \underset{P[x(t)]}{\text{argmin}} \  \langle V[x(t)]\rangle_P-S[P[x(t)]],
\end{equation}
where we use $S[P[x(t)]]$ as a short-hand for the argument to Eq.~\ref{eq:supp_objective2}. Thankfully, to find the optimal path distribution all of the work carried out in Supplementary Notes~\ref{sec:exploration_opt_problem} and~\ref{sec:exploration_diffusion} remains unchanged. All that's needed is to take the variation of Eq.~\ref{eq:supp_path_potential} with respect to $P[x(t)]$ and integrate it into the optimal path distribution. As this arithmetic is very similar to the derivation provided in the proof of Theorem~\ref{thm:diffusion}, we omit it here. The resulting minimum free energy path distribution is then
\begin{equation}\label{eq:supp_free_energy_sol}
    P_{max}^V[x(t)] = \frac{1}{Z} \exp\Big[-\int_{t_0}^{t} \Big(V[x(\tau)]+\frac{1}{2}\dot{x}(\tau)^T\mathbf{C}^{-1}[x(\tau)]\dot{x}(\tau) \Big)d\tau\Big],
\end{equation}
which corresponds to the path distribution of a diffusion process in a potential field. Hence, the optimal directed exploration strategy is to scale the strength of diffusion with respect to the desirability of the state. In this sense, the net effect of the potential is merely to bias the diffusion process. We refer to systems satisfying such statistics as \textit{maximally diffusive with respect to the underlying potential}. As an aside, we note that,
\begin{equation}\label{eq:relationship_between_dists}
    P_{max}^V[x(t)] = P_{max}[x(t)]\cdot e^{-\int V[x(\tau)]d\tau}
\end{equation}
from which we can recover $P_{max}[x(t)]$ in the absence of a potential (i.e., $V[\cdot]=0$). We note that we can manipulate the above expression into a form amenable to Markov decision processes by letting $l(\cdot) = V[\cdot]$ be a standard cost function, which leads us to the following equivalent expression:
\begin{equation}\label{eq:relationship_between_dists_disc}
    P_{max}^l[x_{1:N}] = \prod_{t=1}^{N-1} p_{max}(x_{t+1}|x_t) e^{-l(x_t)},
\end{equation}
where we have discretized agent state trajectories without loss of generality. Remarkably, this path distribution resembles the form of those used in the control-as-inference literature~\cite{Rawlik2012}. We will find that the form of this distribution we derived is crucial to the approach we take in trajectory synthesis and reinforcement learning, particularly once we introduce a dependence on agent actions into the cost function.

What are the properties of such an exploration strategy? Since we already know that the sample paths of agents applying our exploration strategy are Markovian, as long as the potential function and its interactions with our agent are memory-less the sample paths generated by Eq.~\ref{eq:supp_free_energy_functional} will continue to be as well. However, ergodicity is a more challenging property to ascertain as it depends on the properties of the underlying potential function and of our diffusion process. Nonetheless, in the following theorem we show that the trajectories of an agent successfully diffusing according to our exploration strategy in a non-singular potential will continue to be ergodic under some mild assumptions.

\begin{theorem}
\label{thm:ergodicity}
    A stochastic control process (Definition~\ref{def:control_system}) in a compact and connected space $\mathcal{X}\subset\mathbb{R}^d$ with a maximum entropy exploration strategy in a potential (in the sense of Eq.~\ref{eq:supp_free_energy_sol}) is ergodic.
\end{theorem}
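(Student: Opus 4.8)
The plan is to follow the template of Corollary~\ref{cor:ergodicity} for the undirected case, inserting the extra care needed to accommodate the potential. The directed path distribution of Eq.~\ref{eq:supp_free_energy_sol} differs from the undirected one of Eq.~\ref{eq:supp_var_sol2} only by the additive term $V[x(\tau)]$ inside the exponential, and its discretization (Eq.~\ref{eq:relationship_between_dists_disc}) factorizes as $P_{max}^l[x_{1:N}] = \prod_{t=1}^{N-1} p_{max}(x_{t+1}|x_t)\,e^{-l(x_t)}$ with $l(\cdot)=V[\cdot]$. Since this cost depends only on the current state, the reweighting is memoryless, so the Markov property proven in Corollary~\ref{cor:markov} is inherited by the potential-biased process; I would open by recording this, reducing the problem to analyzing a single Markov chain on the compact, connected space $\mathcal{X}$.

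First I would discretize in time and space as in Corollary~\ref{cor:ergodicity}, yielding a finite-state chain. The key step is to argue that the potential does not destroy the full support of the undirected kernel. Every discretized path receives a density equal to a product of strictly positive factors: the diffusive factors $p_{max}(x_{t+1}|x_t)>0$ have full support by the Gaussian form of Eq.~\ref{eq:supp_var_sol2_discrete}, and each weight $e^{-l(x_t)}>0$ is bounded away from $0$ and $\infty$ because the potential is assumed bounded (Eq.~\ref{eq:supp_path_potential}) and non-singular. Consequently the probability of transitioning between any pair of states in $\mathcal{X}$ in one step remains strictly positive, so no region of the state space is rendered inaccessible by the bias of the potential.

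With strict positivity established the remainder is identical to the undirected argument. Because the kernel is everywhere positive and $\mathcal{X}$ is compact and connected, all states communicate and the chain is irreducible; because the self-transition weight $p_{max}(x^*|x^*)\,e^{-l(x^*)}>0$ for every $x^*\in\mathcal{X}$, it is aperiodic. An irreducible, aperiodic Markov chain on the finite (discretized compact) state space is ergodic~\cite{Puterman2014}, which closes the proof.

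I expect the main obstacle to be precisely the justification of strict positivity, i.e., ruling out effectively absorbing or forbidden regions induced by the potential. If $V[\cdot]$ were allowed to diverge to $+\infty$ on some subset of $\mathcal{X}$, then $e^{-V}$ would vanish there, potentially breaking irreducibility and thus ergodicity. The non-singularity (boundedness) of the potential is exactly the ``mild assumption'' that removes this possibility, and it is the only place where the directed argument genuinely departs from the undirected one; everything else is inherited because the potential merely reweights path mass rather than annihilating it.
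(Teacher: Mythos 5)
Your proposal matches the paper's proof essentially step for step: extend Corollary~\ref{cor:ergodicity} by discretizing in space and time, observe that $p_{max}^V(x_{t+1}|x_t)=p_{max}(x_{t+1}|x_t)e^{-V[x_t]}>0$ because the potential is bounded, and conclude aperiodicity and communication of all states, hence ergodicity via~\cite{Puterman2014}. You also correctly isolate boundedness (non-singularity) of $V[\cdot]$ as the one assumption doing real work beyond the undirected case, which is exactly the ``mild assumption'' the paper relies on.
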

\begin{proof}
    The proof of this theorem can be easily arrived at by extending the proof of Corollary~\ref{cor:ergodicity}. As long as $V[\cdot]$ is bounded everywhere in the domain, we may discretize the stochastic control process in space and time everywhere in the domain, as in Corollary~\ref{cor:ergodicity}. Then, we can see that $p_{max}^V(x_{t+1}|x_t) = p_{max}(x_{t+1}|x_t)e^{-V[x_t]}>0, \ \forall x_t,x_{t+\delta t}\in \mathcal{X}, \ \forall t \in \mathcal{T}$. This is because we have already shown that $p_{max}(\cdot|\cdot)>0$ in Corollary~\ref{cor:ergodicity}, and because of the properties of the potential. Thus, the underlying Markov chain described by the $p_{max}^V(x_{t+1}|x_t)$ transition kernel is aperiodic and all states communicate, which guarantees ergodicity and concludes our proof.
\end{proof}

Hence, the net effect of the potential is to reshuffle probability mass in the stationary distribution of the agent's underlying Markov chain. We note that these proofs can be carried out without discretizations by instead invoking the physics of diffusion processes, as in~\cite{Wang2019} where the authors proved that heterogeneous diffusion processes in a broad class of non-singular potentials are ergodic when the strength of the potential exceeds the strength of diffusion-driven fluctuations. However, here we limit ourselves to methods from the analysis of stochastic processes. In short, minimum free energy exploration leads to ergodic coverage of the exploration domain with respect to the potential. We note that this is an important result when it comes to the applicability of our results in robotics and reinforcement learning, as it is effectively an asymptotic guarantee on learning when the learning task is encoded by the choice of potential function---as we will illustrate in the following sections.

\subsubsection{Minimizing path free energy produces diffusive gradient descent}
\label{sec:descent_directions}
To develop further intuition about the sense in which the statistics of Eq.~\ref{eq:supp_free_energy_sol} describe goal-directed exploratory behavior, we can examine the maximum likelihood trajectory of our minimum free energy path distribution under the assumption of path differentiability, which the rest of our analysis does not require. To do this, we begin by calculating the negative log-likelihood of $P_{max}^V[x(t)]$ neglecting the normalization factor:
\begin{equation}
    \label{eq:supp_loglike}
    \begin{split}
    -\log [P_{max}^V[x(t)]] &= \int_{t_0}^{t} V[x(\tau)]+\frac{1}{2}\dot{x}(\tau)^T\mathbf{C}^{-1}[x(\tau)]\dot{x}(\tau) d\tau\\
    &= \int_{t_0}^{t} \mathcal{H}(\tau,x(\tau),\dot{x}(\tau)) d\tau\\
    &= \int_{t_0}^{t} \dot{x}(\tau)^T\mathbf{C}^{-1}[x(\tau)]\dot{x}(\tau)-\mathcal{L}(\tau,x(\tau),\dot{x}(\tau)) d\tau,
    \end{split}
\end{equation}
where we noted that the integral's argument is a Hamiltonian whose Legendre transform we can take, and arrive at an equivalent Lagrangian description of the system. Then, to derive an expression for the maximum likelihood trajectories of our path distribution we can extremize the Lagrangian's associated action functional:
\begin{equation}
    \label{eq:supp_action}
    \mathcal{A} = \int_{t_0}^{t}  \mathcal{L}(\tau,x(\tau),\dot{x}(\tau)) d\tau = \int_{t_0}^{t} V[x(\tau)]-\frac{1}{2}\dot{x}(\tau)^T\mathbf{C}^{-1}[x(\tau)]\dot{x}(\tau)d\tau.
\end{equation}
Assuming that our potential is differentiable, we can find the dynamics of the maximum likelihood trajectory by using the Euler-Lagrange equations:
\begin{equation}
    \label{eq:supp_euler_lagrange}
    \begin{split}
    0 &= \nabla_{x}\mathcal{L}-\frac{d}{dt}\big[\nabla_{\dot{x}}\mathcal{L}\big]\\
     &=\nabla_x V[x(t)] -\frac{1}{2}\dot{x}(t)^T\nabla_x \mathbf{C}^{-1}[x(t)]\dot{x}(t) - \Big[-\ddot{x}(t)^T\mathbf{C}^{-1}[x(t)]-\dot{x}(t)^T\nabla_x \mathbf{C}^{-1}[x(t)]\dot{x}(t)\Big]\\
     &= \nabla_x V[x(t)]+\ddot{x}(t)^T\mathbf{C}^{-1}[x(t)]+\frac{1}{2}\dot{x}(t)^T\nabla_x \mathbf{C}^{-1}[x(t)]\dot{x}(t)
    \end{split}
\end{equation}
which we can rearrange into our final expression,
\begin{equation}
    \label{eq:supp_maxlikelihood_dyn_full}
    \ddot{x}(t) = -\mathbf{C}[x(t)]\Big[\nabla_x V[x(t)]+\frac{1}{2}\dot{x}(t)^T\nabla_x \mathbf{C}^{-1}[x(t)]\dot{x}(t)\Big]
\end{equation}
This last expression represents the maximum likelihood dynamics of a system whose trajectories satisfy our minimum free energy path distribution. We note that $\nabla_x \mathbf{C}^{-1}[x(t)]=-\mathbf{C}^{-1}[x(t)]\nabla_x \mathbf{C}[x(t)]\mathbf{C}^{-1}[x(t)]$, which we omitted from Eq.~\ref{eq:supp_maxlikelihood_dyn_full} for notational simplicity. Our expression is comprised of two gradient-like terms. The first of these terms points in directions of descent for the potential, and the second in directions that increase the system's autocovariance statistics (or controllability, when applicable).

To simplify these dynamics further, we can make one of two assumptions: either that our measure of temporal correlations varies slowly over space (at least relative to $\nabla_x V$), or that our system dynamics are LTV. Taken together, our assumptions imply that $\nabla_x \mathbf{C} \approx \mathbf{0}$, which leads to a simplification of the final expression in Eq.~\ref{eq:supp_maxlikelihood_dyn_full}. For the sake of making a connection to controllability, consider simplifying the maximum likelihood dynamics by assuming they are LTV. For this class of dynamics, Eq.~\ref{eq:supp_controllability_gradient} tells us that our system's controllability properties do not vary over state-space, as discussed in Supplementary Note~\ref{sec:controllability}. For systems with fixed or quasi-static morphologies this assumption holds well. Then, we have the following simplified dynamics:
\begin{equation}
    \label{eq:supp_descent}
    \begin{split}
        \ddot{x}(t) &= -\mathbf{C}[x(t)]\nabla_x V[x(t)]\\
        &= -W(t,t_0)\nabla_x V[x(t)].
    \end{split}
\end{equation}
By inspection we see that these second order dynamics resemble those of inertial gradient descent~\cite{Nesterov1983,Qian1999,Attouch2017,Attouch2019}, with two key differences. First, the absence of a damping term in the expression, which can be artificially introduced and tuned to guarantee and optimize convergence. Alternatively, we can note that any physical system approximately satisfying maximally diffusive trajectory statistics will experience dissipation, which means there may be no need to introduce it artificially. Second, and more importantly, that our system's ability to produce descent directions that optimize the potential is affected by its controllability properties. Thus, our results show that controllable agents can minimize arbitrary potentials merely through diffusive state exploration, which forms the conceptual basis of our approach to optimization and learning in the following sections.

As a final note on the derivations carried out throughout all of Supplementary Note~\ref{sec:maxdiff_theory}, we point out that most of the work we have done largely amounts to formulating an exploration problem and deriving the optimal trajectory distribution for a sufficiently broad class of agents---those with continuous paths. However, it is entirely possible for agents with discontinuous paths to have constraints on their state transitions as well. In other words, just because an agent may be capable of teleporting from one state to another (e.g., in a digital environment) it does not mean that it is equally easy to teleport to and from every state in the environment. Thus, as a final note we point out that every formal result we have proven throughout Supplementary Note~\ref{sec:maxdiff_theory} still holds when we remove the path continuity constraint (except for the analysis in this section). However, in this case the optimal distribution will be uniform over the state space, i.e., $p^U_{max}(x_{t+1}|x_t) = 1/|\mathcal{X}|$. In the presence of a potential our agent would also provably realize ergodic Markov exploration with respect to a cost or potential function. In this case, the optimal path distribution would take a similar form as Eq.~\ref{eq:relationship_between_dists_disc}, i.e., $p^{U,l}_{max}(x_{t+1}|x_t) = p^U_{max}(x_{t+1}|x_t)e^{- l(x_t)}$. However, realizing these path statistics is only possible when the underlying agent is fully controllable in the sense of Definition~\ref{def:full_controllability}, as we discuss in the following section. We note that it is exclusively under these conditions that agents can completely overcome correlations between state transitions. Agents satisfying these statistics will achieve \textit{i.i.d.} sequential sampling; however, the connection to the statistical mechanics of diffusion processes will no longer hold.

\clearpage
\newpage

\subsection{Synthesizing maximally diffusive trajectories}
\label{sec:maxdiff_synthesis}
Throughout the previous section, we have been studying the properties of a theoretical agent whose experiences spontaneously satisfy the path statistics of a maximally diffusive stochastic control process. However, the autonomous dynamics of control systems will typically not satisfy these statistics on their own. Hence, we require an approach from which to synthesize controllers (and policies) that generate maximally diffusive trajectories. In this section, we provide a general formulation of such an approach as well as simplifications amenable to use in real-time optimal control synthesis and reinforcement learning. All results derived herein form part of what we refer to as \textit{maximum diffusion (MaxDiff) trajectory synthesis}.

\subsubsection{Maximally diffusive trajectories via KL control}
\label{sec:KL_control}
In previous sections, we derived the maximally diffusive path distribution, $P_{max}^V[x(t)]$, and characterized the properties of sample paths drawn from it in the presence of a potential that ascribes a cost to system states, $V[\cdot]$. Now, we turn to the question of synthesizing policies and controllers that can actually satisfy these trajectory distributions. To this end, we recall that in Supplementary Note~\ref{sec:exploration_sampling} we defined a path probability density for an arbitrary stochastic control process, $P_{u(t)}[x(t)]$. Equipped with this distribution, we are able to express the most general form of the MaxDiff trajectory synthesis objective. To synthesize maximally diffusive trajectories, it suffices to generate policies and controllers that minimize the Kullback-Leibler (KL) divergence between the analytical optimum we derived in Supplementary Note~\ref{sec:maxdiff_theory} and the system's current path distribution. Equivalently, we can express this as,
\begin{equation}\label{eq:KL_control_obj}
    \underset{u(t)}{\text{argmin}}  \ D_{KL}(P_{u(t)}[x(t)]||P_{max}^V[x(t)]),
\end{equation}
which we can reformulate into many alternative forms through simple manipulations, as we illustrate throughout the following sections. Here, we first manipulate the objective into a form that highlights the different roles of the terms comprising it. Importantly, we note that taking the KL divergence is a well-defined operation in this context because the support of $P_{max}^V[x(t)]$ is infinite, and we have assumed that $\mathcal{X}$ is a compact domain. Using the definition of the KL divergence over path distributions and taking $\mathcal{T}=[t_0,t]$, we can factor our objective in the following way:
\begin{equation}
    \label{eq:supp_KL_control_obj_deriv}
    \begin{split}
    D_{KL}(P_{u(t)}||P_{max}^V) &= \int_{\mathcal{X}^\mathcal{T}} P_{u(t)}[x(t)] \log \frac{P_{u(t)}[x(t)]}{P_{max}^V[x(t)]}\mathcal{D}x(t)\\
     &=\int_{\mathcal{X}^\mathcal{T}} P_{u(t)}[x(t)] \Big[\log P_{u(t)}[x(t)]-\log P_{max}^V[x(t)]\Big]\mathcal{D}x(t)\\
     &= \int_{\mathcal{X}^\mathcal{T}} P_{u(t)}[x(t)] \Big[\log P_{u(t)}[x(t)]-\log P_{max}[x(t)]+\int_{t_0}^{t}V[x(\tau)]d\tau\Big]\mathcal{D}x(t)\\
     &= \langle V[x(t)] \rangle_{P_{u(t)}}+D_{KL}(P_{u(t)}[x(t)]||P_{max}[x(t)]),
    \end{split}
\end{equation}
where we used Eq.~\ref{eq:relationship_between_dists} to arrive at our final expression. Now, we can rewrite our control synthesis problem as the following
\begin{equation}\label{eq:KL_control_obj_freeen}
    \underset{u(t)}{\text{argmin}}  \ \langle V[x(t)] \rangle_{P_{u(t)}}+D_{KL}(P_{u(t)}[x(t)]||P_{max}[x(t)]),
\end{equation}
or equivalently
\begin{equation}\label{eq:KL_control_obj_freeen2}
    \underset{u(t)}{\text{argmin}}  \ E_{P_{u(t)}}\big[ L[x(t),u(t)]\big] +D_{KL}(P_{u(t)}[x(t)]||P_{max}[x(t)]),
\end{equation}
where we replace our potential with a cost function $L[x(t),u(t)]= \int_{\mathcal{T}} l(x(t),u(t)) dt$ in terms of the running cost $l(\cdot,\cdot)$. While potential functions are a natural way to ascribe thermodynamic costs to the states of physical systems, such as diffusion processes, there is no reason to restrict ourselves to that formalism now that we are focused on control synthesis. We also replaced our physics-based expected value notation, but note that they are formally equivalent (i.e., $\langle \cdot \rangle_p = E_{p}[\cdot]$). Finally, we note that we can introduce a temperature-like parameter $\alpha>0$ to balance between the two terms in our objective: the first, which optimizes task performance; and the second, which optimizes the statistics of the system's state space diffusion. Thus, when the system is able to achieve maximally diffusive trajectory statistics, our approach reduces to solving the task with thorough exploration of the cost landscape. 

An interesting property of this result is that in our theoretical approach there is no formal trade-off between exploration and exploitation---at least asymptotically. This is because when a system is capable of realizing maximally diffusive trajectories, the KL divergence term goes to zero. That being said, in practice this is not the case and the introduction of $\alpha$ will be of practical use in balancing between exploration and exploitation. Moreover, when maximally diffusive statistics are satisfied the expected value of the objective is taken with respect to the optimal maximum entropy trajectory distribution (i.e., $E_{P_{max}}\big[L[x(t),u(t)]\big]$), which is a bias-minimizing estimator of the cost function asymptotically equivalent to \textit{i.i.d.} sampling of state-action costs (or rewards) as a result of the ergodic properties of $P_{max}[x(t)]$. This is particularly useful in applications like reinforcement learning where the cost (or reward) function is unknown.

\subsubsection{Maximally diffusive trajectories via stochastic optimal control}
\label{sec:exploration_soc}
We can formulate our KL control problem as an equivalent stochastic optimal control (SOC) problem by making use of their well-known connections~\cite{Rawlik2012}. SOC problems are typically framed as Markov decision processes (MDPs) where the objective is to find a policy that optimizes the expected cost of a given cost-per-stage function over some time-horizon. More formally, an MDP is a 5-tuple $(\mathcal{X},\mathcal{U},p,r,\gamma)$, with state space, $\mathcal{X}$, and action space, $\mathcal{U}$. Then, $p: \mathcal{X}\times\mathcal{X}\times\mathcal{U}\rightarrow[0,\infty)$ represents the probability density of transitioning from state $x_t\in\mathcal{X}$ to state $x_{t+1}\in\mathcal{X}$ after taking action $u_t\in\mathcal{U}$. At every point in time, for each state, and for each action taken, the environment emits a bounded loss $l:\mathcal{X}\times\mathcal{U}\rightarrow [l_{min},l_{max}]$ discounted at a rate $\gamma\in [0,1)$. Given this formalism, the standard discrete time formulation of the SOC objective is
\begin{equation}\label{eq:soc_standard}
    \pi^* = \underset{\pi}{\text{argmin}}  \ E_{(x_{1:N},u_{1:N})\sim P_{\pi}} \Bigg[\sum_{t=1}^{N} \gamma^t l(x_t,u_t) \Bigg],
\end{equation}
where $l(\cdot,\cdot)$ is a discretized running cost and the expectation is taken with respect to the trajectory distribution induced by the policy $P_{\pi}$, which we will now motivate and define.

To translate our KL control results from the previous section into an equivalent SOC problem, we will have to make some modifications to our approach. In particular, the introduction of a policy $\pi(\cdot|\cdot)$ that replaces our notion of a controller (as defined in Supplementary Note~\ref{sec:exploration_sampling}) requires careful treatment. Whereas our definition of a path distribution allowed us to express a distribution directly over the state trajectories of the agent-environment dynamical process, the introduction of a policy induces a distribution over actions as well. In other words, instead of $P_{u_{1:N}}[x_{1:N}]$, we will now have $P_{\pi}[x_{1:N},u_{1:N}]$. This creates a complication by making the KL divergence in Eq.~\ref{eq:KL_control_obj} ill-posed---the agent's state-action path distribution and our maximally diffusive distribution are now defined over different domains. To solve this issue, we introduce the following distributions:
\begin{equation}
    \label{eq:policy_dists}
    \begin{split}
    P_{\pi}[x_{1:N},u_{1:N}] &= \prod_{t=1}^N p(x_{t+1}|x_t,u_t)\pi(u_t|x_t) \\
    P_{max}^{l}[x_{1:N},u_{1:N}] &= \prod_{t=1}^N p_{max}(x_{t+1}|x_t) e^{-l(x_t,u_t)},
    \end{split}
\end{equation}
where $p_{max}(x_{t+1}|x_t)$ is the discretized maximally diffusive conditional density in Eq.~\ref{eq:supp_var_sol2_discrete}. The second of these distributions was analytically derived in Eq.~\ref{eq:relationship_between_dists_disc}, and we can formally introduce an action dependence because the maximally diffusive path distribution is action-independent. Note that for the first time in our derivation we are making use of the Markov property to express our system's dynamics. However, since the analytically-derived optimal transition dynamics are Markovian, the synthesized controller will attempt to make the agent's true dynamics satisfy the Markov property as a result of the underlying optimization, which makes this a benign assumption under our framework. We note that the more general problem description in Eq.~\ref{eq:KL_control_obj} does not require us to assume that our dynamics are Markovian because we are minimizing the KL divergence between the trajectory distributions directly.

Taken together, these modifications allow us to rewrite Eq.~\ref{eq:KL_control_obj} as,
\begin{equation}\label{eq:KL_control_obj_disc}
    \underset{\pi}{\text{argmin}}  \ D_{KL}(P_{\pi}[x_{1:N},u_{1:N}]||P_{max}^l[x_{1:N},u_{1:N}]).
\end{equation}
Then, working from the definition of the KL divergence we have
\begin{align*}
    D_{KL}(P_{\pi}[x_{1:N},u_{1:N}]||P_{max}^l[x_{1:N},u_{1:N}]) &=  E_{P_{\pi}}\Bigg[ \log \frac{P_{\pi}[x_{1:N},u_{1:N}]}{P_{max}^l[x_{1:N},u_{1:N}]}\Bigg] \nonumber\\
    &= E_{P_{\pi}} \Bigg[ \log \prod_{t=1}^N \frac{p(x_{t+1}|x_t,u_t)\pi(u_t|x_t)}{p_{max}(x_{t+1}|x_t)e^{-l(x_t,u_t)}}\Bigg]\nonumber\\
    &= E_{P_{\pi}} \Bigg[ \sum_{t=1}^N \log \frac{p(x_{t+1}|x_t,u_t)\pi(u_t|x_t)}{p_{max}(x_{t+1}|x_t)e^{-l(x_t,u_t)}}\Bigg]\nonumber\\
    &= E_{P_{\pi}} \Bigg[ \sum_{t=1}^N l(x_t,u_t) +\log \frac{p(x_{t+1}|x_t,u_t)\pi(u_t|x_t)}{p_{max}(x_{t+1}|x_t)}\Bigg].
\end{align*}
At this point, we explicitly introduce a temperature-like parameter, $\alpha>0$, to balance between the terms of our objective, as mentioned in the previous section and in the main text. We note that this is a benign modification because equivalent to scaling our costs or rewards by $1/\alpha$, and leads to the following result:
\begin{equation}
    \label{eq:soc_maxdiff_workingpoint}
    D_{KL}(P_{\pi}||P_{max}^l) = E_{P_{\pi}} \Bigg[ \sum_{t=1}^N l(x_t,u_t) +\alpha\log \frac{p(x_{t+1}|x_t,u_t)\pi(u_t|x_t)}{p_{max}(x_{t+1}|x_t)}\Bigg].
\end{equation}

With this result we are now able to write our final expression for an equivalent SOC representation of the KL control problem in Eq.~\ref{eq:KL_control_obj}:
\begin{equation}\label{eq:supp_soc_maxdiff}
    \pi_{\text{MaxDiff}}^* = \underset{\pi}{\text{argmin}}  \ E_{(x_{1:N},u_{1:N})\sim P_{\pi}} \Bigg[\sum_{t=1}^{N} \gamma^t\hat{l}(x_t,u_t) \Bigg],
\end{equation}
where we introduced the discounting factor $\gamma$, and with
\begin{equation}\label{eq:supp_soc_maxdiff_running_cost}
    \hat{l}(x_t,u_t) = l(x_t,u_t)+\alpha \log \frac{p(x_{t+1}|x_t,u_t)\pi(u_t|x_t)}{p_{max}(x_{t+1}|x_t)},
\end{equation}
as our modified running cost function, which concludes our derivation of the formal equivalence between the KL control and SOC MaxDiff trajectory synthesis problems. When we modify the objective above by instead maximizing a reward function $\hat{r}(x_t,u_t)$ with $r(x_t,u_t)=-l(x_t,u_t)$, we refer to this objective as the \textit{MaxDiff RL} objective as we have done in the main text, and whose properties we will explore in the following section.

\subsubsection{Maximum diffusion reinforcement learning}
\label{sec:maxdiff_RL}

The objective we derived in the previous section in Eq.~\ref{eq:supp_soc_maxdiff_running_cost} is the standard form of the MaxDiff RL objective, as discussed in the main text. In this section, we will explore the relationship between MaxDiff RL and MaxEnt RL, and prove some properties of MaxDiff RL agents. Central to these discussions is the way that temporal correlations and controllability play a role in our theoretical framework. For this reason, we first formalize and define a particular notion of controllability in the context of MDPs that was partially introduced in~\cite{Todorov2007}, implicit in the results of~\cite{Todorov2009}, and explicitly called out in~\cite{Rawlik2012}.
\begin{definition}
    \label{def:full_controllability}
    The state transition dynamics, $p(x_{t+1}|x_t,u_t)$, in an  MDP, $(\mathcal{X},\mathcal{U},p,r,\gamma)$, are fully controllable when there exists a policy, $\pi: \mathcal{U}\times\mathcal{X}\rightarrow [0,\infty)$, such that:
    \begin{equation}
        \label{eq:full_control_transition}
        p_{\pi}(x_{t+1}|x_t) = E_{u_t\sim \pi(\cdot|x_t)}[p(x_{t+1}|x_t,u_t)]
    \end{equation}
    and
    \begin{equation}
        \label{eq:full_control_KL}
        D_{KL}\Big(p_{\pi}(x_{t+1}|x_t) \Big|\Big| \nu(x_{t+1}|x_t)\Big) = 0, \ \ \ \forall t \in \mathbb{Z}^+
    \end{equation}
    for any arbitrary choice of state transition probabilities, $\nu:\mathcal{X}\times\mathcal{X}\rightarrow [0,\infty)$.
\end{definition}
\noindent Thus, a system is \textit{fully controllable} when it is simultaneously capable of reaching every state and controlling \textit{how} each state is reached. In other words, a fully controllable agent can arbitrarily manipulate its state transition probabilities, $p_{\pi}(x_{t+1}|x_t)$, by using an optimized policy to match any desired transition probabilities, $\nu(x_{t+1}|x_t)$. Whether the underlying policy is deterministic or stochastic is irrelevant to Definition~\ref{def:full_controllability}. However, our interpretation of $p_{\pi}(x_{t+1}|x_t)$ is different in either setting. When the policy is stochastic we interpret the agent's state transition dynamics due to a policy as
\begin{equation}
    \label{eq:controlled_transitions_stoch}
    p_{\pi}(x_{t+1}|x_t)=\int_{\mathcal{U}}p(x_{t+1}|x_t,u_t)\pi(u_t|x_t)du_t,
\end{equation}
where the integral over control actions arises from the expectation in Eq.~\ref{eq:full_control_transition}. Alternatively, in the deterministic case the agent's state transition dynamics are given by
\begin{equation}
    \label{eq:controlled_transitions_det}
    p_{\pi}(x_{t+1}|x_t)=\int_{\mathcal{U}}p(x_{t+1}|x_t,u_t)\delta(u_t-\tau_{\pi}(x_t))du_t = p(x_{t+1}|x_t,\tau_{\pi}(x_t)),
\end{equation}
where action sequences are drawn from $\pi(u_t|x_t) = \delta(u_t-\tau_{\pi}(x_t))$, which is a Dirac delta where $u_t=\tau_{\pi}(x_t)$ is some deterministic function of the current state~\cite{Rawlik2012}.

Equipped with our definition of full controllability, we may now shed a light on the relationship between our MaxDiff RL framework and the broader MaxEnt RL literature~\cite{Haarnoja2017,Haarnoja2018,Eysenbach2022}, and present one of our main theorems.
\begin{theorem}
    \label{thm:maxdiff_generalization}
    (Theorem 1 of Main Text) Let the state transition dynamics due to a policy $\pi$ be $p_{\pi}(x_{t+1}|x_t)$. If the state transition dynamics are assumed to be decorrelated, then the optimum of Eq.~\ref{eq:supp_soc_maxdiff_running_cost} is reached when $D_{KL}(p_{\pi}||p_{max}) = 0$ and the MaxDiff RL objective reduces to the MaxEnt RL objective.
\end{theorem}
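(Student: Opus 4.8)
The plan is to work directly from the stochastic-optimal-control form of the objective, Eq.~\ref{eq:supp_soc_maxdiff_running_cost}, and show that its modified reward $\hat{r}$ splits cleanly into a policy-entropy piece and a state-transition piece, with the latter vanishing at the optimum precisely when the decorrelation hypothesis holds. First I would expand the logarithm in $\hat{r}(x_t,u_t)$ as $-\alpha\log\pi(u_t|x_t) - \alpha\log\big(p(x_{t+1}|x_t,u_t)/p_{max}(x_{t+1}|x_t)\big)$ and push the expectation $E_{P_\pi}$ through the sum, using the tower property to condition on $x_t$ at each stage. The first piece is immediate: $E_{u_t\sim\pi}[-\alpha\log\pi(u_t|x_t)] = \alpha\,\mathcal{H}(\pi(\cdot|x_t))$, which is exactly the per-stage entropy bonus of the MaxEnt RL objective of~\cite{Haarnoja2018}. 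So the entire content of the theorem reduces to controlling the residual transition term.

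Next I would analyze the transition term $E_{P_\pi}[\log(p(x_{t+1}|x_t,u_t)/p_{max}(x_{t+1}|x_t))]$ by inserting the policy-induced kernel $p_\pi(x_{t+1}|x_t) = E_{u_t\sim\pi}[p(x_{t+1}|x_t,u_t)]$ from Definition~\ref{def:full_controllability}. Writing the integrand as $\log(p/p_\pi) + \log(p_\pi/p_{max})$ and taking the stagewise expectation gives, for each fixed $x_t$, a sum of two nonnegative quantities: a conditional mutual-information term between $U_t$ and $X_{t+1}$, and the divergence $D_{KL}(p_\pi(\cdot|x_t)\,\|\,p_{max}(\cdot|x_t))$. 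This is where the decorrelation hypothesis does its work: assuming decorrelated state transitions is precisely the statement that the action's influence on the next state carries no residual information beyond what is captured by $p_\pi$, so the mutual-information contribution drops and the transition term collapses to the $x_t$-expectation of $D_{KL}(p_\pi\,\|\,p_{max})$.

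With the objective now in the form ``MaxEnt reward plus $\alpha\,E_{x_t}[D_{KL}(p_\pi\|p_{max})]$'', the remaining steps are short. Since the Kullback--Leibler divergence is nonnegative and the entropy and reward terms do not couple to it once the decorrelation step is made, the objective is optimized over $\pi$ exactly when $D_{KL}(p_\pi\|p_{max}) = 0$; full controllability (Definition~\ref{def:full_controllability}, instantiated with $\nu = p_{max}$) guarantees the existence of a policy $\pi^{max}$ attaining this, so the optimum is indeed reached at $D_{KL}(p_\pi\|p_{max})=0$. At that optimum the residual term is identically zero, and $\hat{r}$ reduces to $r(x_t,u_t) - \alpha\log\pi(u_t|x_t)$, whose expectation is the MaxEnt RL objective. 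Invoking Eq.~\ref{eq:relationship_between_dists_disc} to confirm that $p_{max}$ is the correct action-independent target closes the argument.

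I expect the main obstacle to be the second step --- rigorously justifying that ``decorrelated state transitions'' annihilates the conditional mutual-information term $I(X_{t+1};U_t\mid X_t)$ rather than merely the divergence $D_{KL}(p_\pi\|p_{max})$. The subtlety is that the running cost contains the \emph{action-conditioned} kernel $p(x_{t+1}|x_t,u_t)$ whereas the decorrelation statement is phrased through the \emph{action-marginalized} kernel $p_\pi$; naively these differ by exactly the mutual-information gap, which is nonzero whenever actions actually steer the state. I would resolve this by adopting the deterministic-policy interpretation of Eq.~\ref{eq:controlled_transitions_det}, under which $p_\pi(\cdot|x_t)=p(\cdot|x_t,\tau_\pi(x_t))$ and the mutual-information term is structurally absent, or equivalently by reading ``decorrelated'' to mean that the optimal controlled kernel coincides with $p_{max}$ at every state and time, so that the two kernels agree at the optimum and the reduction is exact.
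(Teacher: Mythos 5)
Your proposal is correct, and it shares the paper's overall skeleton: peel off the $-\alpha\log\pi(u_t|x_t)$ term to expose the MaxEnt entropy bonus, isolate the residual transition term, reduce it to $D_{KL}(p_{\pi}\|p_{max})$ under the decorrelation hypothesis, and conclude via nonnegativity of the KL divergence together with attainability (full controllability instantiated with $\nu=p_{max}$, or maximal diffusivity for continuous paths). Where you genuinely differ is in the key technical step. The paper handles the transition term by applying Jensen's inequality to push $E_{u_t\sim\pi}$ inside the logarithm, and then simply asserts that the decorrelation assumption ``saturates'' the inequality; you instead use the exact decomposition $\log(p/p_{max})=\log(p/p_{\pi})+\log(p_{\pi}/p_{max})$, whose stagewise expectation identifies the Jensen gap precisely as the conditional mutual information $I(X_{t+1};U_t\mid X_t)\geq 0$. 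This buys two things. First, it makes explicit the subtlety the paper glosses over: matching $p_{\pi}$ to $p_{max}$ does not by itself annihilate the transition term for a stochastic policy whose actions genuinely steer the state, since the mutual-information gap survives; your resolution via the deterministic-policy kernel of Eq.~\ref{eq:controlled_transitions_det} (under which the gap is structurally absent), or via reading ``decorrelated'' as equality of the controlled kernel with $p_{max}$, supplies exactly the justification the paper's saturation claim needs. Second, because your identity is exact, it clarifies the direction of the paper's displayed inequality: under the joint trajectory law (where $x_{t+1}$ is coupled to $u_t$), the dropped term is $+\alpha I\geq 0$, so the KL-only expression is a \emph{lower} bound on the cost rather than the upper bound the paper's ``$\leq$'' suggests---though the conclusion at saturation, namely the reduction of Eq.~\ref{eq:supp_soc_maxdiff} to the entropy-regularized objective $\hat{l}_c(x_t,u_t)=l(x_t,u_t)+\alpha\log\pi(u_t|x_t)$, is identical in both treatments.
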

\begin{proof}
    Our goal in this proof will be to take the MaxDiff RL objective function in Eq.~\ref{eq:supp_soc_maxdiff} and explore its relationship to the MaxEnt RL objective. Neglecting the discounting factor $\gamma$ but without loss of generality, we begin our proof by algebraically manipulating the MaxDiff RL objective function in Eq.~\ref{eq:supp_soc_maxdiff}:
    \begin{align}
        E_{P_{\pi}} \Bigg[\sum_{t=1}^{N} \hat{l}(x_t,u_t) \Bigg] = E_{P_{\pi}} \Bigg[ \sum_{t=1}^N &l(x_t,u_t) + \alpha\log \frac{p(x_{t+1}|x_t,u_t)\pi(u_t|x_t)}{p_{max}(x_{t+1}|x_t)}\Bigg] \nonumber\\
        = E_{P_{\pi}} \Bigg[ \sum_{t=1}^N &l(x_t,u_t) \Bigg] + \sum_{t=1}^N E_{(x_t,u_t)\sim p,\pi} \Bigg[\alpha \log \frac{p(x_{t+1}|x_t,u_t)\pi(u_t|x_t)}{p_{max}(x_{t+1}|x_t)}\Bigg] \nonumber\\
        = E_{P_{\pi}} \Bigg[ \sum_{t=1}^N &l(x_t,u_t) + \alpha\log \pi(u_t|x_t)\Bigg] \nonumber\\
        &+ \sum_{t=1}^N E_{(x_t,u_t)\sim p,\pi} \Bigg[\alpha\log \frac{p(x_{t+1}|x_t,u_t)}{p_{max}(x_{t+1}|x_t)}\Bigg]. \nonumber
    \end{align}
    So far, we have merely rearranged the terms in the MaxDiff RL objective by taking advantage of the linearity of expectations and the definition of $P_{\pi}$ in Eq.~\ref{eq:policy_dists}. Now, we proceed by applying Jensen's inequality to the last term of our expression above---bringing in the expectation over control actions into the logarithm, noting that $E_{u_t\sim\pi}[p_{max}(x_{t+1}|x_t)]=p_{max}(x_{t+1}|x_t)$, and doing more algebraic manipulations:
    \begin{align}
        &\leq E_{P_{\pi}} \Bigg[ \sum_{t=1}^N l(x_t,u_t) + \alpha\log \pi(u_t|x_t)\Bigg] + \sum_{t=1}^N E_{x_t\sim p} \Bigg[\alpha\log \frac{E_{u_t\sim \pi}[p(x_{t+1}|x_t,u_t)]}{p_{max}(x_{t+1}|x_t)}\Bigg] \nonumber\\
        &\leq E_{P_{\pi}} \Bigg[ \sum_{t=1}^N l(x_t,u_t) + \alpha\log \pi(u_t|x_t)\Bigg] + \sum_{t=1}^N E_{x_t\sim p} \Bigg[\alpha\log \frac{p_{\pi}(x_{t+1}|x_t)}{p_{max}(x_{t+1}|x_t)}\Bigg]\nonumber\\
        &\leq E_{P_{\pi}} \Bigg[\sum_{t=1}^N l(x_t,u_t) + \alpha\log \pi(u_t|x_t) + \alpha D_{KL}\big(p_{\pi}(x_{t+1}|x_t) \big|\big| p_{max}(x_{t+1}|x_t)\big)   \Bigg],\label{eq:proof_ineq_maxdiff}
    \end{align}
    where we also used the definition of $p_{\pi}(x_{t+1}|x_t)$ from Eq.~\ref{eq:full_control_transition}. 
    
    To conclude our proof, we must show that the MaxEnt RL objective emerges from the MaxDiff RL objective under the assumption that an agent's state transitions are decorrelated. We can formalize what decorrelation requires of an agent in one of two contexts---that of agents with continuous experiences, or in general. Our derivation throughout Supplementary Note~\ref{sec:maxdiff_theory} achieves this in the context of agents with continuous experiences. Therein, we proved that the least-correlated continuous agent paths uniquely satisfy maximally diffusive trajectory statistics, which requires that $D_{KL}(p_{\pi}|| p_{max}) = 0$ when there exists an optimizing policy $\pi$. Alternatively, completely decorrelating the state transitions of an agent in general requires being able to generate arbitrary jumps between states---as discussed in the main text---which requires full controllability (see Definition~\ref{def:full_controllability}). Given full controllability, the optimum of Eq.~\ref{eq:proof_ineq_maxdiff} is also reached when $D_{KL}(p_{\pi}|| p_{max}) = 0$.
    
    Applying the assumption of decorrelated state transitions in either of the two senses expressed above not only simplifies Eq.~\ref{eq:proof_ineq_maxdiff} by removing the KL divergence term but also by saturating Jensen's inequality, which recovers the equality between the left and right hand sides of our equations:
    \begin{equation}
        E_{P_{\pi}} \Bigg[\sum_{t=1}^{N} \hat{l}_c(x_t,u_t) \Bigg] = E_{P_{\pi}} \Bigg[\sum_{t=1}^N l(x_t,u_t) + \alpha\log \pi(u_t|x_t) \Bigg],\nonumber
    \end{equation}
    where we added the subscript $c$ to indicate that this applies under the assumption of decorrelated state transitions---either in the context of agents with continuous paths (with maximum diffusivity as a necessary condition) or in general (with full controllability as a sufficient condition). Putting together our final results, we may now write down the simplified MaxDiff RL optimization objective with the added assumption of decorrelated state transitions:
    \begin{equation}\label{eq:soc_maxent}
        \pi^* = \underset{\pi}{\text{argmin}}  \ E_{(x_{1:N},u_{1:N})\sim P_{\pi}} \Bigg[\sum_{t=1}^{N} \gamma^t\hat{l}_c(x_t,u_t) \Bigg],
    \end{equation}
    where we reintroduced $\gamma$, and with 
    \begin{equation}\label{eq:soc_maxent_running_cost}
        \hat{l}_c(x_t,u_t) = l(x_t,u_t)+\alpha \log \pi(u_t|x_t),
    \end{equation}
    or equivalently, we can write Eq.~\ref{eq:soc_maxent} as a maximization by replacing the cost with a reward function:
    \begin{equation}\label{eq:soc_maxent_running_reward}
        \hat{r}_c(x_t,u_t) = r(x_t,u_t)+\alpha \mathcal{H}(\pi(u_t|x_t)),
    \end{equation}
    where we briefly changed our entropy notation, using $\mathcal{H}(\pi(u_t|x_t))=S[\pi(u_t|x_t)]$, to highlight similarities with other results in the literature. Crucially, we recognize this objective as the MaxEnt RL objective, which proves that MaxDiff RL is a strict generalization of MaxEnt RL to agents with temporally correlated experiences and concludes our proof. We note that this also proves that maximizing policy entropy does not decorrelate state transitions in general because maximizing policy entropy does not minimize $D_{KL}(p_{\pi}||p_{max})$. 
\end{proof}
\noindent In contrast to MaxEnt RL, when the agent-environment state transition dynamics introduce temporal correlations, the MaxDiff RL objective continues to prioritize effective exploration by decorrelating state transitions and encouraging the system to realize maximally diffusive trajectories. As we have shown above, MaxEnt RL's strategy of decorrelating action sequences is only as effective as MaxDiff RL's strategy of decorrelating state sequences when the underlying system's dynamics do not introduce temporal correlations on their own.

Now, we turn to analyzing the formal properties of MaxDiff RL agents. In particular, we will analyze how the ergodic properties of maximally diffusive trajectories (i.e., Theorem~\ref{thm:ergodicity}) can have an impact on the learning performance of MaxDiff RL agents. Namely, on their single-shot learning capabilities and their robustness to seeds and initializations. Prior to proceeding formally, we must first provide a framework from which to asses the learning performance of RL agents. To this end, we will make use of the representation-agnostic probably approximately correct in Markov decision processes (PAC-MDP) framework~\cite{Strehl2006,Strehl2009}. 
\begin{definition}
    \label{def:supp_pac-mdp}
    An algorithm $\mathcal{A}$ is said to be PAC-MDP (Probably Approximately Correct in Markov Decision Processes) if, for any $\epsilon>0$ and $\delta \in (0,1)$, a policy $\pi$ can be produced with $poly(|\mathcal{X}|, |\mathcal{U}|, 1/\epsilon, 1/\delta, 1/(1-\gamma))$ sample complexity that is at least $\epsilon$-optimal with probability at least $1-\delta$. In other words, if $\mathcal{A}$ satisfies
    \begin{equation}
        \label{eq:supp_pac_mdp_condition}
        {\normalfont\text{Pr}}\big(\mathcal{V}_{\pi^*}(x_0)-\mathcal{V}_{\pi}(x_0)\leq \epsilon\big) \geq 1-\delta
    \end{equation}
    with polynomial sample complexity for all $x_0\in\mathcal{X}$, where 
    \begin{equation}
        \label{eq:value_function}
        \mathcal{V}_{\pi}(x_t) = E_{p,\pi}\Bigg[\sum_{n=0}^{\infty}\gamma^n r(x_{n+t},u_{n+t})\Bigg|x_t=x\Bigg]
    \end{equation}
    is the value function and $\mathcal{V}_{\pi^*}(x_t)$ is the optimal value function, then $\mathcal{A}$ is PAC-MDP.
\end{definition}
\noindent Thus, this framework states that an algorithm $\mathcal{A}$ is PAC-MDP when it is capable of learning a policy with polynomial sample complexity that can get arbitrarily close to the optimal policy with arbitrarily high probability. We note that this framework is representation-agnostic in the sense that, regardless of whether $\mathcal{A}$ involves any kind of neural network representation, any algorithm that satisfies Definition~\ref{def:supp_pac-mdp} is guaranteed to be at least $\epsilon$-optimal.

Given our definition of the PAC-MDP framework, we will now consider the implications of our results on single-shot learning. Most applications of deep RL take place in episodic environments where after each execution of a given task, the agent and environment are reset, and their initial conditions are randomized. This is the setting that we have referred to as ``multi-shot'' learning throughout our manuscript. However, learning outside of episodic environments is crucial to real-world applications of deep RL~\cite{Chen2022_yolo, Lu2021_reset}. In non-episodic tasks, agents are expected to learn within a single deployment without resetting the task or environment, which we have referred to as the ``single-shot'' learning setting throughout our manuscript. With this in mind, we are now able to state our next formal result in terms of the PAC-MDP framework.
\begin{theorem}
    \label{thm:maxdiff_single}
    (Theorem 3 of Main Text) If there exists a PAC-MDP algorithm $\mathcal{A}$ with policy $\pi^{max}$ for the MaxDiff RL objective (Eq.~\ref{eq:supp_soc_maxdiff_running_cost}), then the Markov chain induced by $\pi^{max}$ is ergodic, and any individual initialization of $\mathcal{A}$ will asymptotically satisfy the same $\epsilon$-optimality as an ensemble of initializations.
\end{theorem}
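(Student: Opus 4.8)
The plan is to split the argument into two essentially independent pieces: first establish the ergodicity of the Markov chain induced by $\pi^{max}$, and then invoke Birkhoff's pointwise ergodic theorem to equate single-trajectory (single-shot) performance with ensemble (multi-shot) performance. The ergodicity step is the same one needed for Theorem~\ref{thm:maxdiff_robust}, so I would prove it once and reuse it.

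For the first piece, I would argue that because $\mathcal{A}$ is PAC-MDP for the MaxDiff RL objective of Eq.~\ref{eq:supp_soc_maxdiff_running_cost}, the policy $\pi^{max}$ it produces asymptotically attains that objective's optimum. Reading off the factored form of the objective in Eq.~\ref{eq:soc_maxdiff_workingpoint}, attaining the optimum drives the divergence $D_{KL}(p_{\pi^{max}} \| p_{max})$ to zero, so the induced state-transition kernel $p_{\pi^{max}}(x_{t+1}|x_t)$ coincides with the maximally diffusive kernel (the reward-weighted $p_{max}$). Theorem~\ref{thm:ergodicity} (equivalently Corollary~\ref{cor:ergodicity}) then applies verbatim: on a compact, connected $\mathcal{X}$ the induced chain has a strictly positive, aperiodic transition kernel in which all states communicate, hence it is ergodic and admits a unique invariant measure $\mu$.

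For the second piece, I would equip trajectory space with the shift map and the stationary measure of this ergodic chain, obtaining a measure-preserving dynamical system to which Birkhoff's ergodic theorem applies. The key move, flagged in the main text, is to treat the PAC-MDP learnability condition as an observable: define $f$ to be the bounded (hence $\mu$-integrable) value-suboptimality gap $\mathcal{V}_{\pi^*}(x) - \mathcal{V}_{\pi^{max}}(x)$, or the indicator of the event $\{\mathcal{V}_{\pi^*}(x) - \mathcal{V}_{\pi^{max}}(x) \le \epsilon\}$ from Eq.~\ref{eq:supp_pac_mdp_condition}. Birkhoff then guarantees that, for $\mu$-almost every initial state, the time-average of $f$ along a single trajectory converges to the spatial average $E_{x\sim\mu}[f]$. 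Since the PAC-MDP guarantee is precisely a statement about this ensemble average over initializations, and Birkhoff identifies it with the single-trajectory time average, any individual initialization asymptotically inherits the same $\epsilon$-optimality as the ensemble, which is the claim.

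The hard part will be the bridge between the probabilistic PAC-MDP statement---a bound holding with probability $1-\delta$ over the randomness of a run or seed---and the deterministic time-average that Birkhoff produces. Concretely, I need to argue that the ``ensemble of initializations'' in the theorem is exactly integration against the invariant measure $\mu$, and that the $\epsilon$-optimality event is a measurable observable whose ensemble frequency is therefore well-defined. Care is also required because Birkhoff yields almost-everywhere (not everywhere) convergence and only asymptotic equality; I would accordingly state the conclusion as an asymptotic, almost-sure equivalence, consistent with the theorem's wording, and note that the guarantee degrades exactly when ergodicity fails (e.g., the sink states of the ant environment), which matches the empirical increase in variance reported there.
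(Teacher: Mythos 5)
Your proposal is correct and follows essentially the same route as the paper's proof: PAC-MDP optimality gives $D_{KL}(p_{\pi^{max}}\|p_{max})\approx 0$, Theorem~\ref{thm:ergodicity} yields ergodicity of the induced chain with invariant measure $\rho$, and Birkhoff's ergodic theorem applied to the bounded indicator observable $\mathbf{1}\{\mathcal{V}_{\pi^*}(x_t)-\mathcal{V}_{\pi^{max}}(x_t)\leq\epsilon\}$ equates the single-trajectory time average with the ensemble average, almost surely. The bridge you flag as the hard part is handled in the paper exactly as you suggest---by rewriting the PAC-MDP probability as $E_{x_0\sim\rho}\big[\mathbf{1}\{\mathcal{V}_{\pi^*}(x_0)-\mathcal{V}_{\pi^{max}}(x_0)\leq \epsilon\}\big]\geq 1-\delta$, i.e., identifying the ensemble of initializations with integration against the invariant measure.
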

\begin{proof}
    This theorem follows directly from the ergodicity of maximally diffusive trajectories (Theorem~\ref{thm:ergodicity}), some basic facts about MDPs~\cite{Puterman2014}, and the application of Birkhoff's ergodic theorem~\cite{Hairer2018} onto our definition of PAC-MDP (Definition~\ref{def:supp_pac-mdp}). First, since $\mathcal{A}$ is capable of producing an $\epsilon$-optimal policy, $\pi^{max}$, we take $D_{KL}(p_{\pi^{max}}||p_{max})\approx 0$ for some choice of $\epsilon$, given that $p_{\pi^{max}}(x_{t+1}|x_t) = \int_{\mathcal{U}}p(x_{t+1}|x_t,u_t)\pi^{max}(u_t|x_t)du_t$. Then, it is well-known that any given policy in an MDP gives rise to a Markov chain on the state-space of the MDP~\cite{Puterman2014}. Naturally, the properties of the policy-induced Markov chain depend on the properties of the resulting state transition kernel (e.g., $p_{\pi}(x_{t+1}|x_t)$).
    
    Now, let $\{x_t\}_{t\in\mathbb{N}}$ be a Markov chain with state transition properties determined by $p_{\pi^{max}}(x_{t+1}|x_t)$. Because we know that $D_{KL}(p_{\pi^{max}}||p_{max})\approx 0$, the Markov chain described by $p_{\pi^{max}}(x_{t+1}|x_t)$ is ergodic (per Theorem~\ref{thm:ergodicity}) with invariant measure $\rho$. To proceed further, we will now state Birkhoff's well-known ergodic theorem~\cite{Hairer2018,Moore2015}.
    \begin{theorem}
        \label{thm:birkhoff}
        (Birkhoff's ergodic theorem) Let $\{x_t\}_{t\in\mathbb{N}}$ be an aperiodic and irreducible Markov process on a state space $\mathcal{X}$ with invariant measure $\rho$ and let $f:\mathcal{X}\rightarrow\mathbb{R}$ be any measurable function with $E[|f(x)|]<\infty$. Then, one has 
        \begin{equation}
            \lim_{T\rightarrow\infty}\frac{1}{T}\sum_{t=0}^T f(x_t) = E_{x_0\sim\rho}[f(x_0)]
        \end{equation}
        almost surely.
    \end{theorem}
    \noindent In other words, Birkhoff's ergodic theorem states the the time-average of any function of an ergodic Markov chain is equal to its ensemble average. 

    Now, we return to the definition of PAC-MDP to slightly manipulate the expression:
    \begin{equation}
        \begin{split}
            {\normalfont\text{Pr}}\big(\mathcal{V}_{\pi^*}(x_0)-\mathcal{V}_{\pi^{max}}(x_0)\leq \epsilon\big) &\geq 1-\delta\notag\\
            E_{x_0\sim\rho}\Big[\mathbf{1}\{\mathcal{V}_{\pi^*}(x_0)-\mathcal{V}_{\pi^{max}}(x_0)\leq \epsilon\}\Big]&\geq 1-\delta,\notag
        \end{split}
    \end{equation}
    where $\mathbf{1}\{\cdot\}$ denotes an indicator function. In other words, to be PAC-MDP is equivalent to being at least $\epsilon$-optimal on average at least $100\times (1-\delta)\%$ of episodes. To conclude our proof, let
    \begin{equation}
        f(x_t) = \mathbf{1}\{\mathcal{V}_{\pi^*}(x_t)-\mathcal{V}_{\pi^{max}}(x_t)\leq \epsilon\} \notag
    \end{equation}
    be an observable, which we note is bounded, and apply Birkhoff's theorem. Then, we will have
    \begin{equation}
        \lim_{T\rightarrow\infty}\frac{1}{T}\sum_{t=0}^T \mathbf{1}\{\mathcal{V}_{\pi^*}(x_t)-\mathcal{V}_{\pi^{max}}(x_t)\leq \epsilon\} = E_{x_0\sim\rho}[\mathbf{1}\{\mathcal{V}_{\pi^*}(x_0)-\mathcal{V}_{\pi^{max}}(x_0)\leq \epsilon\}],\notag
    \end{equation}
    which proves that any individual initial condition will satisfy the ensemble average. In turn, we have
    \begin{equation}
        {\normalfont\text{Pr}}\big(\mathcal{V}_{\pi^*}(x_0)-\mathcal{V}_{\pi^{max}}(x_0)\leq \epsilon\big) \geq 1-\delta \implies \lim_{T\rightarrow\infty}\frac{1}{T}\sum_{t=0}^T  \mathbf{1}\{\mathcal{V}_{\pi^*}(x_t)-\mathcal{V}_{\pi^{max}}(x_t)\leq \epsilon\} \geq 1-\delta\notag
    \end{equation}
    almost surely, which proves that an algorithm that is PAC-MDP during multi-shot (episodic) learning is guaranteed to be PAC-MDP during single-shot (non-episodic) learning if the underlying Markov chain induced by the policy is ergodic. This concludes our proof.
\end{proof}
\noindent An important note about this proof is that it clarifies why ergodic sampling along continuous Markovian trajectories is the best possible alternative to \textit{i.i.d.} sampling (via Theorem~\ref{thm:birkhoff}). As a result of ergodicity, the sampling statistics of any individual realization of an ergodic process are indistinguishable from \textit{i.i.d.} sampling asymptotically, almost surely.

Next, we will prove that any PAC-MDP algorithm applied onto the MaxDiff RL objective will be robust to initial conditions and random seeds, which is a highly desirable property of deep RL agents.

\begin{theorem}
    \label{thm:maxdiff_robust}
    (Theorem 2 of Main Text) If there exists a PAC-MDP algorithm $\mathcal{A}$ with policy $\pi^{max}$ for the MaxDiff RL objective (Eq.~\ref{eq:supp_soc_maxdiff_running_cost}), then the Markov chain induced by $\pi^{max}$ is ergodic, and $\mathcal{A}$ will be asymptotically $\epsilon$-optimal regardless of initialization.
\end{theorem}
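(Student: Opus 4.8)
The plan is to follow the same ergodic-theoretic strategy used for Theorem~\ref{thm:maxdiff_single}, since the two results share identical hypotheses and differ only in the conclusion they extract from the induced Markov chain. First I would invoke the PAC-MDP assumption directly: because $\mathcal{A}$ produces an $\epsilon$-optimal policy $\pi^{max}$ for the MaxDiff RL objective, its induced state-transition kernel $p_{\pi^{max}}(x_{t+1}|x_t)=\int_{\mathcal{U}}p(x_{t+1}|x_t,u_t)\pi^{max}(u_t|x_t)du_t$ satisfies $D_{KL}(p_{\pi^{max}}||p_{max})\approx 0$ for the corresponding $\epsilon$. By Theorem~\ref{thm:ergodicity}, a chain whose kernel coincides with the maximally diffusive kernel (to this tolerance) is ergodic on the compact, connected space $\mathcal{X}$, and therefore admits a \emph{unique} invariant measure $\rho$. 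This uniqueness is the crux of the robustness claim and is what separates it from the single-shot statement.

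Next I would apply Birkhoff's ergodic theorem (Theorem~\ref{thm:birkhoff}) to the bounded observable $f(x_t)=\mathbf{1}\{\mathcal{V}_{\pi^*}(x_t)-\mathcal{V}_{\pi^{max}}(x_t)\leq\epsilon\}$, exactly as in the proof of Theorem~\ref{thm:maxdiff_single}. The time average $\tfrac{1}{T}\sum_{t=0}^{T}f(x_t)$ converges almost surely to $E_{x_0\sim\rho}[f(x_0)]\geq 1-\delta$, where the inequality is the PAC-MDP condition rewritten as an expectation of the indicator against $\rho$. The essential observation for robustness is that the limit is a functional of the \emph{unique} invariant measure $\rho$ alone, carrying no residual dependence on the starting state. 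Hence any two initializations---and in particular any two random seeds that fix different initial states or different stochastic rollouts---produce trajectories whose asymptotic fraction of $\epsilon$-optimal steps equals the same value $E_{x_0\sim\rho}[f]\geq 1-\delta$. This is precisely the assertion that $\mathcal{A}$ is asymptotically $\epsilon$-optimal regardless of initialization.

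The step I expect to be the main obstacle is upgrading Birkhoff's conclusion from ``for $\rho$-almost every starting point'' to ``for every initialization,'' since the theorem guarantees convergence only almost surely with respect to the invariant measure. To close this gap I would lean on the constructive ergodicity established in Corollary~\ref{cor:ergodicity} and Theorem~\ref{thm:ergodicity}: after the space--time discretization used there, the induced kernel is strictly positive ($p_{max}(x_{t+1}|x_t)>0$ everywhere) and self-communicating ($p_{max}(x^*|x^*)>0$), so the chain is irreducible and aperiodic with full support. For such chains every state is reachable and the stationary distribution is approached from \emph{any} initial distribution, which promotes the almost-sure convergence to convergence from all $x_0\in\mathcal{X}$. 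A secondary subtlety is that $\epsilon$-optimality holds only up to the $D_{KL}(p_{\pi^{max}}||p_{max})\approx 0$ approximation, so the guarantee is asymptotic in exactly the sense that ergodicity-breaking---for instance, irreversible ``sink'' states---would invalidate, consistent with the caveats already noted for Theorem~\ref{thm:maxdiff_single}.
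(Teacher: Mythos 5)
Your proposal is correct and follows essentially the same route as the paper's proof: both apply Birkhoff's ergodic theorem (Theorem~\ref{thm:birkhoff}) to the bounded indicator observable $f(x_t)=\mathbf{1}\{\mathcal{V}_{\pi^*}(x_t)-\mathcal{V}_{\pi^{max}}(x_t)\leq\epsilon\}$ along the ergodic chain induced by $\pi^{max}$ (with ergodicity obtained from $D_{KL}(p_{\pi^{max}}||p_{max})\approx 0$ and Theorem~\ref{thm:ergodicity}), concluding that the time average converges to the same ensemble average over the unique invariant measure $\rho$ independently of the starting state---the paper phrases this as $\lim_{T\rightarrow\infty}\frac{1}{T}\sum_{t=0}^{T}|f(x_t)-f(x'_t)|=0$ for two chains with identical kernels but different initial conditions. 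Your extra step upgrading convergence from $\rho$-almost-every starting point to every $x_0\in\mathcal{X}$ via the strict positivity (irreducibility and aperiodicity) of the kernel addresses a subtlety the paper leaves implicit, so your write-up is, if anything, slightly more careful on the same argument.
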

\begin{proof}
    The proof of this theorem is simple given the proof to Theorem~\ref{thm:maxdiff_single}. Once again, let 
    \begin{equation}
        f(x_t) = \mathbf{1}\{\mathcal{V}_{\pi^*}(x_t)-\mathcal{V}_{\pi^{max}}(x_t)\leq \epsilon\} \notag
    \end{equation}
    be an observable. Now, let $\{x_t\}_{t\in\mathbb{N}}$ and $\{x'_t\}_{t\in\mathbb{N}}$ both be ergodic Markov chains with identical transition kernels given by $p_{\pi^{max}}$, but with different initial conditions $x_0,x'_0\in\mathcal{X}$. Then, since Birkhoff's ergodic theorem guarantees that the time-averages of observables from $\{x_t\}_{t\in\mathbb{N}}$ and $\{x'_t\}_{t\in\mathbb{N}}$ will converge to the same unique ensemble average over the invariant measure $\rho$ (Theorem~\ref{thm:birkhoff}), the following is true:
    \begin{equation}
        \lim_{T\rightarrow\infty}\frac{1}{T}\sum_{t=0}^T|f(x_t)-f(x'_t)|= 0\notag
    \end{equation}
    for any $x_0,x'_0\in\mathcal{X}$ almost surely. This proves that any PAC-MDP algorithm is guaranteed to be robust to random seeds and environmental initializations if the underlying Markov chain induced by the policy is ergodic, which concludes our proof.
\end{proof}
\noindent Thus, we have now proven that MaxDiff RL generalizes MaxEnt RL, that MaxDiff RL agents are capable of single-shot learning, and that MaxDiff RL agents are robust to initial conditions and random seeds---all because of the deep connection between maximally diffusive trajectories and ergodicity.

An interesting aside is that the MaxDiff RL objective formally requires model-based techniques to optimize because of its dependence on the system's state transition dynamics. In this sense, MaxEnt RL is the best one can do in a model-free setting---yet, with model-based techniques better performance is attainable when the system dynamics introduce temporal correlations. However, if one has direct access to state transition entropy estimates, then by reformulating the objective function in Eq.~\ref{eq:supp_soc_maxdiff}, it is technically possible to extend our results to model-free algorithms, as we show in the following sections.

\subsubsection{Alternative synthesis approach via path entropy maximization}
\label{sec:exploration_entropymax}
In Supplementary Note~\ref{sec:KL_control}, we derived a synthesis approach based on KL control that optimizes exploration and task performance by making agents realize maximally diffusive trajectories. Alternatively, we can use the fact that in Supplementary Note~\ref{sec:maxdiff_theory} we derived the unique trajectory distribution $P_{max}[x(t)]$ with maximum entropy $S[P_{max}[x(t)]]$ that satisfies our constraints---which merely amount to prohibiting teleportation between states. As a result of this, we know that $S[P_{max}[x(t)]] \geq S[P_{u(t)}[x(t)]]$ with equality if and only if $P_{max}[x(t)]=P_{u(t)}[x(t)]$. Thus, instead of minimizing the KL divergence, we can instead maximize $S[P_{u(t)}[x(t)]]$, leading to the following equivalent optimization problem,
\begin{equation}\label{eq:supp_opt_obj1}
    \underset{u(t)}{\text{argmax}}  \ S[P_{u(t)}[x(t)]],
\end{equation}
whose optimum satisfies $S[P_{u^*(t)}[x(t)]]=S[P_{max}[x(t)]]$. Based on this specification, we can define several other equivalent MaxDiff trajectory synthesis problem specifications that may be more or less convenient depending on the details of the application domain:
\begin{align}\label{eq:supp_opt_obj1_forms}
    \underset{u(t)}{\text{max}}  \ S[P_{u(t)}[x(t)]], \ &\ \ \underset{u_{1:N-1}}{\text{max}}  \ S\Bigg[\prod_{t=1}^{N} p(x_{t+1}|x_t,u_t)\Bigg], \nonumber\\
    \underset{\pi}{\text{max}} \ S[P_{\pi}[x(t),u(t)]],& \ \ \ \underset{\pi}{\text{max}}  \ S\Bigg[\prod_{t=1}^{N} p(x_{t+1}|x_t,u_t)\pi(u_t|x_t)\Bigg],
\end{align}
where $P_{\pi}[x(t),u(t)]$ is a continuous-time distribution over states and control actions analogous to $P_{\pi}[x_{1:N},u_{1:N}]$, and we can think of a controller as a policy given by a Dirac delta distribution centered at $u_t$. The equivalence between the KL control and SOC formulations of the problem, and the formulation we have produced in this section, leads to
\begin{align}\label{eq:maxent_control_obj_freeen}
     &\underset{u(t)}{\text{argmin}}  \ E_{P_{u(t)}} \big[ L[x(t),u(t)] \big]-\alpha S[P_{u(t)}[x(t)]], \nonumber\\
     &\underset{\pi}{\text{argmin}}  \ E_{P_{\pi}} \big[ L[x(t),u(t)] \big]-\alpha S[P_{\pi}[x(t),u(t)]]
\end{align}
and 
\begin{align}\label{eq:maxent_control_obj_freeen2}
    \underset{u_{1:N}}{\text{argmin}}  \ E_{P_{u_{1:N}}} &\Bigg[ \sum_{t=1}^N l(x_t,u_t)-\alpha S[p(x_{t+1}|x_t,u_t)]\Bigg], \nonumber \\
    \underset{\pi}{\text{argmin}}  \ E_{P_{\pi}} \Bigg[ \sum_{t=1}^N &l(x_t,u_t)-\alpha S[p(x_{t+1}|x_t,u_t)\pi(u_t|x_t)]\Bigg]
\end{align}
also being formally equivalent to Eq.~\ref{eq:KL_control_obj}, where we omit $\gamma$. While the different objectives listed in Eqs.~\ref{eq:supp_opt_obj1_forms}-\ref{eq:maxent_control_obj_freeen2} may seem redundant, some of these may prove to be more readily applicable in particular domains, or to a given practitioner's preferred policy synthesis approach. In the following section, we derive an additional objective that attains the same optimum as Eqs.~\ref{eq:supp_opt_obj1_forms}-\ref{eq:maxent_control_obj_freeen2}, but is better suited to model-free optimizations.

\subsubsection{Simplified synthesis via local entropy maximization}
\label{sec:exploration_controllers}
As currently written, the objectives specified thus far require access to $p(x_{t+1}|x_t,u_t)$. To avoid this, we can simplify the problem by assuming that our agent's path statistics are already within a \textit{local} variational neighborhood of the optimal statistics. We formalize this optimistic assumption by asserting that our agent's path probability densities are of the following form,
\begin{equation}\label{eq:supp_dist_approx}
    P_{u(t)}^L[x(t)] = \frac{1}{Z} \exp\Big[-\frac{1}{2}\int_{t_0}^{t} \dot{x}(\tau)^T\mathbf{C}^{-1}_{u(t)}[x(\tau)]\dot{x}(\tau) d\tau\Big],
\end{equation}
where it is still the case that $S[P_{max}[x(t)]] \geq S[P_{u(t)}^L[x(t)]]$, and that the optimum can only be reached if and only if $P_{max}[x(t)]=P_{u(t)}^L[x(t)]$. Hence, by optimizing $S[P_{u(t)}^L[x(t)]]$ we merely change the direction from which our system approaches the true variational optimum of Eq.~\ref{eq:KL_control_obj}.

We proceed by analytically deriving the functional form of $S[P_{max}[x(t)]]$, and then using it to formulate our optimization of $S[P_{u(t)}^L[x(t)]]$. We begin by considering the path entropy along a finite path, $S[P_{max}[x_{1:N}]]$, where we can apply the chain rule of conditional entropies. For the reader's convenience, we state the chain rule as it is commonly formulated below:
\begin{equation}
    \label{eq:supp_cond_ent_chainrule}
    S[P[x_{1:N}]] = \sum_{t=1}^{N} S[p(x_{t+1}|x_{1:t})].
\end{equation}
Then, applying this property directly onto $P_{max}[x_{1:N}]$ we have,
\begin{equation}
    \label{eq:supp_disc_path_ent}
    S[P_{max}[x_{1:N}]]=\sum_{t=1}^{N} S[p_{max}(x_{t+1}|x_{t})] \propto \frac{1}{2}\sum_{t=1}^{N}\log \det \mathbf{C}[x_t],
\end{equation}
where we made use of the Markov property to simplify our sum over conditional entropies, and then the analytical form of the entropy of a Gaussian distribution (up to a constant offset) to reach our final expression. Thus, realizing maximally diffusive trajectories merely requires synthesizing a controller $u(t)$ or policy $\pi(\cdot|\cdot)$ that satisfies $\mathbf{C}_{u(t)}[x^*]=\mathbf{C}_{\pi}[x^*]=\mathbf{C}[x^*]$ for all $x^*\in\mathcal{X}$, which can be done through optimization.

In this way, we can arrive at the MaxDiff RL objective presented in the main text, which is expressed in terms of an instantaneous reward function, $r(x_t,u_t)$. Omitting $\gamma$, the implemented MaxDiff RL objective is the following,
\begin{equation}
    \label{eq:supp_maxdiff_ratt}
    \underset{\pi}{\text{argmax}}  \  E_{(x_{1:N},u_{1:N})\sim P_{\pi}}\Bigg[ \sum_{t=1}^{N} r(x_t,u_t)+\frac{\alpha}{2} \log \det \text{\textbf{C}}_{\pi}[x_t]\Bigg].
\end{equation}
This objective is the one that we used to produce all empirical results in the main text. Here, we would like to make a few practical notes regarding its implementation. First, in practice we may not always have guarantees on the full-rankness of $\mathbf{C}[x^*]$, which can make its determinant evaluate to zero and create numerical stability issues. To remedy this, we may take advantage of another property of the log-determinant and instead optimize $\sum_{i=1}^{M}\log \lambda_i$, where the sum is taken over the leading $M$ eigenvalues of $\mathbf{C}[x^*]$. However, it is important to note that this effectively restricts the exploration to an $M$-dimensional subspace of $\mathcal{X}$. Separately, we note that one can instead optimize the logarithm of the trace of $\mathbf{C}[x^*]$ as an approximation that drastically reduces the computational complexity of the determinant in high dimensional systems. However, this approximation only formally produces equivalent results to the log-determinant when system states vary independently from one another (i.e., when $\mathbf{C}[x^*]$ is diagonal), which is generally not the case. Nonetheless, it may be of help to a practitioner at the cost of some added distance to the assumptions underlying our formal guarantees.

Since the objective in Eq.~\ref{eq:supp_maxdiff_ratt} does not explicitly depend on $p(x_{t+1}|x_t,u_t)$, it provides us with a clear setting in which to frame model-free implementations of MaxDiff RL. Because our theoretical approach in Supplementary Note~\ref{sec:maxdiff_theory} assumes that there is a unique $\mathbf{C}[x^*]$ for all $x^*\in\mathcal{X}$, we can reinterpret $\frac{1}{2}\log\det\mathbf{C}[x^*]$ as an environmental property. In other words, because the function $S[x^*]=\frac{1}{2}\log\det\mathbf{C}[x^*]$ is a real-valued function of state that in principle does not require direct access to the state transition dynamics, we can think of $S[x^*]$ as a state-dependent property of the environment. By doing so, implementing model-free MaxDiff RL becomes very similar to existing state entropy maximization techniques in model-free RL~\cite{Lee2020_RL, Seo2021,Mutti2021}. The primary challenge is to learn a parametric estimator of $S[x^*]$---similar to those used in~\cite{Lee2020_RL, Seo2021,Mutti2021}, or~\cite{Prabhakar2022}---such that $\hat{S}_{\theta}[x^*] \approx S[x^*]$, where $\theta$ represents some vector of parameters (e.g., neural network weights). Then, given such an estimate model-free implementations are possible by augmenting the value function with $\hat{S}_{\theta}[x_t]$. Moreover, we note that our implementation of model-based MaxDiff RL makes use of data-driven estimates of $\mathbf{C}[x^*]$ during its optimization. While it may seem that evaluating $\mathbf{C}[x^*]$ still requires access to predictive system rollouts in a model-based fashion, $\mathbf{C}[x^*]$ can be empirically estimated from data moving backwards in time---in other words, $\mathbf{C}[x^*]=\int_{t_i-\Delta t}^{t_i}K_{XX}(\tau,t_i)d\tau$. However, we note that equality between the forwards and backwards data-driven estimates of  $\mathbf{C}[x^*]$ is only guaranteed for stationary processes. Nonetheless, this shows that non-parametric model-free data-driven estimates of $\mathbf{C}[x^*]$ are also possible.

\subsubsection{Example applications of MaxDiff trajectory synthesis}
\label{sec:maxdiff_exploration}
In this section, we implement MaxDiff trajectory synthesis across handful of applications outside of reinforcement learning that require both directed and undirected exploration. These should illustrate the sense in which our theoretical framework can extend beyond a particular algorithmic implementation, or even reinforcement learning as a problem setting. Moreover, here we will analyze the behavior of various dynamical systems made to follow maximally diffusive trajectories through the lens of statistical mechanics. 

\begin{figure}[t!]
    \centering
    \includegraphics[width=1.0\linewidth]{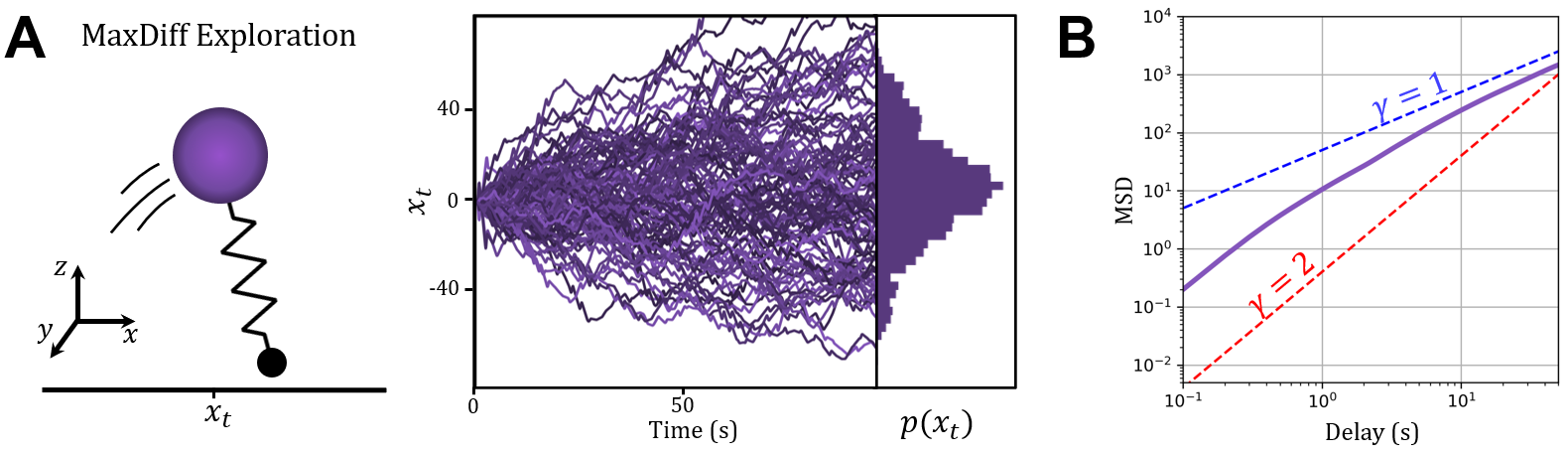}
    \caption{\textbf{Maximally diffusive trajectories of a spring-loaded inverted pendulum (SLIP)}. \textbf{a}, The SLIP model (left panel) is a 9-dimensional nonlinear and nonsmooth second-order dynamical system, which is used as a popular model of human locomotion. (right panel) We choose this system because it is far from the ideal assumptions under which our theory is formulated, and yet its sample paths behave as we expect. The sample paths of the SLIP model with MaxDiff trajectories in the one dimensional space determined by its $x$-coordinate approximately match the statistics of pure Brownian motion in one dimension. \textbf{b}, Mean squared displacement (MSD) plots give the deviation of the position of an agent over time with respect to a reference position. We can distinguish between diffusion processes by comparing the growth of their MSD over time. In general, we expect them to follow a relationship described by $\text{MSD}(x) \propto t^{\gamma}$, where $\gamma$ is an exponent that determines the different diffusion regimes (normal diffusion $\gamma=1$, superdiffusion $1<\gamma<2$, ballistic motion $\gamma\geq2$). As we can see, the behavior of the diffusing SLIP model is superdiffusive at short time-scales, but gradually becomes more like a standard diffusion process as we coarse-grain. Similar short-delay superdiffusion regimes have been observed in systems with nontrivial inertial properties~\cite{Scholz2018}, such as those of our macroscopic SLIP agent.}
    \label{fig:supp_diffusion}
\end{figure}

We begin by studying MaxDiff trajectory synthesis in the undirected exploration of a nontrivial control system---a spring-loaded inverted pendulum (SLIP) model. The SLIP model is a popular dynamic model of locomotion and encodes many important properties of human locomotion~\cite{Srinivasan2006}. In particular, we will implement the SLIP model as in~\cite{Ansari2016}, where it is described as a 9-dimensional nonlinear nonsmooth control system. The SLIP model is shown in Supplementary Fig.~\ref{fig:supp_diffusion}(a) and consists of a ``head'' which carries its mass, and a ``toe'' which makes contact with the ground. Its state-space is defined by the 3D velocities and positions of its head and toe, or $x = [x_h,\dot{x}_h,y_h,\dot{y}_h,z_h,\dot{z}_h,x_t,y_t,q]^T$, where $q=\{c,a\}$ is a variable that tracks whether the system is in contact with the ground or in the air. The SLIP dynamics are the following:
\begin{align}
    \label{eq:supp_slipdyn}
     &\dot{x} = f(x,u) = 
    \begin{cases}
        f_c(x,u),& \text{if } l_c < l_0\\
        f_a(x,u),& \text{otherwise}
    \end{cases},\notag \\ 
    f_c(x,u) = &
    \begin{bmatrix}
        \dot{x}_h \\ \frac{(k(l_0-l_s) +u_c)(x_h-x_t)}{ml_c} \\ \dot{y}_h \\ \frac{(k(l_0-l_c) +u_c)(y_h-y_t)}{ml_c} \\ \dot{z}_h \\ \frac{(k(l_0-l_c) +u_c)(z_h-z_t)}{ml_c}-g \\ 0 \\ 0
    \end{bmatrix}, \ 
    f_a(x,u) = \begin{bmatrix}
        \dot{x}_h \\ 0 \\ \dot{y}_h \\ 0 \\ \dot{z}_h \\ -g \\ \dot{x}_h+u_{t_x} \\ \dot{y}_h+u_{t_y}
    \end{bmatrix},
\end{align}
where $f_c(x,u)$ captures the SLIP dynamics during contact with the ground, and $f_a(x,u)$ captures them while in the air. During contact the SLIP can only exert a force, $u_c$, by pushing along the axis of the spring, whose resting length is $l_0$ and its stiffness is $k$. During flight the SLIP is subject to gravity, $g$, and is capable of moving the $x,y$-position of its toe by applying $u_{t_x}$ and $u_{t_y}$, respectively. To finish specifying the SLIP dynamics, and determine whether or not the spring is in contact with the ground, we define, 
\begin{equation}\nonumber
    l_c = \sqrt{(x_h-x_t)^2+(y_h-y_t)^2+(z_h-z_G)^2},
\end{equation}
which describes the distance along the length of the spring to the ground, and $z_G$ is the ground height. Rather than explore diffusively in the entirety of the SLIP model's 9-dimensional state-space, we will first demand that it only explores a 1-dimensional space described by its $x$-coordinate, starting from an initial condition of $x(0)=0$. We can think of this as a projection to a 1-dimensional subspace of the system, or equivalently as a coordinate transformation with a constant Jacobian matrix. We note that the system's nonsmoothness should break the path continuity constraint that our approach presumes to hold. However, since we use a coordinate transformation to formulate the exploration problem in terms of the system's $x$-coordinate we do not violate the assumptions of MaxDiff trajectory synthesis. This is because, while the system's velocities experience discontinuities, its position coordinates do not. In general, the use of coordinate transformations can extend the applicability of MaxDiff trajectory synthesis to even broader classes of systems than those claimed by our theoretical framework throughout Supplementary Note~\ref{sec:maxdiff_theory}. However, this will require a formal analysis of the observability properties of maximally diffusive agents, which lies outside the scope of this work.

\begin{figure}[t!]
    \centering
    \includegraphics[width=0.9999\linewidth]{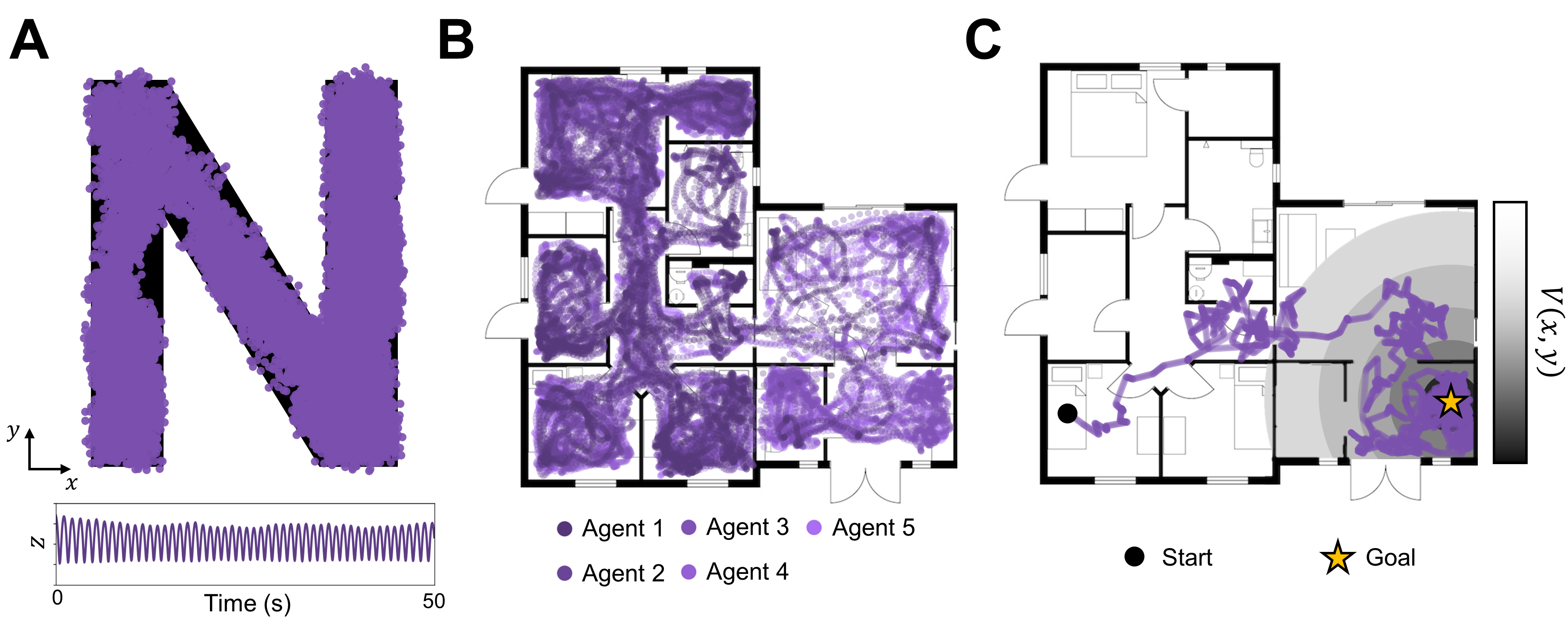}
    \caption{\textbf{SLIP maximally diffusive exploration in various settings}. \textbf{a}, Undirected maximally diffusive exploration in a constrained \textbf{N}-shaped environment. The boundaries of the environment, as well as safety constraints, are established through the use of control barrier functions, which enable safe and continuous maximally diffusive exploration without modifications to our approach. \textbf{b}, Undirected multiagent maximally diffusive exploration of more complex environment: a house's floor plan. Here, five agents with identical objectives perform maximally diffusive exploration. Because maximally diffusive exploration is ergodic, many tasks are inherently distributable between agents with linear scaling in complexity. \textbf{c}, Directed maximally diffusive exploration in a complex environment. Here, a single agent in a complex environment performs directed exploration in a potential that encodes a navigation goal.}
    \label{fig:supp_maxdiff_hopper}
\end{figure}

In order to realize maximally diffusive exploration, we make use of MPPI in conjunction with the MaxDiff trajectory synthesis objective in Eq.~\ref{eq:maxent_control_obj_freeen2}. In Supplementary Fig.~\ref{fig:supp_diffusion} we illustrate the results of this process. Supplementary Fig.~\ref{fig:supp_diffusion}(a) depicts the sample paths generated by the maximally diffusive exploration of the SLIP model's $x$-coordinate. The sample paths of the SLIP agent resemble the empirical statistics of Brownian particle paths despite the fact that the SLIP model is far from a non-inertial point mass. In Supplementary Fig.~\ref{fig:supp_diffusion}(b), we study the fluctuations of maximally diffusive exploration from the lens of statistical mechanics. Here, we analyze the mean squared displacement (MSD) statistics of undirected maximally diffusive exploration and compare to the statistics of standard and anomalous diffusion processes. MSD plots capture the deviations of a diffusing agent from some reference position over time. In standard diffusion processes, the relationship between MSD and time elapsed is linear on average. That is, we expect the squared deviation of a diffusing agent from its initial condition to grow linearly in proportion to the time elapsed (see blue line in Supplementary Fig.~\ref{fig:supp_diffusion}(b)). However, in general there exist other diffusion regimes characterized by the growth of MSD over time. These regimes are typically determined by fitting the exponent $\gamma$ in MSD$(x)\propto t^{\gamma}$, where normal diffusion has $\gamma=1$, superdiffusion has $1<\gamma<2$, and ballistic motion has $\gamma\geq2$. The purple line in Supplementary Fig.~\ref{fig:supp_diffusion}(b) depicts the MSD statistics of the SLIP model. The diffusion generated by the SLIP model's maximally diffusive exploration has superdiffusive displacements over short-time scales owing to the the inertial properties of the system. However, as we consider longer time-scales, the behavior of the SLIP model becomes indistinguishable from standard diffusion processes with $\gamma=1$. This difference in scaling exponents has been shown to be a general property of diffusion with inertial particles and should be expected in macroscopic systems~\cite{Scholz2018}.

Keeping with the SLIP dynamical system, in Supplementary Fig.~\ref{fig:supp_maxdiff_hopper} we study the behavior of MaxDiff trajectory synthesis across various standard robotics applications. In Supplementary Fig.~\ref{fig:supp_maxdiff_hopper}(a), a single SLIP agent is performing undirected MaxDiff exploration within the bounds of an \textbf{N}-shaped environment. In this task, the agent must be able to explore its \textit{x-y} plane by hopping along, without falling or exiting the bounds of the exploration domain. To ensure the SLIP model's safety, as well as establish the bounds of the environment, we made use of control barrier functions (CBFs)~\cite{Ames2014}---a standard technique in the field for guaranteeing safety.  Then, to illustrate another application application of the ergodicity guarantees of our method, in Supplementary Fig.~\ref{fig:supp_maxdiff_hopper}(b) we apply MaxDiff trajectory synthesis to multiagent exploration in a complex environment---a house floor plan---in conjunction with CBFs. Since maximally diffusive exploration is ergodic, the outcomes of a multiagent execution and a single agent execution are asymptotically identical. In this way, maximally diffusive exploration only incurs a linear scaling in computational complexity as a function of the number of agents. Finally, in Supplementary Fig.~\ref{fig:supp_maxdiff_hopper}(c) we return to the single agent case to illustrate directed maximally diffusive exploration in the same complex environment as before. Here, a potential function encoding a goal destination is flat beyond a certain distance, which leads to undirected exploration initially. However, as the agent nears the goal, it can detect variations in the potential and follows its gradients diffusively towards the goal. 

\begin{figure}[t!]
    \centering
    \includegraphics[width=0.999\linewidth]{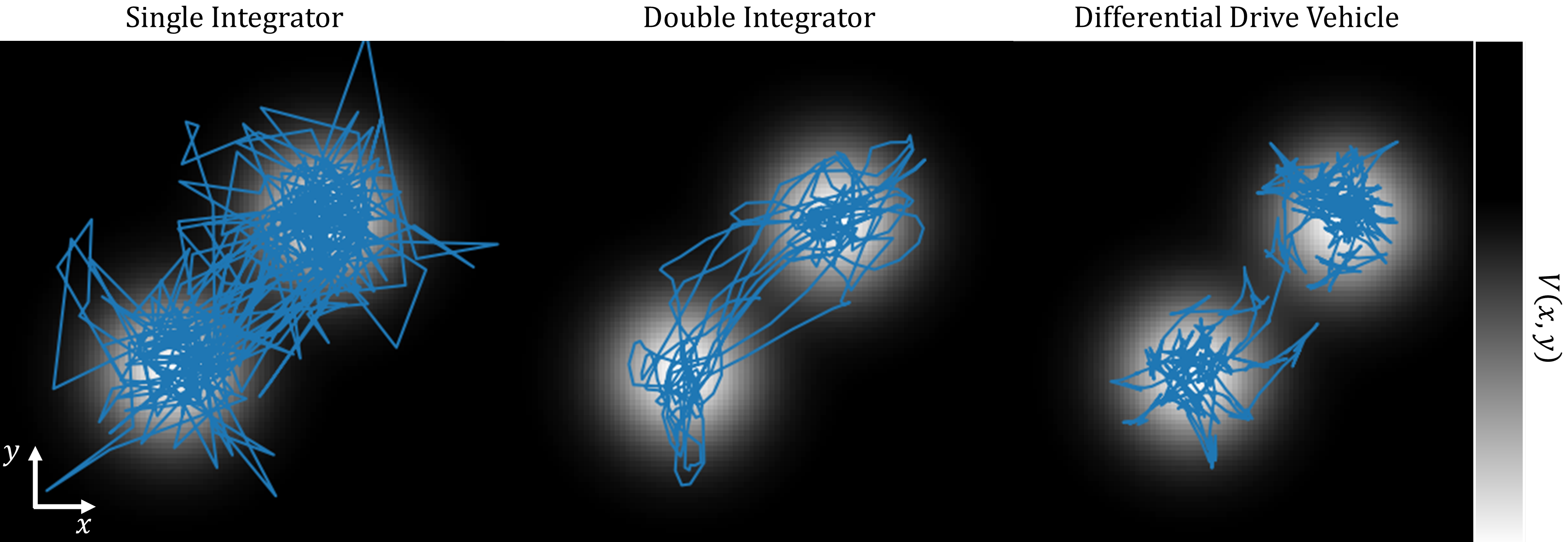}
    \caption{\textbf{Directed maximally diffusive exploration of bimodal potential across systems}. (left panel) The single integrator is a linear system whose velocities are directly determined by the controller. Hence, its sample paths behave exactly as free Brownian particles in a potential. (middle panel) The double integrator is the second-order equivalent of the single integrator system. In this system, the controller inputs acceleration commands that the system then integrates subject to its inertial properties. Despite being an inertial system, its interactions with the potential approximately follow the behavior of a Brownian particle in a potential. (right panel) The differential drive vehicle is a car-like system with simple nonlinear and nonholonomic dynamics with more complex controllability properties. Nonetheless, when we subject the differential drive vehicle to directed maximally diffusive exploration it traverses the potential as desired.}
    \label{fig:supp_systems}
\end{figure}

Now, we will highlight how the underlying properties of an agent's dynamics can affect the trajectories generated during maximally diffusive exploration. To this end, we consider a simple planar exploration task subject to a bimodal Gaussian potential ascribing a cost to system states far away from the distribution means. In Supplementary Fig.~\ref{fig:supp_systems}, we explore the planar domain with three different systems. First, exploration over the bimodal potential is shown with a single integrator system, which is a controllable first-order linear system. Since this system is effectively identical to a non-inertial point mass, its sample paths are formally the same as those of Brownian particles in a confining potential. In the middle panel of Supplementary Fig.~\ref{fig:supp_systems}, we consider a double integrator system, which is a controllable, linear, second-order system. However, for this system its diffusion tensor is degenerate because the noise only comes into the system as accelerations. Nonetheless, the system realizes ergodic coverage with respect to the underlying potential (in agreement with the theory of degenerate diffusion~\cite{Kliemann1987,Bourabee2008}). Finally, we consider the differential drive vehicle, which is a simple first-order nonlinear dynamical system with nontrivial controllability properties. Yet, the differential drive vehicle realizes ergodic coverage in the plane, as predicted by the properties of maximally diffusive systems.

As a final look into the properties of directed maximally diffusive exploration, we examine the role that the temperature parameter $\alpha$ plays on the behavior of the agent in a simpler setting. To this end, we revisit the differential drive vehicle dynamics and make use of MPPI once again to optimize our objective. However, instead of a bimodal Gaussian potential, we consider a quadratic potential centered at the origin with the system initialized at $(x,y)=(-4,-2)$. Quadratic potentials such as these are routinely implemented as cost functions throughout robotics and control theory. In Supplementary Fig.~\ref{fig:supp_temperature}, we depict the behavior of the system as a function of the temperature parameter. Initially, with the temperature set to zero the agent's paths are solely determined by the solution to the optimal control problem, smoothly driving towards the potential's minimum at the origin. Then, as we tune up $\alpha$, we increase diffusivity of our agent's sample paths. While at $\alpha=1$ the position of the system fluctuates very slightly at the bottom of the quadratic potential, at $\alpha=100$ the agent diffuses around violently by overcoming its energetic tendency to stay at the bottom of the well. If we were to continue increasing $\alpha$ to larger and larger values, we would observe that directed maximally diffusive exploration would cease to be ergodic, as predicted by~\cite{Wang2019}. This occurs as a result of the strength of diffusive fluctuations (here set by our $\alpha$ parameter) dominating the magnitude of the drift induced by the potential's gradient. This is to say that for a given problem, system, and operator preferences, there should be a range of $\alpha$ values that best achieve the task.

Throughout this section we have illustrated how maximally diffusive exploration, as formulated in Eq.~\ref{eq:maxent_control_obj_freeen2}, satisfies the behaviors predicted by our theoretical framework. Moreover, we have motivated how MaxDiff trajectory synthesis can be applied in a variety of common robotic applications while simultaneously guaranteeing safety, ergodicity, and task distributability. Broadly speaking, incorporating maximally diffusive exploration into most optimal control or reinforcement learning frameworks should be simple---particularly in light of the effort we have put towards deriving optimization objectives realizable in a broad class of application domains.

\begin{figure}[t]
    \centering
    \includegraphics[width=1.0\linewidth]{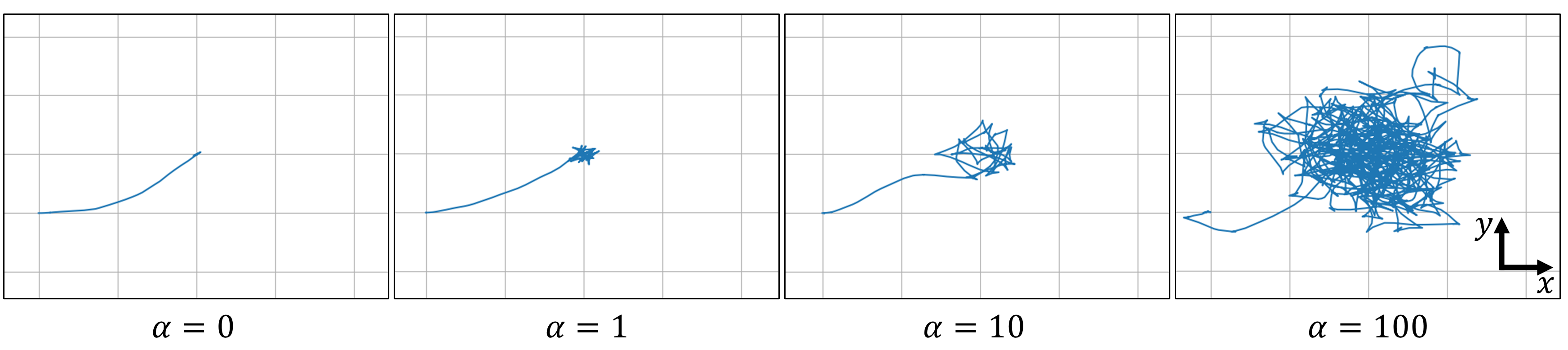}
    \caption{\textbf{Varying the $\alpha$ parameter of directed MaxDiff exploration}. Here, we are making a differential drive vehicle explore a quadratic potential centered at the origin under varying choices of $\alpha$ modulating the strength of the diffusive exploration within the potential. As we increase $\alpha$ the strength of the diffusion increases as well, leading to greater exploration of the basin of attraction of the quadratic potential well.}
    \label{fig:supp_temperature}
\end{figure}

\clearpage
\newpage

\subsection{Reinforcement learning implementation details}
\label{sec:implementation}
\subsubsection{General}
\label{sec:general}
All simulated examples use the reward functions specified MuJoCo environments unless otherwise specified \cite{brockman2016openai, todorov2012mujoco}. Supplementary Table~\ref{table:simulation_params} provides a list of all hyperparameters used in all implementations of MaxDiff RL, NN-MPPI, and SAC, for each environment. All experiments were run for a total of 1 million environment steps with each epoch being comprised of 1000 steps. For multi-shot tests, the episode was reset upon satisfying a ``done'' condition or completing the number of steps in an epoch. For single-shot tests, the environment was never reset and each epoch only constituted a checkpoint for saving cumulative rewards during the duration of that epoch.  All representations used ReLU activation functions, and 10 seeds were run for each configuration. 

For all model-based examples (i.e., MaxDiff RL and NN-MPPI), the system dynamics are represented in the following form, $x_{t+1} = x_t + f(x_t ,u_t)$, where the transition function $f(x_t, u_t)$ and reward function $r(x_t,u_t)$ are both modeled by fully-connected neural network representations. Both the reward function and transition function representations are optimized using Adam~\cite{kingma2014adam}. The network is regularized using the negative log-loss of a normal distribution where the variance, $\Sigma_\text{model} \in \mathbb{R}^{n \times n}$, is a hyperparameter that is simultaneously learned based on agent experience. The predicted reward utility is improved by the error between the predicted target and target reward equal to $\mathcal{L} = \Vert r_t + 0.95 \ r(x_{t+1}, u_{t+1}) - r(x_t, u_t) \Vert ^2$. The structure of this loss function is similar to those used in temporal-difference learning~\cite{boyan1999least, precup2001off}. The inclusion of the reward term from the next state and next action helps the algorithm learn in environments with rewards that do not strictly depend on the current state, as is the case with some MuJoCo examples.

For all model-free examples, we implement SAC to provide updates to our model-free policy. We use the hyperparameters for SAC specified by the parameters shared in~\cite{Haarnoja2018}, including the structure of the soft Q functions, but excluding the batch size parameter and the implemented policy's representation. Instead, we choose to match the batch size used during our model-based learning examples (i.e., with Maxdiff RL and NN-MPPI), and also introduce a simpler policy representation. As an alternative to the representation in~\cite{Haarnoja2018}, our policy is represented by a normal distribution parametrized by a mean function defined as a fully-connected neural network.

Reinforcement learning experiments were run on an Intel\textregistered~Xeon(R) Platinum 8380 CPU @ 2.30GHz x 160 server running Ubuntu 18.04 and Python 3.6 (\verb|pytorch| 1.7.0 and \verb|mujoco_py| 2.0). This hardware was loaned by the Intel Corporation, whose technical support we acknowledge. Finally, we note that our code repository contains a Dockerfile to facilitate validation of our results across platforms without the need to worry about package versioning.

\subsubsection{Point mass}
\label{sec:pointmass}
The goal of the point mass environment is to learn to move to the origin of a 2D environment. This is a custom environment in which the point mass dynamics are simulated as a 2D double integrator with states $[x,y,\dot{x},\dot{y}]$ and actions $[\ddot{x},\ddot{y}]$. Each episode is initialized at state $[-1,-1,0,0] + \epsilon$ where $\epsilon \sim \mathcal{N}(0,0.01)$. The reward function is specified in terms of location in the environment $r = - (x^2 + y^2)$. For multi-shot tests, the episode was terminated if the point mass exceeded a boundary defined as a square at $x,y = \pm 5$. The simulation uses RK-4 integration with a time step of 0.1.

\subsubsection{Swimmer}
\label{sec:swimmer}
The goal of the swimmer environment is to learn a gait to move forward in a 2D environment as quickly as possible. These tests use the ``v3'' variant of the OpenAI Gym MuJoCo Swimmer Environment, which includes all configuration states in the observation generated at each step. For the ``heavy-tailed'' tests, the default \texttt{xml} swimmer file is used, which includes a 3-link body with identical links. For the ``light-tailed'' tests, we modify the density of the ``tail'' link to be $10$ times lighter than other two links. The default link density in the swimmer is $1000$ and modified tail density is $100$. There is no ``done'' condition for this environment.

\subsubsection{Ant}
\label{sec:ant}
The goal of the ant environment is to learn a gait to move forward in a 3D environment as quickly as possible. These tests use the ``v3'' variant of the OpenAI Gym MuJoCo Ant Environment, which includes all configuration states in the observation generated at each step and includes no contact states. The control cost, contact cost, and healthy reward weights are all set to zero, so the modified reward function only depends on the change in the $x$-position during the duration of the step (with positive reward for progress in the positive $x$-direction). We also modified the ``done'' condition to make it possible for the ant to recover from falling. The ``done'' condition is triggered if the ant has been upside down for 1 second, and the ant is considered ``upside down'' if the torso angle that is nominally perpendicular to the ground exceeds 2.7 radians.

\subsubsection{Half-cheetah}
\label{sec:half-cheetah}
The goal of the half-cheetah environment is to learn a gait to move forward by applying torques on the joints in a 2D vertical plane. These tests use the ``v3'' variant of the OpenAI Gym MuJoCo Half-Cheetah Environment, which includes all configuration states in the observation generated at each step. There is no ``done'' condition for this environment.

\clearpage
\newpage

\addcontentsline{toc}{section}{Supplementary tables}
\section*{Supplementary tables}
\label{sec:tables_SI}

\renewcommand{\arraystretch}{1.2}

\begin{table}[ht]
\hspace{-0.65in}
\begin{tabular}{|c|cc||c||c|c|c|}
\hline
\multirow{2}{*}{\bfseries Algorithm} & \multicolumn{2}{c||}{ \multirow{2}{*}{ \bfseries Hyperparameter}}  & \bfseries Toy Problem  & \multicolumn{3}{c|}{\bfseries MuJoCo Gym (v3)}  \\
\cline{5-7}
& \multicolumn{2}{c||}{}  &\bfseries 2D Point mass & \bfseries Swimmer & \bfseries Ant & \bfseries Half-cheetah\\
\hline
\multirow{4}{*}{All} & \multicolumn{2}{c||}{  State Dim } & 4  & 10 & 29 & 18 \\
\cline{2-7}
& \multicolumn{2}{c||}{  Action Dim } & 2  & 2 & 8 & 6 \\
\cline{2-7}
& \multicolumn{2}{c||}{ Learning Rate }& 0.0005 & 0.0003 & 0.0003  & 0.0003\\ 
\cline{2-7}
& \multicolumn{2}{c||}{  Batch Size } & 128  & 128 & 256 & 256 \\
\hline 
\multirow{2}{*}{SAC} & \multicolumn{2}{c||}{  Policy Layers }  & $128\times128$  & $256\times256$ & $512\times512\times512$ & $256\times256$ \\
\cline{2-7}
& \multicolumn{2}{c||}{ Discount ($\gamma$) } &  0.99 & 0.99 &  0.99 &  0.99 \\
\cline{2-7}
& \multicolumn{2}{c||}{ Smoothing coefficient ($\tau$) } &  0.01 &  0.005 &  0.005 & 0.005 \\
\cline{2-7}
& \multicolumn{2}{c||}{  Reward Scale } & 0.25 & 100 & 5 & 5 \\
\hline
\multirow{4}{*}{NN-MPPI/} & \multicolumn{2}{c||}{  Model Layers }  &$ 128\times128 $& $200\times200$ &$512\times512\times512$& $200\times200$ \\
\cline{2-7}
\multirow{4}{*}{MaxDiff RL} & \multicolumn{2}{c||}{  Horizon } & 30 & 40 & 20 & 10\\
\cline{2-7}
& \multicolumn{2}{c||}{ Discount ($\gamma$) } &  0.95 & 0.95 & 0.95 &  0.95 \\
\cline{2-7}
\multirow{4}{*}{(Planning)}  & \multirow{2}{*}{Multi} & Samples & 500 & 500 & 1000 & 500\\
\cline{3-7}
& & Lambda  & 0.5 & 0.5 & 0.5  & 0.5\\
\cline{2-7}
& \multirow{2}{*}{SS} & Samples &  \multirow{2}{*}{NA} & 1000 &  1000  & 1000 \\
\cline{3-3}\cline{5-7}
& &  Lambda & & 0.1 & 0.5  & 0.5 \\
\hline
\multirow{6}{*}{MaxDiff RL} & \multirow{4}{*}{ Multi } &  \multirow{2}{*}{ Alpha }  & \multirow{2}{*}{5} & 1,5,10,50, & \multirow{2}{*}{ 15 } & \multirow{2}{*}{ 5 } \\
& &&   & 100,500,1000 &  &\\
\cline{3-7}
\multirow{4}{*}{(Exploration)}  &  & Dimensions   & $\begin{bmatrix}x, y, \dot{x}, \dot{y}  \end{bmatrix}$ & $\begin{bmatrix}x, y, \dot{x}, \dot{y} \end{bmatrix}$ & $\begin{bmatrix}x, y, z \end{bmatrix}$& $\begin{bmatrix}x, y, \dot{x}, \dot{y} \end{bmatrix} $\\
\cline{3-7}
&   & Weights & $[1,1,0.01,0.01] $& $[1,1,0.05,0.05] $& $[1,1,0.005]$ &  $[1,1,0.05,0.05] $\\
\cline{2-7}
&  & Alpha  & \multirow{3}{*}{NA}  & 50 & 15 & 5 \\
\cline{3-3}\cline{5-7}
& SS & Dimensions && $ \begin{bmatrix} x, y, \dot{x}, \dot{y} \end{bmatrix}$ & $\begin{bmatrix} x, y, \dot{x}, \dot{y} \end{bmatrix}$  & $\begin{bmatrix} x, y, \dot{x}, \dot{y} \end{bmatrix}$   \\ 
\cline{3-3}\cline{5-7}& &  Weights && $[1,1,0.05,0.05]$ & $[1,1,0.05,0.05] $  & $[1,1,0.05,0.05] $  \\
\hline
\end{tabular}
\vspace{0.1in}
\caption{\textbf{Simulation hyperparmeters for paper results.} ``Multi'' parameters only apply to multi-shot runs, and ``SS'' parameters only apply to single-shot runs. All weights are diagonal matrices with the values specified. }
\label{table:simulation_params}
\end{table}

\clearpage
\newpage

\begin{table}[ht]
\centering
\begin{tabular}{|l|l|ll|ll|}
\cline{3-6}
\multicolumn{2}{c|}{} 
    & \multicolumn{4}{|c|}{\textbf{Comparison}}   \\ \cline{3-6}

\multicolumn{2}{c|}{} 
    & \multicolumn{2}{c|}{\textbf{NN-MPPI}}      & \multicolumn{2}{c|}{\textbf{SAC}}          \\  \hline
\multicolumn{2}{|c|}{\textbf{Task}} & \multicolumn{1}{l|}{$P$-value}  & $P<0.05$ & \multicolumn{1}{l|}{$P$-value}  & $P<0.05$ \\ \hline
\multirow{4}{*}{Point Mass}   & $\beta=1$                                & \multicolumn{1}{l|}{$< 0.001$} & True     & \multicolumn{1}{l|}{$< 0.001$} & True     \\ \cline{2-6} 
                              & $\beta=0.1$                              & \multicolumn{1}{l|}{$< 0.001$} & True     & \multicolumn{1}{l|}{$< 0.001$} & True     \\ \cline{2-6} 
                              & $\beta=0.01$                             & \multicolumn{1}{l|}{$< 0.001$} & True     & \multicolumn{1}{l|}{$< 0.001$} & True     \\ \cline{2-6} 
                              & $\beta=0.001$                            & \multicolumn{1}{l|}{$< 0.001$} & True     & \multicolumn{1}{l|}{$< 0.001$} & True     \\ \hline
\multirow{6}{*}{Swimmer}      & Light Multi                              & \multicolumn{1}{l|}{$< 0.001$} & True     & \multicolumn{1}{l|}{$< 0.001$} & True     \\ \cline{2-6} 
                              & Light SS (Return)                        & \multicolumn{1}{l|}{0.0131}   & True     & \multicolumn{1}{l|}{0.0034}   & True     \\ \cline{2-6} 
                              & Light SS (Distance)                      & \multicolumn{1}{l|}{$< 0.001$}   & True     & \multicolumn{1}{l|}{$< 0.001$} & True     \\ \cline{2-6} 
                              & Heavy Multi                              & \multicolumn{1}{l|}{$< 0.001$} & True     & \multicolumn{1}{l|}{$< 0.001$} & True     \\ \cline{2-6} 
                              & Light-to-Heavy Multi                     & \multicolumn{1}{l|}{$< 0.001$} & True     & \multicolumn{1}{l|}{$< 0.001$} & True     \\ \cline{2-6} 
                              & Heavy-to-Light Multi                     & \multicolumn{1}{l|}{$< 0.001$} & True     & \multicolumn{1}{l|}{$< 0.001$} & True     \\ \hline
\multirow{3}{*}{Ant}          & Multi                                    & \multicolumn{1}{l|}{0.8343}   & False    & \multicolumn{1}{l|}{$< 0.001$} & True     \\ \cline{2-6} 
                              & SS (Return)                              & \multicolumn{1}{l|}{0.0154}   & True     & \multicolumn{1}{l|}{$< 0.001$} & True     \\ \cline{2-6} 
                              & SS (Distance)                            & \multicolumn{1}{l|}{$< 0.001$} & True     & \multicolumn{1}{l|}{$< 0.001$} & True     \\ \hline
\multirow{3}{*}{Half-cheetah} & Multi                                    & \multicolumn{1}{l|}{$< 0.001$} & True     & \multicolumn{1}{l|}{$< 0.001$}   & True     \\ \cline{2-6} 
                              & SS (Return)                              & \multicolumn{1}{l|}{$< 0.001$}   & True     & \multicolumn{1}{l|}{$< 0.001$} & True     \\ \cline{2-6} 
                              & SS (Distance)                            & \multicolumn{1}{l|}{$< 0.001$}   & True     & \multicolumn{1}{l|}{$< 0.001$} & True     \\ \hline
\end{tabular}
\vspace{0.1in}
\caption{\textbf{Results of statistical comparisons between MaxDiff RL and alternatives.} Across all learning experiments in the manuscript, differences between MaxDiff RL and comparisons are statistically significant ($P<0.05$) according to an unpaired two-sided Welch's t-test implementation in SciPy~\cite{SciPy2020}, except for one. However, since the Ant environment breaks ergodicity, we do not expect improvements over MaxEnt RL in multi-shot settings. ``Multi'' indicates a multi-shot experiment and ``SS'' indicates a single-shot experiment. For Multi experiments, statistical significance was determined by evaluating each final policy across 100 episodes. For SS experiments, significance was evaluated in two ways: first according to the terminal windowed return, and second according to the terminal distance traveled in each spatial navigation task. For more details, we refer readers to the Methods section of the main text, where our statistical methodology is explained and justified in detail.  We note that $P$-values below 0.001 are reported as $<0.001$.}
\label{table:stats}
\end{table}

\clearpage
\newpage 

\addcontentsline{toc}{section}{Supplementary movies}
\section*{Supplementary movies}
\label{sec:vids_SI}

\noindent Movie 1: \textbf{Effect of temperature parameter on MaxDiff RL.} Here, we depict an application of MaxDiff RL to MuJoCo's swimmer environment. To explore the role of the parameter $\alpha$ on the performance of agents, we vary it across three orders of magnitude and observe its effect on system behavior (10 seeds each). Tuning $\alpha$ is crucial because it can determine whether or not the underlying agent is ergodic. \\

\noindent Movie 2: \textbf{Robustness of MaxDiff RL across random seeds.} Here, we depict an application of MaxDiff RL to MuJoCo's swimmer environment, comparing with alternative state-of-the-art MaxEnt RL algorithms, NN-MPPI and SAC. We observe that the performance of MaxDiff RL achieves state-of-the-art performance and does not vary across seeds, which is a formal property of our framework. We test across two different system conditions: one with a light-tailed and more controllable swimmer, and one with a heavy-tailed and less controllable swimmer (10 seeds each). \\

\noindent Movie 3: \textbf{Zero-shot generalization of MaxDiff RL across embodiments.} Here, we depict an application of MaxDiff RL to MuJoCo's swimmer environment. We implement a transfer learning experiment in which neural representations are learned on a system with a given set of physical properties, and then are deployed on a system with different physical properties. We find that unlike alternative approaches, MaxDiff RL remains task capable across agent embodiments. \\

\noindent Movie 4: \textbf{Single-shot learning in MaxDiff RL agents.} Here, we depict an application of MaxDiff RL to MuJoCo's swimmer environment under a significant modification. Agents are unable to reset their environment, which requires all algorithms to learn to solve the task in a single deployment. First, we show representative snapshots of agents using representations learned in single-shot deployments, and observe that MaxDiff RL still achieves state-of-the-art performance that is robust to seeds. Then, for MaxDiff RL we also show a complete playback of an individual single-shot learning trial. We stagger the playback such that the first swimmer covers environment steps 1-2000, the next one 2001-4000, and so on for a total of 20,000 environment steps. In doing so, we visualize the single-shot learning process in real time.

\clearpage
\newpage

\addcontentsline{toc}{section}{Supplementary figures}
\section*{Supplementary figures}
\label{sec:figs_SI}

\begin{figure*}[ht]
\centering
\includegraphics[width=0.8\linewidth]{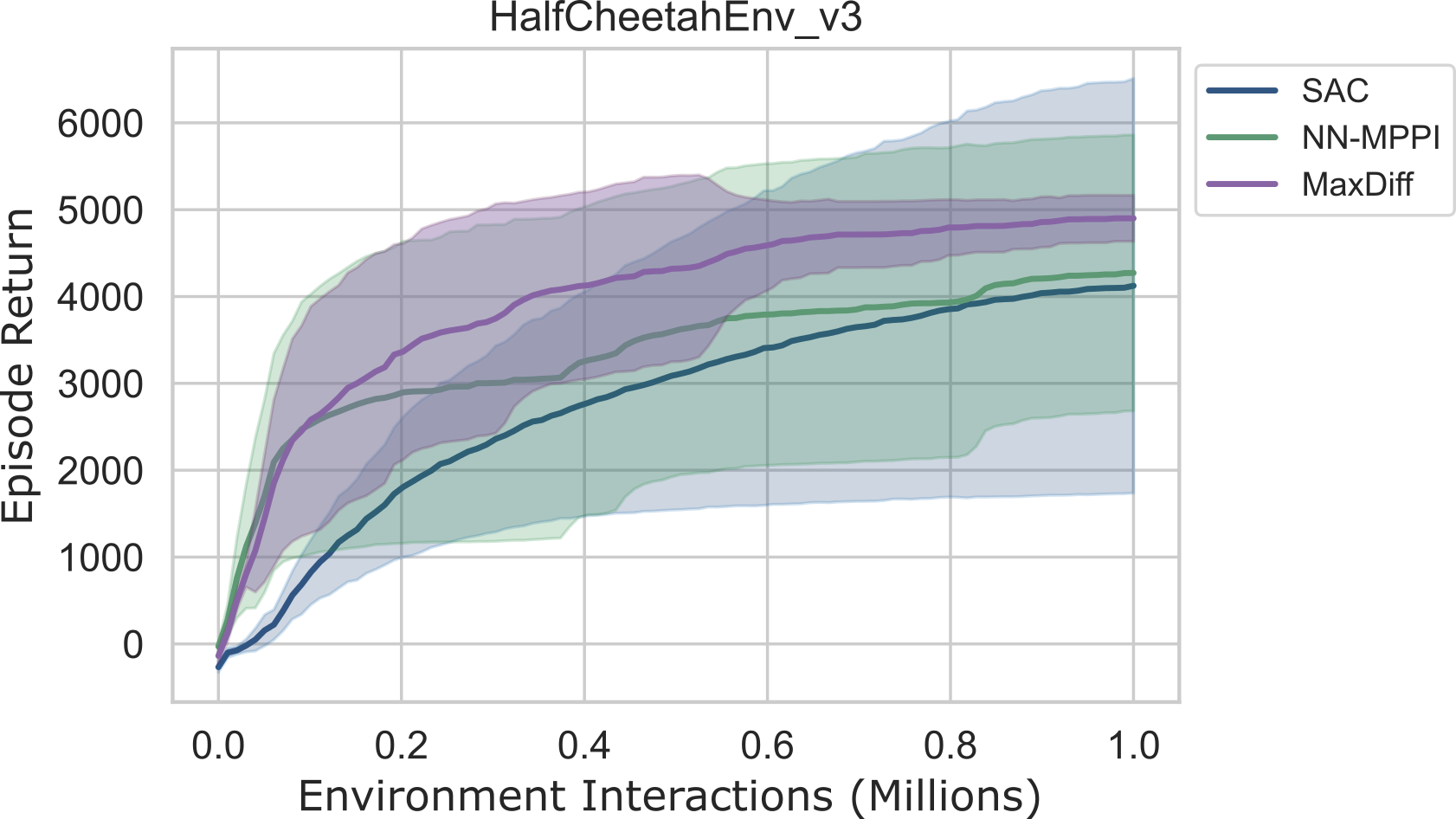}
    \caption{\textbf{Results of the half-cheetah benchmark.} This figure compares the performance of MaxDiff RL to NN-MPPI and SAC on MuJoCo's HalfCheetahEnv v3 in multi-shot. Since the half-cheetah can fall into an irreversible state (i.e., flipping upside down) this environment breaks the assumptions of MaxDiff RL. Nonetheless, we still achieve state-of-the-art performance with substantially less variance than alternative algorithms. For all reward curves, the shaded regions correspond to the standard deviation from the mean across 10 seeds.}
    \label{fig:cheetah_multi}
\end{figure*}

\begin{figure*}[ht]
\centering
\includegraphics[width=0.8\linewidth]{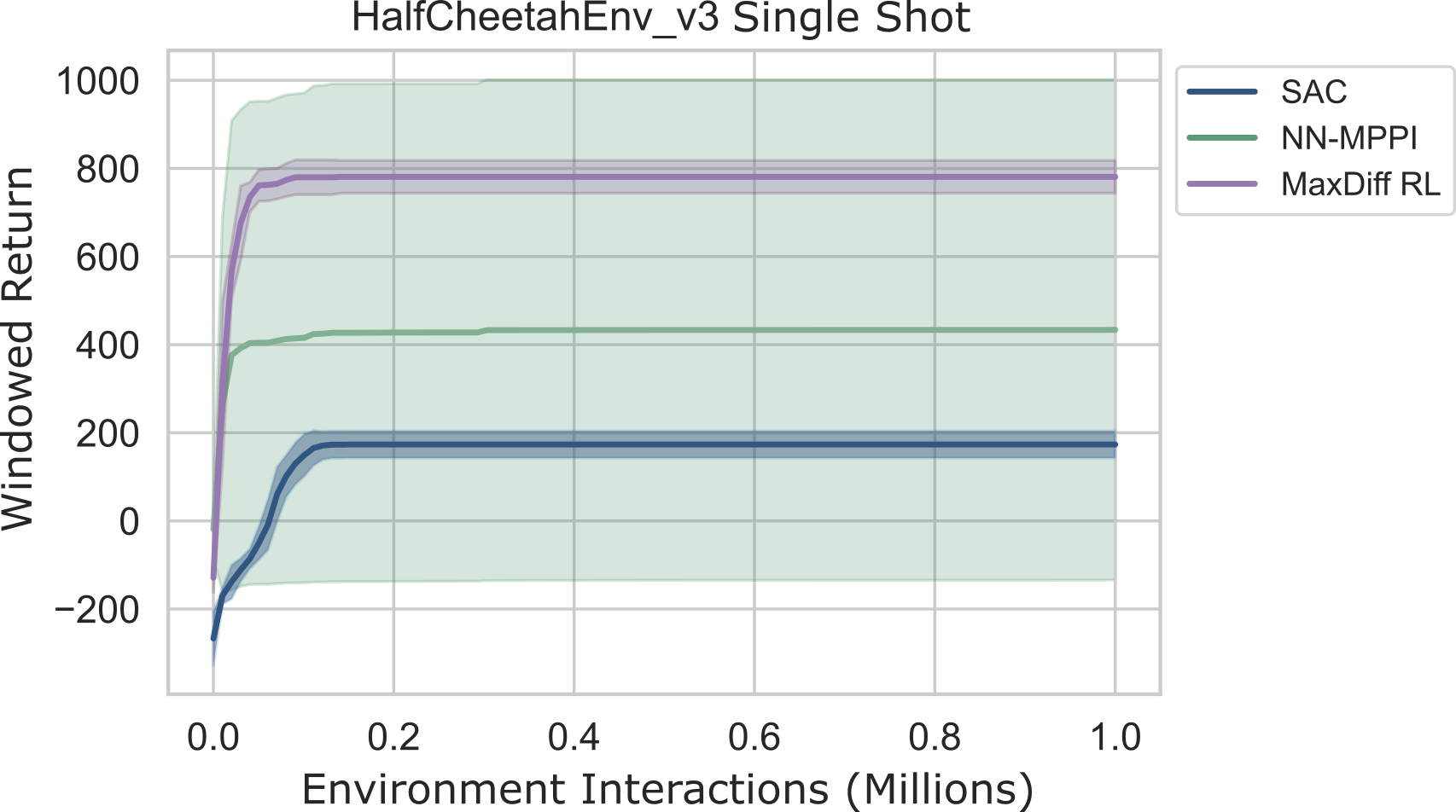}
    \caption{\textbf{Results of the half-cheetah benchmark in single-shot.} This figure compares the performance of MaxDiff RL to NN-MPPI and SAC on MuJoCo's HalfCheetahEnv v3 in single-shot. Since the half-cheetah can fall into an irreversible state (i.e., flipping upside down) this environment breaks the assumptions of MaxDiff RL. Nonetheless, we still succeed at the task with substantially less variance than alternative algorithms. For all reward curves, the shaded regions correspond to the standard deviation from the mean across 10 seeds.}
    \label{fig:cheetah_single}
\end{figure*}

\begin{figure*}[ht]
\centering
\includegraphics[width=0.8\linewidth]{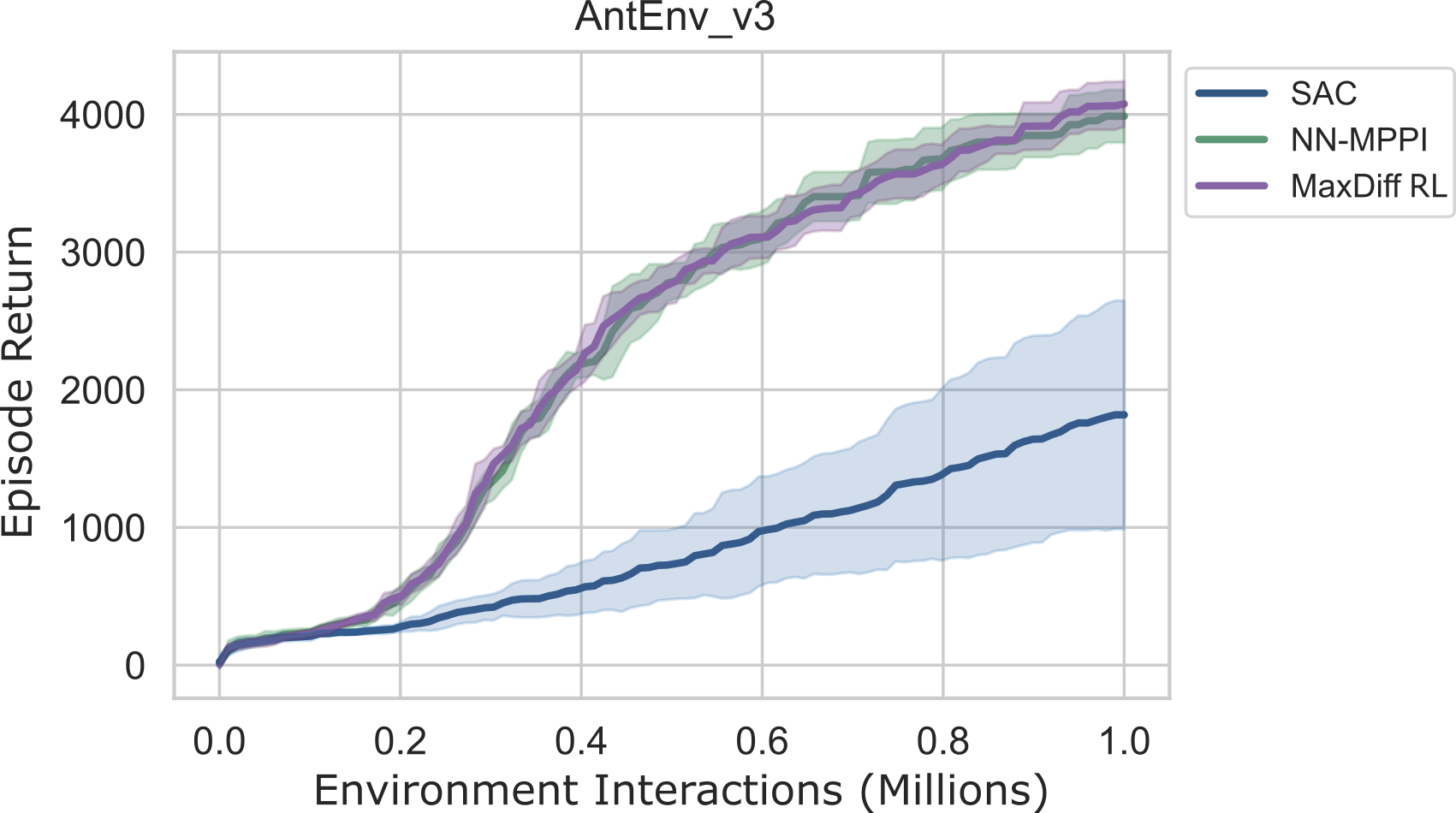}
    \caption{\textbf{Results of the ant benchmark.} This figure compares the performance of MaxDiff RL to NN-MPPI and SAC on MuJoCo's AntEnv v3 in multi-shot. Just as with our main text single-shot example, the ant environment breaks ergodicity, which pushes MaxDiff RL outside of the domain of its assumptions. Nonetheless, MaxDiff RL remains state-of-the-art with comparable performance to NN-MPPI. This is to be expected because in the worst case scenario where MaxDiff's additional entropy term in the objective has no effect on agent outcomes, our implementation of MaxDiff RL is identical to NN-MPPI. For all reward curves, the shaded regions correspond to the standard deviation from the mean across 10 seeds.}
    \label{fig:ant_multi}
\end{figure*}

\clearpage
\newpage 

\addcontentsline{toc}{section}{Supplementary references}

\end{bibunit}

\end{document}